\newcommand{\alglinelabel}{%
  \addtocounter{ALC@line}{-1}%
  \refstepcounter{ALC@line}%
  \label%
}
\def \algname {R2PVI}
\definecolor{lightblue}{rgb}{0.88,1,1}
\icmltitlerunning{Robust Offline Reinforcement Learning with Linearly Structured $f$-Divergence Regularization}
\begin{document}

\twocolumn[
\icmltitle{Robust Offline Reinforcement Learning with Linearly Structured \\$f$-Divergence Regularization}

\icmlsetsymbol{equal}{*}

\begin{icmlauthorlist}
\icmlauthor{Cheng Tang}{equal,sch}
\icmlauthor{Zhishuai Liu}{equal,sch2}
\icmlauthor{Pan Xu}{sch2}
\end{icmlauthorlist}

\icmlaffiliation{sch}{
University of Illinois Urbana-Champaign, work was done when Cheng was in Tsinghua University
}
\icmlaffiliation{sch2}{Duke University}

\icmlcorrespondingauthor{Pan Xu}{pan.xu@duke.edu}

\icmlkeywords{Machine Learning, ICML}

\vskip 0.3in]

\printAffiliationsAndNotice{\icmlEqualContribution} %

\begin{abstract}

The Robust Regularized Markov Decision Process (RRMDP) is proposed to learn policies robust to dynamics shifts by adding regularization to the transition dynamics in the value function. Existing methods mostly use unstructured regularization, potentially leading to conservative policies under unrealistic transitions. To address this limitation, we propose a novel framework, the $d$-rectangular linear RRMDP ($d$-RRMDP), which introduces latent structures into both transition kernels and regularization. We focus on offline reinforcement learning, where an agent learns policies from a precollected dataset in the nominal environment. We develop the Robust Regularized Pessimistic Value Iteration (R2PVI) algorithm that employs linear function approximation for robust policy learning in $d$-RRMDPs with $f$-divergence based regularization terms on transition kernels. We provide instance-dependent upper bounds on the suboptimality gap of R2PVI policies, demonstrating that these bounds are influenced by how well the dataset covers state-action spaces visited by the optimal robust policy under robustly admissible transitions. We establish information-theoretic lower bounds to verify that our algorithm is near-optimal. Finally, numerical experiments validate that R2PVI learns robust policies and exhibits superior computational efficiency compared to baseline methods.
\end{abstract}

\section{Introduction}
Offline reinforcement learning (RL) \citep{levine2020offline} facilitates policy learning from fixed datasets, eliminating the need for direct interaction with the environment. 
When the policy deployment environment differs from the one where the dataset was collected, robust policies that remain effective under the environment shift are required \citep{garcia2015comprehensive, packer2018assessing, zhang2020robust, wang2024return,guo2024off}. A widely adopted framework for learning such policies is the distributionally robust Markov decision process (DRMDP) \citep{iyengar2005robust, nilim2005robust}, which models dynamics changes as an uncertainty set around the nominal transition kernel. In this setup, an agent seeks policies performing well even in the worst-case environment within the uncertainty set. The most common design of uncertainty sets is the $(s,a)$-rectangularity \citep{iyengar2005robust, nilim2005robust}, which independently models uncertainty for each state-action pair. Although mathematically elegant, the $(s,a)$-rectangularity can result in overly conservative policies, especially when the state and action spaces are large. To address this issue, \citet{goyal2023robust} introduce the $r$-rectangular uncertainty set, which parameterizes transition kernels using latent factors. 
This concept has since been incorporated into $d$-rectangular linear DRMDPs ($d$-DRMDPs, \citet{ma2022distributionally}), extending its applicability to robust decision-making with linear function approximation.
Building on $d$-DRMDPs, recent works \citep{blanchet2024double, wang2024sample, liu2024minimax} propose provably efficient algorithms  that leverage function approximation for robust policy learning.

However, the $d$-DRMDP framework has several problems that remain unaddressed, which we summarize as follows.
{\it Theoretical Gaps:} Current understanding of $d$-DRMDPs is largely restricted to uncertainty sets defined by the Total Variation (TV) divergence \citep{liu2024distributionally,liu2024minimax}. For uncertainty sets defined by the Kullback-Leibler (KL) divergence, prior works \citep{ma2022distributionally, blanchet2024double} rely on additional regularity assumptions regarding the KL dual variable, which is hard to validate in practice. Moreover, the $\chi^2$-divergence defined uncertainty set has demonstrated effectiveness in certain empirical applications \citep{panaganti2022sample, xu2023improved} and has also been analyzed under the $(s,a)$-rectangularity \citep{shi2024curious}. Yet there are no theoretical results or efficient algorithms for $d$-DRMDPs.
{\it Practical challenges:} Existing practical algorithms \citep{ma2022distributionally, liu2024minimax, wang2024sample} depend on a dual optimization oracle (see Remark 4.2 in \citet{liu2024distributionally}) to estimate the robust value function. The computation complexity of these methods is proportional to the feature dimension $d$ and the planning horizon $H$. While heuristic methods like the Nelder-Mead algorithm \citep{nelder1965simplex} can approximate the oracle, they become computationally expensive when dealing with high-dimensional features (large $d$) and extended planning horizons (large $H$), which are common in real-world applications. These limitations raise an important question:
\begin{center}
\emph{Can we design efficient offline robust RL algorithms \\
using general $f$-divergence\footnote{The general $f$-divergence includes widely studied divergences such as Total Variation, Kullback-Leibler, and $\chi^2$ divergences.} uncertainty models \\
with linearly structured transitions?}
\end{center}
In this work, we provide a positive answer to this question. Inspired by the robust regularized MDP (RRMDP) framework with the $(s,a)$-rectangularity condition \citep{yang2023robust, zhang2020robust, panaganti2024model}, where the uncertainty set constraint in DRMDP is replaced by a regularization penalty term measuring the divergence between the nominal and perturbed dynamics, we propose the $d$-rectangular linear RRMDP ($d$-RRMDP) framework. Specifically, $d$-RRMDP replaces the $d$-rectangular uncertainty set in $d$-DRMDPs with a carefully designed penalty term that preserves the linear structure. The motivations are two folds: (1) it has been shown by \citet{yang2023robust} that the robust value function under the RRMDP is equivalent to that under the DRMDP with $(s,a)$-rectangularity as long as the regularizer is properly chosen; (2) removing the uncertainty set constraint simplifies the dual problem for certain divergences \citep{zhang2024soft}, potentially improving computational efficiency and facilitating theoretical analysis. 
We summarize our contributions as follows: 
\begin{itemize}[leftmargin=*]
    \item  We establish that key dynamic programming principles, including the robust Bellman equation and the existence of deterministic optimal robust policies, hold under the $d$-RRMDP framework. Additionally, we derive dual formulations of robust Q-functions with TV, KL and $\chi^2$ divergences defined regularization, highlighting their linear structures.
    \item We propose a computationally tractable meta-algorithm, Robust Regularized Pessimistic Value Iteration (R2PVI), for offline $d$-RRMDPs with general $f$-divergence regularization. For TV, KL, and $\chi^2$ divergences, we provide instance-dependent upper bounds on the suboptimality gap of policies learned by R2PVI, in a general form of $\beta \sup_{P \in \mathcal{U}^\lambda(P^0)} \sum_{h=1}^H \mathbb{E}^{\pi^\star, P} \big[\sum_{i=1}^d \| \phi_i(s, a) \mathbf{1}_i \|_{\Lambda_h^{-1}} \mid s_1 = s \big]$, 
    where $d$ is the feature dimension, $H$ is the horizon length, \(\bphi(s, a)\) is the feature mapping, $\lambda$ is the regularization parameter, and $\beta$ is a problem-dependent parameter whose specific form depends on the choice of the divergence (see \Cref{sec: Instance-Dependent Bound of R2PVI} for details). The set $\mathcal{U}^\lambda(P^0)$ is derived from our theoretical analysis, and it does not represent an uncertainty set in the conventional DRMDP framework. We further construct an information-theoretic lower bound, demonstrating that this instance-dependent uncertainty function is intrinsic.
    \item We conduct experiments in simulated environments, including a linear MDP setting \citep{liu2024distributionally} and the American Put Option environment \citep{tamar2014scaling}. Our findings show that:
	1.	The $d$-RRMDP framework yields equivalent robust policies as $d$-DRMDP with appropriately chosen regularization parameters.
	2.	R2PVI significantly improves algorithms designed for $d$-DRMDPs in terms of the computation complexity, and is comparable to algorithms designed for standard linear MDPs.
\end{itemize}

\textbf{Notations.}
In this paper, we denote $\Delta(\cS)$ as the probability distribution in the state space $\cS$. For any $H \in \NN$, $[H]$ represents the set $\{ 1,2,3,\cdots,H \}$. For a vector $\bv\in\RR^d$, we denote $v_i$ as the $i$-th element. For any function $V:\cS\rightarrow[0,H]$, we denote $V_{\min}= \min_{s\in\cS}V(s)$ and $V_{\max} = \max_{s\in\cS}V(s)$. For any distribution $\mu\in\Delta(\cS)$, we denote $\Var_{s\sim\mu}V(s)$ as the variance of the random variable $V(s)$ under $\mu$. For any two probability measures $P$ and $Q$ satisfying that $P$ is absolute continuous with respect to $Q$, the $f$-divergence is defined as $D_{f}(P \| Q)=\int_{\mathcal{S}}f(P(s)/Q(s))Q(s) \text{d} s$, where $f$ is a convex function on $\RR$ and differentiable on $\RR_+$ satisfying $f(1)=0$ and $f(t)=+\infty, \forall t<0$.
The Total Variation (TV) divergence, Kullback-Leibler (KL) divergence and Chi-Square ($\chi^2$) divergence between $P$ and $Q$ are defined by 
$f(x) = |x-1|/2, f(x) =x \log x, f(x) =(x-1)^2 $, respectively. Given a scalar $\alpha$, we denote $[V(s)]_\alpha = \min\{V(s), \alpha\}$.  Given an interval $I$, we define $[V(s)]_I$ as the result of clipping $V(s)$ to lie within the interval $I$. We denote $\mathbf{I}$ as the identity matrix and $\mathbf{1}_i\in\RR^d$ as the one-hot vector with the $i$-th element equals to one.

\section{Related Work}
\textbf{Distributionally Robust MDPs.} The seminal works of \citet{satia1973markovian,iyengar2005robust, nilim2005robust} proposed the framework of DRMDP. There are several lines of works studying DRMDPs under different settings. \citet{zhou2021finite,panaganti2022robust,panaganti2024bridging,shi2024distributionally, liu2025linear} studied the offline DRMDP assuming access to an offline dataset and provided sample complexity bounds under the coverage assumption on the offline dataset. \citet{liu2024distributionally,liu2024upper, lu2024distributionally} studied the online DRMDP where an agent learns robust policies by actively interacting with the nominal environment. \citet{blanchet2024double, panaganti2022robust} studied the DRMDP with general function approximation, they focused on the offline setting with the $(s,a)$-rectangularity assumption. \citet{ma2022distributionally, liu2024minimax, wang2024sample} studied the offline $d$-DRMDP, they proposed provably efficient and computationally tractable algorithms and provided sample complexity bounds under different kinds of coverage assumptions on the offline dataset.

\textbf{RRMDPs.} 
The work of \citet{yang2023robust, zhang2024soft} proposed the RRMDP, which can be regarded as a generalization of the DRMDP by substituting the uncertainty set constraint in DRMDP with the regularization term defined as the divergence between the perturbed model and the nominal model. In particular, \citet{yang2023robust} studied the tabular RRMDP and proposed a model-free algorithms assuming access to a simulator. \citet{zhang2024soft} studied the offline RRMDP, they established connections between RRMDPs with risk sensitive MDPs, and derived the policy gradient principle. Moreover, they studied general function approximation and proposed a computationally efficient algorithm, RFZI, for RRMDPs with KL-divergence defined regularization terms. \citet{zhang2024soft} firstly discovered that the duality of the robust value function has a closed expression under the KL-divergence. 
\citet{panaganti2024model} studied the offline RRMDP with regularization terms defined by the general $f$-divergence. They studied general function approximation and provided sample complexity results. They further proposed a hybrid algorithm, which learns robust policies with both historical data and interactive data collection, for RRMDPs with TV-divergence defined regularization term. Existing works focus on the $(s,a)$-rectangularity uncertainty regularization, which is different from ours.

\section{Problem Formulation}
\label{sec:Problem formulation}
In this section, we provide  preliminaries for RRMDPs. 

\paragraph{Markov decision process (MDP).} We first introduce the concept of MDPs, which is the basis of our settings. Specifically, we denote $\text{MDP}(\cS, \mathcal{A}, H,  P^0, r)$ as a finite horizon MDP, where $\mathcal{S}$ is the state space, $\mathcal{A}$ is the action space, $H$ is the  horizon length, $P^0 = \{P^0_h \}_{h = 1}^H$ are nominal transitional kernels, and the $r(s,a) \in [0,1]$ is the deterministic reward function assumed to be known in advance.
For any policy $\pi$, the value function and Q-function at time step $h$ are defined as $V_h^{\pi}(s) = \EE^{P^0} \big[\sum_{t = h}^H r_t(s_t, a_t)| s_h = s, \pi \big]$, and $Q^{\pi}_h(s,a) = \EE^{P^0} \big[ \sum_{t = h}^H r_t(s_t, a_t)| s_h = s, a_h=a, \pi\big]$.

\paragraph{Robust regularized MDP (RRMDP)} We define a finite horizon RRMDP as $\text{RRMDP}(\cS, \mathcal{A}, H,  P^0, r, \lambda, D, \mathcal{F})$, where $\lambda$ is the regularizer, $D$ is the probability divergence metric, and $\cF$ is the feasible set of all perturbed transition kernels. For any policy $\pi$, the robust regularized value function is defined as $V_h^{\pi, \lambda}(s) =  \inf_{P \in \mathcal{F}} \EE^{P} \big[\sum_{t = h}^H \big[r_t(s_t, a_t) + \lambda D(P_t(\cdot|s_t,a_t)\|P_t^0(\cdot|s_t,a_t))\big] \big| s_h = s, \pi \big]$ and robust Q-function as $Q^{\pi, \lambda}_h(s,a) =  \inf_{P \in \mathcal{F}} \EE^{P} \big[ \sum_{t = h}^H \big[r_t(s_t, a_t) +\lambda D(P_t(\cdot|s_t,a_t)\|P_t^0(\cdot|s_t,a_t))\big]  
 \big| s_h = s, a_h=a, \pi \big]$.

The RRMDP framework has been referred to by different names in the literature, including the penalized robust MDP \citep{yang2023robust}, the soft robust MDP \citep{zhang2024soft}, and the robust $\phi$-regularized MDP \citep{panaganti2024model}. For consistency, we adopt the term RRMDP in this work. In RRMDPs, the perturbed transition kernel class $\cF$ typically encompasses all possible kernels. However, for environments with large state-action spaces, $\cF$ may be overly broad, including transitions that are unrealistic or irrelevant. To address this, we introduce latent structures on transition kernels and design regularization terms that penalize changes in the latent structure, sharing similar ideas with the design of $r$-rectangular \citep{goyal2023robust}   and $d$-rectangular \citep{ma2022distributionally} uncertainty sets.

\paragraph{The $d$-rectangular linear RRMDP ($d$-RRMDP).} In this paper, we propose the novel $d$-RRMDP, which admits a linear structure of the feasible set and reward function. Specifically, a $d$-RRMDP is a RRMDP where the nominal environment $P^0$ is a special case of linear MDP with a simplex feature space \citep[Example 2.2]{jin2020provably}, and the feasible set $\cF$ involves kernels defined based on the linear structure of the nominal transition kernel. We make the following assumption on reward functions and transition kernels:
\begin{assumption}[\citet{jin2020provably}]
\label{assumption:linear MDP} Given a known state-action feature mapping $\bphi: \cS \times \cA \rightarrow \RR^d$ satisfying $\sum_{i=1}^d\phi_i(s,a) = 1, \phi_i(s,a) \geq 0$, we assume the reward function $\{r_h\}_{h=1}^H$ and the nominal transition kernels $\{P_h^0\}_{h=1}^H$ admit linear structures. Specifically, for all $(h,s,a)\in[H]\times\cS\times\cA$, we have $r_h(s,a) = \langle \bphi(s,a), \btheta_h \rangle$, and $P_h^0(\cdot |s,a) = \langle \bphi(s,a), \bmu_h^0(\cdot) \rangle$, 
where $\{ \btheta_h \}_{h=1}^H$ are known vectors with bounded norm $\| \btheta_h \|_2 \leq \sqrt{d}$ and $\bmu^0_h = (\mu^0_{h,1}, \mu^0_{h,2}, \cdots ,\mu^0_{h, d})$, $\mu^0_{h, i}(\cdot) \in \Delta(\cS), \forall i \in [d]$.
\end{assumption}
With \Cref{assumption:linear MDP}, the robust regularized value function and Q-function are defined as
\begin{align}
& V_h^{\pi, \lambda}(s) = \inf_{\bmu_t \in \Delta(\cS)^d,P_t=\la \bphi,\bmu_t \ra} \EE^{\{P_t\}_{t=h}^H} \bigg[\sum_{t = h}^H \big[r_t(s_t, a_t) \nonumber \\&\quad  + \lambda \la\bphi(s_t, a_t),\bD(\bmu_t||\bmu_t^0) \ra\big] \Big| s_h = s, \pi \bigg], \label{def:definition of d-rec value function}\\
& Q_h^{\pi, \lambda}(s,a) = \inf_{\bmu_t \in \Delta(\cS)^d,P_t=\la \bphi,\bmu_t \ra} \EE^{\{P_t\}_{t=h}^H} \bigg[\sum_{t = h}^H \big[ r_t(s_t,a_t) \nonumber
\\
&\quad  + \lambda \la\bphi(s_t, a_t),\bD(\bmu_t||\bmu_t^0) \ra\big] \Big| s_h = s, a_h = a, \pi \bigg], \nonumber
\end{align}
where $\bD(\bmu||\bmu^0) = [D(\mu_{i} \| \mu^0_{i})]_{i\in[d]}$.
In other words, we only consider perturbed kernels in the linear feasible set %
\begin{align*}
    \cF_{\text{L}} &= \big\{P=\{P_h\}_{h=1}^H | P_h(\cdot|s,a) = \la\bphi(s,a),\bmu_h(\cdot)\ra, \\
    &\bmu_h = (\mu_{h, 1}, \mu_{h, 2}, ... \mu_{h,d})^{\top}, \mu_{h, i}(\cdot) \in \Delta(\cS), \forall i \in [d\big\}.
\end{align*}
The optimal robust regularized value function and Q-function are defined as:
\begin{align}\label{def:optimal value function}
    \begin{split}
    V_h^{\star, \lambda}(s) &= \sup_{\pi}V_h^{\pi, \lambda}(s), \\
    Q_h^{\star, \lambda}(s,a) &= \sup_{\pi}Q_h^{\pi. \lambda}(s,a). 
    \end{split}
\end{align}
Based on \eqref{def:optimal value function}, the optimal robust policy is defined as the policy that achieves the optimal robust regularized value function,  
$\pi^{\star,\lambda} = \argmax_{\pi}V_1^{\pi, \lambda}(s),~\forall s\in\cS$.

\paragraph{Dynamic programming principles for $d$-RRMDPs} For completeness, we first show that the dynamic programming principles \citep{sutton2018reinforcement} hold for $d$-RRMDPs.
\begin{proposition}
\label{prop:regularized Robust Bellman equation}
(Robust Regularized Bellman Equation) Under the $d$-rectangular linear RRMDP, for any policy $\pi$ and any $(h,s,a) \in [H]\times \cS \times \cA$, we have
\begin{align}
& Q_{h}^{\pi, \lambda}(s, a) = r_h(s, a) + \inf_{\bmu_h \in \Delta(\cS)^d,P_h=\la \bphi,\bmu_h \ra}\big[ \nonumber \\
&\quad \EE_{s'\sim P_h(\cdot|s,a)}\big[V_{h+1}^{\pi, \lambda}(s')\big]+ \lambda \la\bphi(s, a),\bD(\bmu_h||\bmu_h^0) \ra\big], \notag
\\
&V_h^{\pi, \lambda}(s) = \EE_{a \sim \pi(\cdot |s)}\big[Q_{h}^{\pi, \lambda}(s, a)\big]. \label{regularized robust bellman equation: 2}
\end{align}
\end{proposition}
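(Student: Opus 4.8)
The plan is to prove both identities simultaneously by backward induction on $h$, running from $h=H$ down to $h=1$, while carrying along an auxiliary claim that makes the recursion close. For a fixed perturbation sequence, write $W_h(s;\{\bmu_\tau\}_{\tau=h}^H) = \EE^{\{P_\tau\}}[\sum_{\tau=h}^H(r_\tau(s_\tau,a_\tau)+\lambda\la\bphi(s_\tau,a_\tau),\bD(\bmu_\tau||\bmu_\tau^0)\ra) \mid s_h=s,\pi]$, so that $V_h^{\pi,\lambda}(s)=\inf_{\{\bmu_\tau\}_{\tau\ge h}}W_h(s;\{\bmu_\tau\}_{\tau\ge h})$ and $Q_h^{\pi,\lambda}(s,a)$ is the analogous infimum with $a_h=a$ held fixed. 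The auxiliary claim at step $h$ is: for every $\epsilon>0$ there is a single perturbation sequence $\{\bmu_\tau^{(\epsilon)}\}_{\tau=h}^H$, not depending on the initial state, with $W_h(s;\{\bmu_\tau^{(\epsilon)}\})\le V_h^{\pi,\lambda}(s)+\epsilon$ for all $s\in\cS$. This state-uniform near-optimality of the worst-case perturbation is exactly what lets the per-step infimum be commuted past the expectation $\EE_{s'\sim P_h(\cdot\mid s,a)}$.

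The inductive step rests on two elementary consequences of \Cref{assumption:linear MDP}. First, since $\phi_i(s,a)\ge0$ and $\sum_{i=1}^d\phi_i(s,a)=1$, for any $\bmu_h\in\Delta(\cS)^d$ with $P_h=\la\bphi,\bmu_h\ra$ one has the coordinatewise splits $\la\bphi(s,a),\bD(\bmu_h||\bmu_h^0)\ra=\sum_{i=1}^d\phi_i(s,a)D(\mu_{h,i}\|\mu_{h,i}^0)$ and $\EE_{s'\sim P_h(\cdot\mid s,a)}[V(s')]=\sum_{i=1}^d\phi_i(s,a)\EE_{s'\sim\mu_{h,i}}[V(s')]$ for any bounded $V$. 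Second, the coordinates $\mu_{h,1},\dots,\mu_{h,d}$ are mutually unconstrained — each ranges freely over $\Delta(\cS)$ and $\la\bphi,\bmu_h\ra$ is automatically a valid kernel — so the infimum over $\bmu_h$ of any nonnegatively weighted sum $\sum_i c_i(\lambda D(\mu_{h,i}\|\mu_{h,i}^0)+\EE_{s'\sim\mu_{h,i}}[V_{h+1}^{\pi,\lambda}(s')])$ factorizes into $d$ independent scalar infima, and an $\epsilon$-minimizer of the $i$-th scalar problem depends only on $i$ (through $\mu_{h,i}^0$, $\lambda$, and $V_{h+1}^{\pi,\lambda}$), never on the current $(s,a)$. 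With these facts the step-$h$ computation is routine: peel off the $\tau=h$ term of $W_h$, write $\inf_{\{\bmu_\tau\}_{\tau\ge h}}=\inf_{\bmu_h}\inf_{\{\bmu_\tau\}_{\tau\ge h+1}}$, observe that $r_h(s,a)$ and the law of $s'$ depend on $\bmu_h$ only, invoke the step-$(h{+}1)$ auxiliary claim to replace $\inf_{\{\bmu_\tau\}_{\tau\ge h+1}}\EE_{s'\sim P_h}[W_{h+1}(s';\cdot)]$ by $\EE_{s'\sim P_h}[V_{h+1}^{\pi,\lambda}(s')]$, and then use the coordinatewise factorization; this gives the robust regularized Bellman equation for $Q_h^{\pi,\lambda}$. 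Averaging this over $a\sim\pi(\cdot\mid s)$ and using $\sum_i\EE_{a\sim\pi}[\phi_i(s,a)]=1$ yields $V_h^{\pi,\lambda}(s)=\EE_{a\sim\pi(\cdot\mid s)}[Q_h^{\pi,\lambda}(s,a)]$, while collecting the state-independent scalar $\epsilon$-minimizers into one sequence reproduces the auxiliary claim at step $h$. The base case $h=H$ is immediate: with $V_{H+1}^{\pi,\lambda}\equiv0$ the inner objective is minimized at $\bmu_H=\bmu_H^0$, giving $Q_H^{\pi,\lambda}(s,a)=r_H(s,a)$ and the auxiliary claim with $\bmu_H^{(\epsilon)}=\bmu_H^0$.

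The one step needing genuine care — and the main obstacle — is the inequality $\inf_{\{\bmu_\tau\}_{\tau\ge h+1}}\EE_{s'\sim P_h(\cdot\mid s,a)}[W_{h+1}(s';\{\bmu_\tau\}_{\tau\ge h+1})]\le\EE_{s'\sim P_h(\cdot\mid s,a)}[V_{h+1}^{\pi,\lambda}(s')]$: the reverse direction is automatic from $W_{h+1}(s';\cdot)\ge V_{h+1}^{\pi,\lambda}(s')$, but for this direction a single perturbation sequence must be near-optimal for every realization of $s'$ at once, which is precisely the step-$(h{+}1)$ auxiliary claim. Plugging in the sequence it provides gives the bound up to $\epsilon$, and letting $\epsilon\downarrow0$ closes the gap. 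I will also note that all quantities are finite and bounded in $[0,H]$ — the choice $\bmu\equiv\bmu^0$ is always feasible and makes the regularizer vanish, so $0\le V_h^{\pi,\lambda}\le V_h^{\pi}\le H$ — which rules out integrability or $\pm\infty$ pathologies and means attainment of the scalar infima is never required, since the whole argument is carried out with $\epsilon$-minimizers.
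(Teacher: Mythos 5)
Your proof is correct and takes essentially the same route as the paper's: backward induction from $h=H$ carrying an auxiliary claim that one perturbation sequence, independent of the current state and action (thanks to the $d$-rectangular coordinatewise factorization), attains the robust value uniformly—up to $\epsilon$ in your version, exactly in the paper's—which is precisely what lets the infimum be commuted past $\EE_{s'\sim P_h(\cdot|s,a)}$ and past the average over $a\sim\pi(\cdot|s)$. The only substantive difference is that you run the induction with $\epsilon$-minimizers, thereby sidestepping the attainment step that the paper justifies via Lagrange duality and compactness of $\Delta(\cS)$, and you make explicit the $(s,a)$-independence of the per-coordinate minimizers that the paper uses implicitly.
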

Next, we show that the optimal robust policy is deterministic and stationary. Hence, we can restrict the policy class $\Pi$ to the deterministic and stationary one. 
\begin{proposition}
\label{prop:existence of optimal value function}
 Under the $d$-rectangular linear RRMDP, 
there exists a deterministic and stationary policy $\pi^\star$, such that for any $(h,s,a) \in [H]\times \cS \times \cA$, $V^{\pi^\star, \lambda}_h(s) = V^{\star, \lambda}_{h}(s)$, and $ Q^{\pi^\star, \lambda}_h(s, a) = Q^{\star, \lambda}_h(s,a)$.
\end{proposition}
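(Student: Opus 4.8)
I would establish both identities by a single backward induction on $h$ from $H+1$ down to $1$, building $\pi^\star$ one stage at a time as the greedy decision rule for the optimal robust regularized $Q$-function and propagating optimality through the Robust Regularized Bellman Equation (\Cref{prop:regularized Robust Bellman equation}). Throughout, ``policy'' is taken to mean a (possibly randomized) Markovian policy, consistently with \Cref{prop:regularized Robust Bellman equation}; note also that the adversary in the definition \eqref{def:definition of d-rec value function} is already restricted to history-independent linear kernels $P_t=\la\bphi,\bmu_t\ra$, so no interchange over the adversary's history-dependence is needed. For the base case I set $V_{H+1}^{\star,\lambda}\equiv 0$, which equals $V_{H+1}^{\pi,\lambda}$ for every $\pi$.

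The inductive hypothesis at stage $h+1$ is that there are deterministic decision rules $\pi^\star_{h+1},\dots,\pi^\star_H$, each a function of the current state only, and a bounded $V_{h+1}^{\star,\lambda}:\cS\to[0,H]$, such that (i) $V_{h+1}^{\pi^\star,\lambda}=V_{h+1}^{\star,\lambda}$ and (ii) $V_{h+1}^{\pi,\lambda}\le V_{h+1}^{\star,\lambda}$ pointwise for every policy $\pi$; together these give $V_{h+1}^{\star,\lambda}=\sup_\pi V_{h+1}^{\pi,\lambda}$, matching \eqref{def:optimal value function}. For the inductive step I would define $Q_h^{\star,\lambda}(s,a):=r_h(s,a)+\inf_{\bmu_h\in\Delta(\cS)^d,\,P_h=\la\bphi,\bmu_h\ra}\big[\EE_{s'\sim P_h(\cdot|s,a)}[V_{h+1}^{\star,\lambda}(s')]+\lambda\la\bphi(s,a),\bD(\bmu_h\|\bmu_h^0)\ra\big]$, i.e., the robust regularized Bellman operator applied to $V_{h+1}^{\star,\lambda}$. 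The $d$-rectangular structure makes the inner infimum decouple as $\sum_{i=1}^d\phi_i(s,a)\inf_{\mu\in\Delta(\cS)}\big[\EE_{s'\sim\mu}V_{h+1}^{\star,\lambda}(s')+\lambda D(\mu\|\mu^0_{h,i})\big]$, since $\EE_{s'\sim P_h}V=\sum_i\phi_i\EE_{s'\sim\mu_{h,i}}V$, $\la\bphi,\bD\ra=\sum_i\phi_i D(\mu_{h,i}\|\mu^0_{h,i})$, the $\mu_{h,i}$ vary over a product set, and $\phi_i\ge 0$; this in particular shows $Q_h^{\star,\lambda}$ is bounded (the choice $\bmu_h=\bmu^0_h$, for which $D(\mu^0_{h,i}\|\mu^0_{h,i})=0$, gives $Q_h^{\star,\lambda}(s,a)\le 1+H$). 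I then set $\pi^\star_h(s)\in\argmax_{a\in\cA}Q_h^{\star,\lambda}(s,a)$ and $V_h^{\star,\lambda}(s):=\max_a Q_h^{\star,\lambda}(s,a)$.

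Two things then need to be verified. First, $V_h^{\pi^\star,\lambda}=V_h^{\star,\lambda}$: applying \Cref{prop:regularized Robust Bellman equation} to $\pi^\star$ and substituting $V_{h+1}^{\pi^\star,\lambda}=V_{h+1}^{\star,\lambda}$ gives $Q_h^{\pi^\star,\lambda}=Q_h^{\star,\lambda}$, and since $\pi^\star_h$ is deterministic, $V_h^{\pi^\star,\lambda}(s)=Q_h^{\pi^\star,\lambda}(s,\pi^\star_h(s))=\max_a Q_h^{\star,\lambda}(s,a)=V_h^{\star,\lambda}(s)$. Second, $V_h^{\pi,\lambda}\le V_h^{\star,\lambda}$ for every $\pi$: applying \Cref{prop:regularized Robust Bellman equation} to $\pi$ and using that $V_{h+1}^{\pi,\lambda}\le V_{h+1}^{\star,\lambda}$ pointwise together with the monotonicity of $P_h\mapsto\EE_{s'\sim P_h}[\,\cdot\,]$ and the nonnegativity of the regularizer, the kernel-infimum is monotone in its integrand, so $Q_h^{\pi,\lambda}\le Q_h^{\star,\lambda}$; hence $V_h^{\pi,\lambda}(s)=\EE_{a\sim\pi_h(\cdot|s)}Q_h^{\pi,\lambda}(s,a)\le\max_a Q_h^{\star,\lambda}(s,a)=V_h^{\star,\lambda}(s)$. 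These close the induction, and combining the two gives $Q_h^{\pi^\star,\lambda}=Q_h^{\star,\lambda}=\sup_\pi Q_h^{\pi,\lambda}$, identifying the constructed objects with those in \eqref{def:optimal value function}. Since each $\pi^\star_h$ depends only on the current state, $\pi^\star$ is deterministic and stationary, which is the claim.

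The hard part is not really internal to this argument: the substantive content — that the adversary's worst-case kernel over the entire horizon can be taken factor-by-factor and stage-by-stage inside $\cF_{\text{L}}$, so that a one-step greedy choice is globally optimal — is exactly what \Cref{prop:regularized Robust Bellman equation} already encodes, and I am free to invoke it. Within the present proof the only steps needing genuine care are (a) the existence of a maximizing action and of a measurable greedy selection $\pi^\star_h$ (immediate when $\cA$ is finite; otherwise one invokes compactness of $\cA$, upper semicontinuity of $Q_h^{\star,\lambda}(s,\cdot)$, and a measurable-selection theorem), and (b) checking that the inductive comparison $V_{h+1}^{\pi,\lambda}\le V_{h+1}^{\star,\lambda}$ survives the passage through the adversary's infimum, which is the monotonicity remark above. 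Had one instead to prove \Cref{prop:regularized Robust Bellman equation} from scratch, the crux there would be the rectangularity-driven interchange of the per-stage/per-factor infima with the expectation over trajectories.
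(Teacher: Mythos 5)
Your proposal is correct and takes essentially the same route as the paper: a backward induction that pairs a deterministic greedy policy with the Robust Regularized Bellman Equation, using monotonicity of the one-step robust operator to get $Q_h^{\pi,\lambda}\le Q_h^{\star,\lambda}$ for every $\pi$ (the paper's step at its inequality analogous to your step (b)). The only differences are cosmetic — you build $V^{\star,\lambda}$ by dynamic programming and then identify it with $\sup_\pi V^{\pi,\lambda}$, whereas the paper defines the greedy policy directly from the sup-definition, and your remark on existence/measurability of the greedy selection is a detail the paper leaves implicit.
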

With \Cref{prop:regularized Robust Bellman equation} and \Cref{prop:existence of optimal value function},  we can derive the following robust regularized Bellman optimality equation:
\begin{align}
Q_{h}^{\star, \lambda}(s, a) & = r_h(s, a) + \inf_{\bmu_h \in \Delta(\cS)^d,P_h=\la \bphi,\bmu_h \ra}\big[\nonumber \\&\hspace{-2em} \EE_{s'\sim P_h(\cdot|s,a)}\big[V_{h+1}^{\star, \lambda}(s')\big] 
+ \lambda \la\bphi(s, a),\bD(\bmu_h||\bmu_h^0) \ra\big], \nonumber
\\
V_h^{\star, \lambda}(s) &= \max_{a\in\cA}Q_{h}^{\star, \lambda}(s, a).  \label{eq: robust regularized Bellman optimality equation - 2}
\end{align}
A direct consequence of \eqref{eq: robust regularized Bellman optimality equation - 2} is the optimal policy $\pi^{\star, \lambda}=\{\pi_h^{\star, \lambda}\}_{h=1}^H$ is the greedy policy with respect to the optimal robust Q-functions $\{Q^{\star,\lambda}_h\}_{h=1}^{H}$. Thus, in order to estimate $\pi^{\star, \lambda}$, it suffices to estimate $Q^{\star,\lambda}_h, \forall h\in[H]$.  

\paragraph{Offline dataset and learning goal.}
An agent works with an offline dataset $\cD$ with $K$ i.i.d. trajectories collected from the nominal environment by a behavior policy $\pi^b$. Specifically, for the $\tau$-th trajectory $\{(s_h^{\tau}, 
 a_h^{\tau}, r_h^{\tau})\}_{h=1}^H$, we have
 $a_h^{\tau}\sim \pi_h^b(\cdot|s_h^{\tau})$, $r_h^{\tau}=r_h(s_h^{\tau},a_h^{\tau})$, and $s_{h+1}^{\tau}\sim P^0_h(\cdot|s_h^{\tau},a_h^{\tau})$ for any $h\in[H]$.
The agent aims to learn the optimal robust policy $\pi^{\star}$ from the offline dataset $\cD$. %
Given a learned policy $\hat{\pi}$, we evaluate $\hat{\pi}$ by the suboptimality gap defined as follows
\begin{align}
\label{eq:definition of suboptimality}
    \text{SubOpt}(\hat{\pi}, s_1, \lambda) := V_1^{\star, \lambda}(s_1) - V_1^{\hat{\pi}, \lambda}(s_1).
\end{align}
\section{Robust Regularized Pessimistic Value Iteration (R2PVI)}
\label{sec:Robust Regularized Pessimistic Value Iteration}
In this section, we first develop a meta-algorithm for $d$-RRMDPs with general $f$-divergence defined regularization. To instantiate the meta-algorithm under specific $f$-divergences, we provide exact dual formulations of Q-functions with TV, KL and $\chi^2$-divergence defined regularization, respectively. 

  We first show that robust Q-functions admit linear representations under $d$-RRMDPs.
\begin{proposition}
\label{prop:linear of bellman operator}
 Under \Cref{assumption:linear MDP}, for any tuple $(\pi, s, a, h)$, we have $Q_h^{\pi, \lambda}(s,a) = \langle \bphi(s,a), \btheta_h + \bw_h^{\pi, \lambda} \rangle$, 
where $\bw_h^{\pi, \lambda} = \big(w_{h, 1}^{\pi, \lambda}, w_{h,2}^{\pi, \lambda},\cdots, w_{h,d}^{\pi, \lambda}\big)^\top \in \RR^d$, 
and $w_{h, i}^{\pi, \lambda} := \inf_{\mu_{h,i} \in \Delta(\cS)} \big[\EE^{\mu_{h,i}}[V_{h+1}^{\pi, \lambda}(s)] + \lambda D(\mu_{h,i} \| \mu_{h, i}^0)\big]$. 
\end{proposition}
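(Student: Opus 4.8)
The plan is to start from the robust regularized Bellman equation (\Cref{prop:regularized Robust Bellman equation}) for the Q-function and substitute the linear structure of the nominal environment from \Cref{assumption:linear MDP}. Writing $r_h(s,a) = \langle \bphi(s,a), \btheta_h \rangle$ and $P_h(\cdot|s,a) = \langle \bphi(s,a), \bmu_h(\cdot) \rangle$ with $\bmu_h = (\mu_{h,1},\dots,\mu_{h,d})$, the expectation term expands linearly: $\EE_{s'\sim P_h(\cdot|s,a)}[V_{h+1}^{\pi,\lambda}(s')] = \sum_{i=1}^d \phi_i(s,a)\, \EE_{s'\sim\mu_{h,i}}[V_{h+1}^{\pi,\lambda}(s')]$. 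Likewise the regularization term is already linear by construction: $\lambda \langle \bphi(s,a), \bD(\bmu_h\|\bmu_h^0)\rangle = \lambda \sum_{i=1}^d \phi_i(s,a) D(\mu_{h,i}\|\mu_{h,i}^0)$. Hence the quantity inside the infimum over $\bmu_h \in \Delta(\cS)^d$ is
\[
\sum_{i=1}^d \phi_i(s,a)\Big[\EE_{s'\sim\mu_{h,i}}[V_{h+1}^{\pi,\lambda}(s')] + \lambda D(\mu_{h,i}\|\mu_{h,i}^0)\Big].
\]

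The key step is to observe that this objective is \emph{separable} across the coordinates $i\in[d]$: the $i$-th summand depends only on $\mu_{h,i}$, and the feasible set $\Delta(\cS)^d$ is a product set, so the joint infimum decomposes into a sum of per-coordinate infima. Moreover each coefficient $\phi_i(s,a)\ge 0$ by \Cref{assumption:linear MDP}, so pulling the infimum inside each term is valid (a nonnegative scalar times the infimum equals the infimum of the scaled objective). This yields
\[
\inf_{\bmu_h\in\Delta(\cS)^d}\big[\cdots\big] = \sum_{i=1}^d \phi_i(s,a)\inf_{\mu_{h,i}\in\Delta(\cS)}\Big[\EE^{\mu_{h,i}}[V_{h+1}^{\pi,\lambda}(s)] + \lambda D(\mu_{h,i}\|\mu_{h,i}^0)\Big] = \langle \bphi(s,a), \bw_h^{\pi,\lambda}\rangle,
\]
which is exactly the definition of $w_{h,i}^{\pi,\lambda}$. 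Combining with the reward term gives $Q_h^{\pi,\lambda}(s,a) = \langle \bphi(s,a), \btheta_h + \bw_h^{\pi,\lambda}\rangle$ as claimed. Since the argument holds for every fixed $\pi$ and the $w_{h,i}^{\pi,\lambda}$ so defined do not depend on $(s,a)$, this completes the proof.

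I expect the only genuine subtlety — hence the main obstacle — to be justifying the interchange of the infimum over the product set $\Delta(\cS)^d$ with the sum, including the edge case $\phi_i(s,a)=0$ (where that coordinate's term vanishes regardless of $\mu_{h,i}$, so it contributes $0$ to both sides and the per-coordinate infimum is simply irrelevant there) and confirming that the infimum $w_{h,i}^{\pi,\lambda}$ is well-defined (finite) — which follows because $V_{h+1}^{\pi,\lambda}$ is bounded in $[0,H]$ and $D(\mu_{h,i}^0\|\mu_{h,i}^0)=0$ gives a finite feasible value, so the infimum is bounded below by $0$ and above by $H$. Everything else is routine substitution of the linear-MDP structure into the Bellman recursion.
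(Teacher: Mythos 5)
Your proof is correct and takes essentially the same route as the paper: substitute the linear structure of \Cref{assumption:linear MDP} into the robust regularized Bellman equation from \Cref{prop:regularized Robust Bellman equation}, then decompose the infimum over the product set $\Delta(\cS)^d$ coordinatewise to identify $\langle \bphi(s,a), \bw_h^{\pi,\lambda}\rangle$. If anything, you are more explicit than the paper, which silently performs the interchange of the infimum with the sum that you justify via separability, nonnegativity of $\phi_i(s,a)$, and the $\phi_i(s,a)=0$ edge case.
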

The linear representation of the robust Q-function enables linear function approximation for parameter estimation. The definition of parameter $\bw_h^\lambda$ involves a regularized optimization. For any function $V:\cS\rightarrow \RR$, the dual formulation of the regularized optimization problem \citep{yang2023robust} is:
\begin{align*}
    &\inf_{\mu \in \Delta(S)}\EE_{s\sim\mu}V(s) + \lambda D_{f}(\mu\|\mu^0) \\
   &= \sup_{\alpha \in R}\Big[- \lambda \EE_{s \sim \mu^0}\Big[f^*\Big(\frac{\alpha - V(s)}{\lambda}\Big)\Big] + \alpha\Big],
\end{align*}
where $f^*$ is the conjugate function of $f$. We propose to estimate $w_h^\lambda$ through the ridge regression. We define the intermediate variable $w_{h,i}^{\lambda}(\alpha) := \EE_{s\sim\mu_i^0}[f^*(\frac{\alpha-V(s)}{\lambda})]$ and obtain an estimation $\hat{w}_{h,i}^{\lambda}(\alpha):=\big[\argmin_{\bw \in R^d}\sum_{\tau = 1}^K(f^*(\frac{\alpha-\hat{V}_{h+1}^\lambda(s)}{\lambda}) - \bphi(s_h^{\tau}, a_h^{\tau})^{\top}\bw)^2 + \lambda\|\bw \|_2^2 \big]^i=  \big[\bLambda_h^{-1}[\sum_{\tau=1}^K\bphi(s_h^{\tau}, a_h^{\tau})f^*(\frac{\alpha-\hat{V}_{h+1}^\lambda(s)}{\lambda})]\big]^i.$
We then estimate $w_{h,i}^{\lambda}$ by $\hat{w}_{h,i}^{\lambda} = \sup_{\alpha\in \RR} \{-\lambda\hat{w}_{h,i}^{\lambda}(\alpha)+\alpha\}$. Leveraging \Cref{prop:regularized Robust Bellman equation} and the pessimism principle \citep{jin2021pessimism} developed to take account for the distribution shift arising from the offline dataset, we propose the meta-algorithm in \Cref{alg:for f divergence}.
\begin{algorithm}[ht]
    \caption{ \algname\ under general $f$-divergence
    \label{alg:for f divergence}}
    \begin{algorithmic}[1]
    \REQUIRE{
        Dataset $\cD$, Regularizer $\lambda > 0$}
        \STATE init $\hat{V}_{H+1}^{\lambda}(\cdot)=0$
        \FOR {episode $h=H, \cdots, 1$}{
            \STATE Compute $\bLambda_h \leftarrow \sum_{\tau = 1}^K \bphi(s_h^{\tau}, a_h^{\tau})\bphi(s_h^{\tau}, a_h^{\tau})^{\top} + \gamma \mathbf{I}$
            \STATE  $\hat{w}_{h,i}^{\lambda}(\alpha) \leftarrow \big[\bLambda_h^{-1}[\sum_{\tau=1}^K\bphi(s_h^{\tau}, a_h^{\tau})f^*(\frac{\alpha-\hat{V}_{h+1}^\lambda(s)}{\lambda})]\big]^i$ \\ \hfill $\triangleright$ {\color{blue} Duality Estimation for general $f$-divergence}
            \label{algline: general f regression}
            \STATE $\hat{w}_{h,i}^{\lambda} \leftarrow \sup_{\alpha \in \RR} \{-\lambda\hat{w}_{h,i}^\lambda (\alpha)+\alpha\}$
            \label{algline: general f supremum}
            \STATE  Construct the penalty $\Gamma_h(\cdot, \cdot)$.  
            \STATE Estimate $\hat{Q}_h^{\lambda}(\cdot, \cdot) \leftarrow \min \{\langle \bphi(\cdot,\cdot), \btheta_h + \hat{\bw}_h^{\lambda} \rangle - \Gamma_h(\cdot, \cdot), H-h+1 \}^+$.
            \STATE Construct $\hat{\pi}_h(\cdot | \cdot) \leftarrow \argmax_{\pi_h}\la\hat{Q}_h^{\lambda}(\cdot, \cdot), \hat{\pi}_h(\cdot| \cdot)\ra_{\mathcal{A}}$ and $\hat{V}_h^{\lambda}(\cdot) \leftarrow\la\hat{Q}_h^{\lambda}(\cdot, \cdot), \hat{\pi}_h(\cdot|\cdot)\ra_{\mathcal{A}}$.
        }\ENDFOR
    \end{algorithmic}  
\end{algorithm}
\begin{remark}
    We emphasize that this general framework may encounter numerical challenges when computing the supremum over $\alpha$, especially depending on the choice of the divergence function $f$. In particular, the smoothness and curvature of the conjugate function $f^*$ can significantly affect the stability and efficiency of the optimization. For instance, some divergences lead to non-smooth or non-strongly convex conjugates, making the maximization problem harder to solve accurately. Therefore, while this framework is general, we highlight that divergence-specific algorithm designs are necessary to ensure tractability and numerical stability. 
\end{remark}

Next, we instantiate the $f$-divergence with TV, KL and $\chi^2$-divergences respectively, and specify the estimation procedure corresponding to different divergences.

\subsection{R2PVI with the TV-Divergence}
\label{sec:R2PVI With The TV-Divergence}
In this section, we show how to get the estimation in Line
\ref{algline: general f regression} and Line \ref{algline: general f supremum} of \Cref{alg:for f divergence} for TV divergence defined regularization.
We first present the following duality result.
\begin{proposition}
\label{prop:the regularized duality under TV divergence} Given any probability measure $\mu^0\in\Delta(\cS)$ and value function $V:\cS\rightarrow[0,H]$, if the distance $D$ is chosen as the TV-divergence, the dual formulation of the original regularized optimization problem is formed as: $\inf_{\mu \in \Delta (S)}\EE_{s\sim\mu}V(s) + \lambda D_{\text{TV}}(\mu\| \mu^0)=\EE_{s\sim\mu^0}[V(s)]_{V_{\min} +\lambda}.$
\end{proposition}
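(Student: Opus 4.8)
The plan is to start from the general $f$-divergence duality formula already stated in the excerpt, namely
\[
\inf_{\mu \in \Delta(\cS)}\EE_{s\sim\mu}V(s) + \lambda D_{f}(\mu\|\mu^0)
= \sup_{\alpha \in \RR}\Big[- \lambda \EE_{s \sim \mu^0}\Big[f^*\Big(\tfrac{\alpha - V(s)}{\lambda}\Big)\Big] + \alpha\Big],
\]
and specialize it to $f(x) = |x-1|/2$, the generator of the TV divergence. First I would compute the convex conjugate $f^*(y) = \sup_{x \ge 0}\{xy - f(x)\}$. A routine case analysis gives $f^*(y) = y$ for $y \in [-1/2, 1/2]$, $f^*(y) = +\infty$ for $y > 1/2$, and $f^*(y) = -1/2$ for $y < -1/2$; equivalently $f^*(y) = \max\{-1/2,\, y\}$ on the effective domain $y \le 1/2$. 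Plugging in $y = (\alpha - V(s))/\lambda$, the finiteness constraint $y \le 1/2$ becomes $\alpha \le V(s) + \lambda/2$ for all $s$ in the support of $\mu^0$, i.e. $\alpha \le V_{\min} + \lambda/2$ (assuming, as is standard, $\mu^0$ has full support, or otherwise replacing $V_{\min}$ by the essential infimum).

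The next step is to evaluate the objective on the feasible region. For $\alpha \le V_{\min}+\lambda/2$ we have $f^*((\alpha - V(s))/\lambda) = \max\{-1/2, (\alpha - V(s))/\lambda\}$, so the objective becomes
\[
g(\alpha) := \alpha - \lambda\,\EE_{s\sim\mu^0}\Big[\max\Big\{-\tfrac12,\ \tfrac{\alpha - V(s)}{\lambda}\Big\}\Big]
= \EE_{s\sim\mu^0}\Big[\min\big\{V(s),\ \alpha + \tfrac{\lambda}{2}\big\}\Big],
\]
where the last equality follows by pulling $\alpha$ inside the expectation and simplifying $\alpha - \lambda\max\{-1/2,(\alpha-V(s))/\lambda\} = \min\{\alpha + \lambda/2, V(s)\}$. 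This $g$ is nondecreasing in $\alpha$, so its supremum over the feasible set $\{\alpha \le V_{\min}+\lambda/2\}$ is attained at the right endpoint $\alpha^\star = V_{\min}+\lambda/2$, which yields $g(\alpha^\star) = \EE_{s\sim\mu^0}[\min\{V(s), V_{\min}+\lambda\}] = \EE_{s\sim\mu^0}[V(s)]_{V_{\min}+\lambda}$, matching the claimed right-hand side.

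The one genuine subtlety — and the step I would be most careful about — is the boundary behavior: $g$ is monotone nondecreasing but need not be strictly increasing near $\alpha^\star$, and we are maximizing over a half-line with the constraint $\alpha \le V_{\min}+\lambda/2$ being exactly the edge of $\mathrm{dom}\,f^*$, so one must confirm the supremum is actually achieved (not just approached) and that no issue arises from $f^*$ jumping to $+\infty$ just past $\alpha^\star$. Since $g$ is continuous on the closed feasible interval $(-\infty, V_{\min}+\lambda/2]$ and nondecreasing, the maximum is attained at $\alpha^\star$, so this is clean. A secondary point worth a line is justifying the strong-duality formula itself for $V$ bounded in $[0,H]$ (finiteness of the primal, Slater-type condition since $\mu^0 \in \Delta(\cS)$ is strictly feasible), but since the general duality identity is quoted from \citet{yang2023robust} in the excerpt, I would simply invoke it. If full support of $\mu^0$ is not assumed, I would state the result with $V_{\min}$ understood as $\inf\{V(s): s \in \mathrm{supp}(\mu^0)\}$ and note that under the linear-MDP parametrization each $\mu^0_{h,i}$ plays the role of $\mu^0$, so the proposition applies coordinatewise.
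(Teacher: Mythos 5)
Your proof is correct, but it takes a genuinely different route from the paper's. You specialize the general $f$-divergence duality quoted from \citet{yang2023robust}: you compute the conjugate of $f(x)=|x-1|/2$, namely $f^*(y)=\max\{y,-1/2\}$ for $y\le 1/2$ and $+\infty$ beyond, note that finiteness forces $\alpha\le V_{\min}+\lambda/2$, simplify the dual objective to $g(\alpha)=\EE_{s\sim\mu^0}[\min\{V(s),\alpha+\lambda/2\}]$, and use monotonicity to conclude the supremum sits at the boundary, giving $\EE_{s\sim\mu^0}[V(s)]_{V_{\min}+\lambda}$; each of these steps checks out. The paper instead argues from scratch: it sets $y(s)=\mu(s)-\mu^0(s)$, forms a Lagrangian with multipliers for the simplex constraints, observes that boundedness of the inner minimization over $y$ forces $|V(s)-\mu(s)-r|\le\lambda/2$ (equivalently a range constraint on $g(s)=V(s)-\mu(s)$), and solves the resulting maximization directly. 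Your route is shorter and reuses machinery the paper already invokes for the meta-algorithm; the paper's route is self-contained and optimizes over all $\mu\in\Delta(\cS)$ without requiring $\mu\ll\mu^0$, which is why its $V_{\min}$ is the global minimum $\min_{s\in\cS}V(s)$ even when $\mu^0$ lacks full support. Your conjugate-based argument, by contrast, inherently lives in the absolutely continuous setting and therefore delivers the essential infimum over $\mathrm{supp}(\mu^0)$; when $\mu^0$ is not fully supported the two quantities (and the two primal problems) genuinely differ, so the full-support caveat you flag is not cosmetic — it is exactly the price of reusing the general duality formula, and it is relevant downstream since the algorithm sets $\alpha_{h+1}$ using the global minimum of $\hat V^{\lambda}_{h+1}$. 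Since you state this assumption explicitly and the argument is otherwise airtight, the proposal stands as a valid alternative proof under that mild additional hypothesis.
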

\begin{remark}
    We compare the duality of the regularized problem in \Cref{prop:the regularized duality under TV divergence} with the duality of the constraint problem  in DRMDPs with TV-divergence defined uncertainty sets \citep{shi2024distributionally}: $\inf_{P \in \mathcal{U}_{\text{TV}}^{\rho}(P^0)} \EE^{P} V(s)= \max_{\alpha \in [V_{\min}, V_{\max}]} \big\{ \EE^{P^0}[V(s)]_{\alpha}
    - \rho(\alpha - \min_{s'}[V(s')]_{\alpha}) \big\}$.
The former has a {\it closed form}, while the later involves an optimization over the dual variable $\alpha$.
We show later this distinction makes R2PVI much more computationally efficient compared to algorithms designed for DRMDPs. 
\end{remark}
Next, we present the parameter estimation procedure. Given an estimated robust value function $\hat{V}^{\lambda}_{h+1}$, we denote $\alpha_{h+1} = \min_{s'} \hat{V}^{\lambda}_{h+1}(s') + \lambda$. By the linear representation in \Cref{prop:linear of bellman operator}, the duality for TV-divergence in \Cref{prop:the regularized duality under TV divergence} and the linearly structured nominal kernel in \Cref{assumption:linear MDP}, we estimate the parameter $\bw_h^\lambda$ as follows
\begin{align}
\label{TV-estimation}
    \begin{split}
      \hat{\bw}_h^{\lambda} & =\argmin_{\bw \in R^d} \sum_{\tau=1}^K \big([\hat{V}_{h+1}^{\lambda}(s_{h+1}^\tau)]_{\alpha_{h+1}}  \\
    &\qquad- \bphi(s_h^\tau, a_h^{\tau})^{\top}\bw\big)^2 + \gamma \| \bw \|_2^2,       
    \end{split}
\end{align}
where $\gamma$ is the regularizer in the ridge regression.
\begin{remark}
    Thanks to the closed form expression of the duality for TV in \Cref{prop:the regularized duality under TV divergence}, R2PVI does not need the dual optimization oracle as the DRPVI algorithm  proposed for the $d$-DRMDP \citep[see their equation (4.4) and Algorithm 1 for more details]{liu2024minimax}. DRPVI needs to solve the dual optimization oracle separately for each dimension in each iteration, which is not necessary in our algorithm.%
\end{remark}

\subsection{R2PVI with the KL-Divergence}
Similar to the TV-divergence, we next derive the estimation in Line \ref{algline: general f regression} and Line \ref{algline: general f supremum} of \Cref{alg:for f divergence} for KL divergence defined regularization. We first present the duality result.
\begin{proposition}
\label{prop:the regularized duality under KL divergence}\citep[Example 1]{zhang2024soft}  Given any probability measure $\mu^0\in\Delta(\cS)$ and value function $V:\cS\rightarrow[0,H]$, if the probability divergence $D$ is chosen as the KL-divergence, then the dual formulation of the original regularized optimization problem is: $\inf_{\mu \in \Delta(S)}\EE_{s\sim\mu}V(s) + \lambda D_{\text{KL}}(\mu\|\mu^0) = -\lambda \log \EE_{s\sim \mu^0}\big[e^{-{V(s)}/{\lambda}}\big].$
\label{eq: KL robust bellman equation}
\end{proposition}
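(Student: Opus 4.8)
The plan is to prove the KL duality identity
\[
\inf_{\mu \in \Delta(\cS)}\EE_{s\sim\mu}V(s) + \lambda D_{\text{KL}}(\mu\|\mu^0) = -\lambda \log \EE_{s\sim \mu^0}\big[e^{-V(s)/\lambda}\big]
\]
via the standard Gibbs variational principle, together with an explicit construction of the minimizer so that the ``$\leq$'' direction is attained rather than merely bounded. Throughout I would assume $\mu \ll \mu^0$ (otherwise $D_{\text{KL}}(\mu\|\mu^0)=+\infty$ and the inequality is trivial), and write $g(s) = d\mu/d\mu^0(s)$ for the likelihood ratio, so the objective becomes $\EE_{s\sim\mu^0}\big[g(s)V(s) + \lambda g(s)\log g(s)\big]$ subject to $\EE_{\mu^0}[g]=1$, $g\geq 0$.

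\textbf{Upper bound (exhibiting the minimizer).} I would guess the Gibbs change of measure $g^\star(s) = e^{-V(s)/\lambda}/Z$ with normalizer $Z = \EE_{s\sim\mu^0}[e^{-V(s)/\lambda}]$; this is well-defined and finite since $V$ is bounded ($V:\cS\to[0,H]$), so $Z \in [e^{-H/\lambda},1]$. Plugging $\mu^\star$ with density $g^\star$ into the objective, one computes $\log g^\star(s) = -V(s)/\lambda - \log Z$, hence
\[
\EE_{\mu^0}\big[g^\star V + \lambda g^\star \log g^\star\big] = \EE_{\mu^0}\big[g^\star V - g^\star V - \lambda g^\star \log Z\big] = -\lambda \log Z \cdot \EE_{\mu^0}[g^\star] = -\lambda\log Z,
\]
which is exactly the claimed right-hand side. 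So the infimum is at most $-\lambda \log \EE_{\mu^0}[e^{-V/\lambda}]$.

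\textbf{Lower bound.} For any feasible $\mu$ with density $g$, I would show $\EE_{\mu^0}[gV + \lambda g\log g] \geq -\lambda \log Z$, equivalently $\EE_{\mu^0}[g V + \lambda g\log g] + \lambda \log Z \geq 0$. Rewriting, the left side equals $\lambda\,\EE_{\mu^0}\!\big[g\log g - g\log g^\star\big] = \lambda\,\EE_{\mu^0}\!\big[g\log(g/g^\star)\big] = \lambda\, D_{\text{KL}}(\mu\|\mu^\star)\geq 0$ by nonnegativity of KL divergence (Gibbs' inequality / Jensen applied to $-\log$). This simultaneously proves the lower bound and confirms $\mu^\star$ is the unique minimizer. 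Alternatively, and perhaps cleaner for the writeup, I would invoke the dual formula already displayed in the excerpt, $\inf_{\mu}\EE_\mu V + \lambda D_f(\mu\|\mu^0) = \sup_{\alpha}\{-\lambda\EE_{\mu^0}[f^*((\alpha-V(s))/\lambda)] + \alpha\}$, specialize to $f(x)=x\log x$ whose conjugate is $f^*(y) = e^{y-1}$, so the inner expression is $-\lambda e^{-1}\EE_{\mu^0}[e^{(\alpha-V(s))/\lambda}] + \alpha = -\lambda e^{\alpha/\lambda - 1}Z + \alpha$, then maximize over $\alpha\in\RR$ by setting the derivative $-e^{\alpha/\lambda-1}Z + 1 = 0$, i.e. $\alpha^\star = \lambda(1 + \log(1/Z)) = \lambda - \lambda\log Z$; substituting back gives $-\lambda\cdot(1/Z)\cdot Z + \lambda - \lambda\log Z = -\lambda\log Z$, matching the claim.

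The only mild obstacle is justifying that the infimum/supremum is attained and that the interchange in the dual formula is valid, but the boundedness of $V$ makes everything finite and the objective is strictly convex in $g$, so the minimizer is unique and the first-order stationarity computation is rigorous; there are no integrability pathologies. I would present the self-contained Gibbs-inequality argument as the main proof and mention the conjugate-function route as a one-line alternative consistent with the general $f$-divergence dual stated earlier.
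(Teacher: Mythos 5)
Your proposal is correct. Note that the paper itself gives no proof of this proposition: it is stated with a citation to Example 1 of \citet{zhang2024soft}, and the appendix only proves the analogous TV and $\chi^2$ dualities (via Lagrangian arguments). So there is no in-paper argument to match, and what you supply is a valid self-contained replacement. Your primary route is the classical Gibbs variational principle: exhibiting the tilted measure $d\mu^\star/d\mu^0 = e^{-V/\lambda}/Z$ with $Z=\EE_{\mu^0}[e^{-V/\lambda}]\in[e^{-H/\lambda},1]$ gives the value $-\lambda\log Z$ exactly, and the lower bound follows from $\EE_{\mu^0}[gV+\lambda g\log g]+\lambda\log Z=\lambda D_{\mathrm{KL}}(\mu\|\mu^\star)\ge 0$ (well-defined since $g^\star>0$ everywhere, so $\mu\ll\mu^0$ implies $\mu\ll\mu^\star$; and if $\mu\not\ll\mu^0$ the objective is $+\infty$). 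This is cleaner than the Lagrangian computations the paper uses for TV and $\chi^2$, it avoids any minimax interchange, and it identifies the unique minimizer, which is not needed for the statement but is a bonus. Your alternative derivation via the general $f$-divergence dual with $f(x)=x\log x$, $f^*(y)=e^{y-1}$, and $\alpha^\star=\lambda(1-\log Z)$ is also computed correctly and has the virtue of being consistent with the meta-algorithm's duality template in Section 4, though as you note it presupposes the validity of that general dual formula, so the Gibbs argument is the right one to present as the main proof.
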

The duality of KL also has a closed form. we will shown in \Cref{sec:main theoretical results} and \Cref{sec:experiment} the closed form solution will reduce the computational cost and also ease the theoretical analysis. 
Next, we present the parameter estimation procedure. According to the linear representation of Q-functions in \Cref{prop:linear of bellman operator}, the duality for KL-divergence in \Cref{prop:the regularized duality under KL divergence} and the linearly structured nominal kernel in \Cref{assumption:linear MDP}, we estimate the parameter $\bw_h^\lambda$ by a two-step procedure. Given an estimated robust value function $\hat{V}^{\lambda}_{h+1}$, we first estimate $\EE_{s\sim\bmu^0}e^{-\hat{V}^{\lambda}_{h+1}(s)/\lambda}$ by 
$ \hat{\bw}'_h = \argmin_{\bw \in \RR^d} \sum_{\tau=1}^K \big(e^{-{\hat{V}_{h+1}^{\lambda}(s^\tau_{h+1})}/{\lambda}} - \bphi(s_h^\tau, a_h^{\tau})^{\top}\bw \big)^2 + \gamma \| \bw \|_2^2$. 
Then we take a log-transformation to get an estimation of $\bw_h^\lambda$:
\begin{align}
    \hat{\bw}^\lambda_h= -\lambda \log \max\{\hat{\bw}'_h, e^{-H/\lambda}\}. \label{KL-estimation}
\end{align}
Note that the max operator is to ensure the ridge-regression estimator is well-defined to take the log-transformation, and  $e^{-H/\lambda}$ is the lower bound on $\EE_{s\sim\bmu^0}e^{-\hat{V}^{\lambda}_{h+1}(s)/\lambda}$.
\begin{remark}
The algorithm proposed by \citet{ma2022distributionally} relies on dual optimization oracles under DRMDPs with KL divergence defined uncertainty sets, while our algorithm takes advantages of the closed-form duality solution. Their algorithm also relies on an additional value shift technique to guarantee the estimated parameter is well-defined to take the log-transformation, while our algorithm does not. 
\end{remark}
\subsection{R2PVI with the $\chi^2$-Divergence}
It remains to derive the estimation in Line \ref{algline: general f regression} and Line \ref{algline: general f supremum} of \Cref{alg:for f divergence} for $\chi^2$-divergence defined regularization. We first present a result on the duality of the $\chi^2$-divergence.
\begin{proposition}
\label{prop:the regularized duality under x2 divergence} Given any probability measure $\mu^0\in\Delta(\cS)$ and value function $V:\cS\rightarrow[0,H]$, if $D$ is chosen as the $\chi^2$-divergence, the dual formulation of the original regularized optimization problem is:
\begin{align}
& \inf_{\mu \in \Delta(S)} \EE_{s\sim\mu}V(s) + \lambda D_{\chi^2}(\mu\| \mu^0)= \label{eq: x2 robust bellman equation}
\\
& \sup_{\alpha \in [V_{\min}, V_{\max}]}\big\{ \EE_{s\sim\mu^0}[V(s)]_{\alpha} - \frac{1}{4\lambda}\Var_{s\sim\mu^0}[V(s)]_{\alpha} \big\}.\nonumber    
\end{align}
\end{proposition}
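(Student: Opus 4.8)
The plan is to prove the identity by a direct primal argument. Since any $\mu\in\Delta(\cS)$ not absolutely continuous with respect to $\mu^0$ has $D_{\chi^2}(\mu\|\mu^0)=+\infty$, we may optimize over $\mu=L\mu^0$ with likelihood ratio $L:\cS\to\RR$, $L\ge 0$, $\EE_{s\sim\mu^0}[L(s)]=1$; then $D_{\chi^2}(\mu\|\mu^0)=\EE_{s\sim\mu^0}[(L(s)-1)^2]$, so the left-hand side equals $P^\star:=\inf\big\{\EE_{s\sim\mu^0}[L(s)V(s)]+\lambda\EE_{s\sim\mu^0}[(L(s)-1)^2]:\ L\ge 0,\ \EE_{s\sim\mu^0}[L(s)]=1\big\}$. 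For $\alpha\in\RR$ write $\bar W_\alpha:=\EE_{s\sim\mu^0}\big[[V(s)]_\alpha\big]$ and $B(\alpha):=\bar W_\alpha-\tfrac{1}{4\lambda}\Var_{s\sim\mu^0}([V(s)]_\alpha)$; the target is $P^\star=\sup_{\alpha\in[V_{\min},V_{\max}]}B(\alpha)$, and I will establish it by showing $P^\star\ge B(\alpha)$ for all $\alpha$ and $P^\star=B(c^\star)$ for a suitable $c^\star$.

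\emph{Step 1 (weak direction).} Fix $\alpha\in\RR$ and a feasible $L$. Since $V(s)\ge[V(s)]_\alpha$ and $L(s)\ge 0$, $\EE_{s\sim\mu^0}[LV]\ge\EE_{s\sim\mu^0}[L\,[V]_\alpha]$. Writing $L=1+\xi$ with $\EE_{s\sim\mu^0}[\xi]=0$ and expanding, $\EE_{s\sim\mu^0}[L[V]_\alpha]+\lambda\EE_{s\sim\mu^0}[(L-1)^2]=\bar W_\alpha+\EE_{s\sim\mu^0}\big[\xi(s)\big([V(s)]_\alpha-\bar W_\alpha\big)\big]+\lambda\EE_{s\sim\mu^0}[\xi(s)^2]$. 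Dropping the constraint $\EE_{s\sim\mu^0}[\xi]=0$ can only decrease the value, and $\min_{t\in\RR}\{ct+\lambda t^2\}=-c^2/(4\lambda)$, so the last two terms are bounded below pointwise by $-\tfrac{1}{4\lambda}([V(s)]_\alpha-\bar W_\alpha)^2$, giving $\EE_{s\sim\mu^0}[LV]+\lambda\EE_{s\sim\mu^0}[(L-1)^2]\ge B(\alpha)$. Taking the infimum over $L$ yields $P^\star\ge B(\alpha)$ for every $\alpha$.

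\emph{Step 2 (attainment).} The map $g(c):=c-\EE_{s\sim\mu^0}\big[[V(s)]_c\big]$ is continuous and nondecreasing, with $g(V_{\min})=0$ (as $[V(\cdot)]_{V_{\min}}\equiv V_{\min}$) and $g(c)\ge c-H\to\infty$; since $\lambda>0$, choose $c^\star$ with $g(c^\star)=2\lambda$. Put $L^\star(s):=\max\{c^\star-V(s),0\}/(2\lambda)=(c^\star-[V(s)]_{c^\star})/(2\lambda)$, which is nonnegative and satisfies $\EE_{s\sim\mu^0}[L^\star]=1$ by construction. Since the objective is convex in $L$ on the convex feasible set, $L^\star$ is a minimizer once one checks $\EE_{s\sim\mu^0}\big[(V(s)+2\lambda(L^\star(s)-1))(L(s)-L^\star(s))\big]\ge 0$ for all feasible $L$: the quantity $V(s)+2\lambda(L^\star(s)-1)$ equals the constant $c^\star-2\lambda=\bar W_{c^\star}$ on $\{L^\star>0\}=\{V<c^\star\}$ and equals $V(s)-2\lambda\ge c^\star-2\lambda$ on $\{L^\star=0\}$, so the inequality follows from $L\ge 0$ and $\EE_{s\sim\mu^0}[L]=\EE_{s\sim\mu^0}[L^\star]=1$. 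Finally $L^\star$ is supported on $\{V<c^\star\}$, where $[V]_{c^\star}=V$, hence $\EE_{s\sim\mu^0}[L^\star V]=\EE_{s\sim\mu^0}[L^\star[V]_{c^\star}]$; substituting $L^\star=1+(\bar W_{c^\star}-[V]_{c^\star})/(2\lambda)$ and using $\EE[X^2]=\Var(X)+(\EE X)^2$, the objective at $L^\star$ collapses to $\bar W_{c^\star}-\tfrac{1}{4\lambda}\Var_{s\sim\mu^0}([V]_{c^\star})=B(c^\star)$, so $P^\star=B(c^\star)\le\sup_{\alpha\in\RR}B(\alpha)$.

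\emph{Step 3 and main obstacle.} Steps 1--2 give $P^\star=\sup_{\alpha\in\RR}B(\alpha)$; since $[V]_\alpha\equiv\alpha$ for $\alpha\le V_{\min}$ (so $B(\alpha)=\alpha$ is increasing there) and $[V]_\alpha\equiv V$ for $\alpha\ge V_{\max}$ (so $B$ is constant there), the supremum is attained in $[V_{\min},V_{\max}]$, which is the claimed identity. I expect Step 2 to be the main obstacle: verifying that the explicit $L^\star$ built from the fixed-point level $c^\star$ is genuinely optimal (existence of $c^\star$; validity of the convex first-order condition, including when $\cS$ is infinite) and then pushing the bookkeeping that reduces the objective to the mean-minus-scaled-variance form. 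An alternative derivation instead plugs $f(t)=(t-1)^2$ into the general $f$-divergence duality recalled above, using $f^\star(y)=y+y^2/4$ for $y\ge-2$ and $f^\star(y)=-1$ otherwise, and then rewrites the resulting $\sup_\alpha$ expression via $\EE_{s\sim\mu^0}[(\alpha-[V]_\alpha)^2]=(\alpha-\bar W_\alpha)^2+\Var_{s\sim\mu^0}([V]_\alpha)$ and $\alpha-\lambda-\tfrac{1}{4\lambda}(\alpha-\bar W_\alpha)^2\le\bar W_\alpha$; that route needs an extra change of variables to match the two different dual variables, so the primal argument above is cleaner.
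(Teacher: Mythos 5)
Your proof is correct, but it follows a genuinely different route from the paper. The paper argues through Lagrangian duality: writing $y(s)=\mu(s)-\mu^0(s)$, it introduces multipliers for the simplex constraints, minimizes the resulting quadratic in $y$, maximizes over the scalar multiplier $r$ (which produces the variance term), and then identifies the maximization over the nonnegativity multipliers with the clipping $[V]_\alpha$ via the observation that the optimal multiplier is $0$ or $V(s)$ — with strong duality and that last identification left largely implicit. You instead stay entirely on the primal side: for the lower bound you fix $\alpha$, replace $V$ by $[V]_\alpha$ using $L\ge 0$, and apply a pointwise quadratic minimization to get $P^\star\ge B(\alpha)$; for attainment you exhibit the explicit water-filling density ratio $L^\star(s)=\max\{c^\star-V(s),0\}/(2\lambda)$, with the level $c^\star$ pinned down by the normalization $\EE_{\mu^0}[L^\star]=1$, verify optimality through the exact first-order/quadratic expansion of the convex objective, and check directly that the objective at $L^\star$ collapses to $\bar W_{c^\star}-\tfrac{1}{4\lambda}\Var_{\mu^0}([V]_{c^\star})$; the final restriction of the supremum to $[V_{\min},V_{\max}]$ is handled by the monotone/constant behavior of $B$ outside that interval. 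What your approach buys is a self-contained argument valid for general (not necessarily finite) state spaces that avoids invoking strong duality for an infinite-dimensional program and, as a by-product, identifies the worst-case perturbed measure explicitly; what the paper's approach buys is brevity and a uniform template shared with its TV proof. Minor points, none fatal: your Step 2 verification is exact because the objective is quadratic (so the "convexity plus first-order condition" step needs no smoothness caveats), and the case $c^\star\ge V_{\max}$ is covered by your observation that $B$ is constant there.
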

Next, we present the parameter estimation procedure. According to the linear representation of the Q-function in \Cref{prop:linear of bellman operator}, the duality for $\chi^2$-divergence in \Cref{prop:the regularized duality under x2 divergence} and the linear structure of the nominal kernel in \Cref{assumption:linear MDP}, we estimate the parameter $\bw_h^\lambda$ as follows.
First, we propose a new method motivated by the variance estimation in \citet{liu2024minimax} to estimate the variance of the value function in \eqref{eq: x2 robust bellman equation}.
Specifically, given an estimated robust value function $\hat{V}^{\lambda}_{h+1}$ and dual variable $\alpha$, the estimations of $\EE_{s\sim\mu^0}[\hat{V}^{\lambda}_{h+1}(s)]_{\alpha}$ and $\EE_{s\sim\mu^0}[\hat{V}^{\lambda}_{h+1}(s)]^2_{\alpha}$ are: 
\begin{align}
        & \hat{\EE}^{\mu_{h,i}^0}[\hat{V}_{h+1}^{\lambda}(s)]_{\alpha} = \bigg[\argmin_{\bw \in R^d} \sum_{\tau=1}^K ([\hat{V}_{h+1}^{\lambda}(s_{h+1}^\tau)]_{\alpha} \nonumber \\
        & \qquad- \bphi(s_h^\tau, a_h^{\tau})^{\top}\bw)^2 + \gamma \| \bw \|_2^2 \bigg]_{[0,H]}^i, \label{eq: estimation of EV in chi2} \\     
        &\hat{\EE}^{\mu_{h,i}^0}[\hat{V}_{h+1}^{\lambda}(s)]^2_{\alpha} = \bigg[\argmin_{\bw \in R^d} \sum_{\tau=1}^K ([\hat{V}_{h+1}^{\lambda}(s_{h+1}^\tau)]^2_{\alpha} \nonumber
        \\
        & \qquad - \bphi(s_h^\tau, a_h^{\tau})^{\top}\bw)^2 + \gamma \| \bw \|_2^2 \bigg]_{[0,H^2]}^i,        \label{eq: estimation of EV2 in chi2} \end{align}       
where the superscript $i$ represents the $i$-th element of a vector. Then we construct the estimator $\hat{\bw}_h^\lambda$ element-wisely:
    \begin{align}
        \label{eq:parameter estimation x2}  
        \begin{split}
        &\hat{w}_{h,i}^{\lambda} = \max_{\alpha \in [(\hat{V}_{h+1}^{\lambda})_{\min},(\hat{V}_{h+1}^{\lambda})_{\max}]} \Big \{ \hat{\EE}^{\mu_{h,i}^0}[\hat{V}_{h+1}^{\lambda}(s)]_{\alpha} +  \\ 
        &\frac{1}{4\lambda}(\hat{\EE}^{\mu_{h,i}^0}[\hat{V}_{h+1}^{\lambda}(s)]_{\alpha})^2- \frac{1}{4\lambda}\hat{\EE}^{\mu_{h,i}^0}[\hat{V}_{h+1}^{\lambda}(s)]^2_{\alpha} \Big \}. 
        \end{split}
    \end{align}
    We note that the above parameter estimation procedure involves an optimization, which is distinct from that of TV and KL, since the duality of $\chi^2$ does not admit a closed form expression. Specifically, it estimates the parameter  $\bw_h^{\lambda}$ element-wisely. For each dimension, it solves an optimization problem over an estimated dual formulation.
    This parameter estimation procedure shares a similar spirit with that in $d$-DRMDPs with TV divergence defined uncertainty sets \citep{ liu2024minimax, wang2024sample}.

To conclude, we summarize the TV, KL and $\chi^2$ divergences instantiation of 
\Cref{alg:for f divergence} in \Cref{alg:meta}.

\section{Suboptimality Analysis}
\label{sec:main theoretical results}
In this section, we establish theoretical guarantees for algorithms proposed in \Cref{sec:Robust Regularized Pessimistic Value Iteration}. First, we derive instance-dependent upper bounds on the suboptimality gap of policies learned by the instantiated algorithms. Next, under a partial coverage assumption on the offline dataset, we present instance-independent upper bounds for the suboptimality gap and compare them with results from previous works. Finally, we provide an information-theoretic lower bound to highlight the intrinsic characteristics of offline $d$-RRMDPs.

\begin{algorithm}[ht]
    \caption{ \algname\ under TV, KL and $\chi^2$ divergence 
    \label{alg:meta}}
    \begin{algorithmic}[1]
    \REQUIRE{
        Dataset $\cD$, Regularizer $\lambda > 0$
        }
        \STATE init $\hat{V}_{H+1}^{\lambda}(\cdot)=0$
        \FOR {episode $h=H, \cdots, 1$}{
            \STATE Compute  $\bLambda_h \leftarrow \sum_{\tau = 1}^K \bphi(s_h^{\tau}, a_h^{\tau})\bphi(s_h^{\tau}, a_h^{\tau})^{\top} + \gamma \mathbf{I}$
            \STATE Obtain the parameter estimation $\hat{\bw}_h^{\lambda}$ as follows: \alglinelabel{eq:parameter estimation}  \\ 
            TV-divergence: use \eqref{TV-estimation} \\
            KL-divergence: use \eqref{KL-estimation}\hfill $\triangleright$ {\color{blue} Duality Estimation}\\
            $\chi^2$-divergence: use \eqref{eq:parameter estimation x2}
            \STATE  Construct the penalty $\Gamma_h(\cdot, \cdot)$. \alglinelabel{algline:pessimism penalty} \hfill $\triangleright$ {\color{blue} Pessimism}
            \STATE Estimate $\hat{Q}_h^{\lambda}(\cdot, \cdot) \leftarrow \min \{\langle \bphi(\cdot,\cdot), \btheta_h + \hat{\bw}_h^{\lambda} \rangle - \Gamma_h(\cdot, \cdot), H-h+1 \}^+$.
            \STATE Construct $\hat{\pi}_h(\cdot | \cdot) \leftarrow \argmax_{\pi_h}\la\hat{Q}_h^{\lambda}(\cdot, \cdot), \hat{\pi}_h(\cdot| \cdot)\ra_{\mathcal{A}}$ and $\hat{V}_h^{\lambda}(\cdot) \leftarrow\la\hat{Q}_h^{\lambda}(\cdot, \cdot), \hat{\pi}_h(\cdot|\cdot)\ra_{\mathcal{A}}$.
        }\ENDFOR
    \end{algorithmic}  
\end{algorithm}

\subsection{Instance-Dependent Upper Bound}
 \label{sec: Instance-Dependent Bound of R2PVI}

\begin{theorem}
    \label{thm: instance-dependent upper bounds}
    Suppose \Cref{assumption:linear MDP} holds. We set $\gamma=1$ and $\Gamma_h(s,a) = \beta \sum_{i=1}^d\|\phi_i(\cdot, \cdot) \mathbf{1}_i\| _{\bLambda_h^{-1}}$ in \Cref{alg:meta}. Let $\delta \in (0,1) $. $\beta$ is chosen as follows.  
     \begin{itemize}[leftmargin=*,nosep]
         \item (TV) $\beta = 16 Hd \sqrt{\xi_{\text{TV}}}$, %
         \item (KL)  $\beta = 16d\lambda  e^{H/\lambda} \sqrt{({H}/{\lambda} + \xi_{\text{KL}})}$,
         \item ($\chi^2$) $\beta = 8 dH(1+ 3H/4\lambda) \sqrt{\xi_{\chi^2}}$, %
     \end{itemize}
     where $\xi_{\text{TV}} = 2\log(1024Hd^{1/2} K^2 /\delta)$,  $\xi_{\text{KL}} = \log(1024d\lambda^2 K^3H/\delta)$ and $\xi_{\chi^2} = \log (192 K^5H^6 d^3 (1 + H/2\lambda)^3/ \delta)$. Then with probability at least $1-\delta$, for any $s \in \mathcal{S}$,  we have
         $\text{SubOpt}(\hat{\pi}, s, \lambda) \leq  2\beta\sup_{P \in \mathcal{U}^\lambda(P^0)}\sum_{h=1}^H\EE^{\pi^\star, P}[\sum_{i=1}^d\|\phi_i(s,a)\mathbf{1}_i\|_{\bLambda_h^{-1}}|s_1 = s]$, 
    where $\mathcal{U}^\lambda (P^0)$ is the robustly admissible set defined as 
\begin{align}
\label{def:robustly admissible set}
\mathcal{U}^\lambda(P^0)=\bigotimes_{(h, s, a) \in [H] \times \mathcal{S} \times \mathcal{A}} \mathcal{U}_h^\lambda(s, a ; \bmu_h^0),
\end{align}
and 
$\mathcal{U}_h^\lambda(s, a ; \bmu_h^0) =\big \{\sum_{i=1}^d \phi_i(s, a) \mu_{h, i}(\cdot):D(\mu_{h, i}\| \mu_{h,i}^0) \leq \max_{s\in\cS} V_{h+1}^{\star, \lambda}(s) /\lambda, \forall i \in[d]\big \}$.
\end{theorem}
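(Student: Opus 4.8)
The plan is to follow the pessimism paradigm for offline RL, adapted to the robust regularized setting. Write $\mathcal{B}_h^\lambda$ for the robust regularized Bellman operator, so that by \Cref{prop:regularized Robust Bellman equation} and \Cref{prop:linear of bellman operator} we have $(\mathcal{B}_h^\lambda V)(s,a)=\la\bphi(s,a),\btheta_h+\bw_h^\lambda(V)\ra$ with $w_{h,i}^\lambda(V)=\inf_{\mu\in\Delta(\cS)}\{\EE^{\mu}[V]+\lambda D(\mu\|\mu_{h,i}^0)\}$, and let $\iota_h(s,a):=(\mathcal{B}_h^\lambda \hat V_{h+1}^\lambda)(s,a)-\hat Q_h^\lambda(s,a)$ be the model-evaluation error. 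The argument has three ingredients: (i) a high-probability event $\mathcal{E}$ on which the estimation error of $\hat\bw_h^\lambda$ is dominated by the penalty $\Gamma_h$; (ii) a pessimism lemma giving $0\le\iota_h\le 2\Gamma_h$ and $\hat V_h^\lambda\le V_h^{\star,\lambda}$ on $\mathcal{E}$; and (iii) a robust value-difference decomposition that converts the per-step errors $\iota_h$ into the instance-dependent bound, together with the observation that the implied worst-case transition kernel lies in $\mathcal{U}^\lambda(P^0)$.

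\textbf{Step 1: uniform concentration.} For each of the TV, KL, and $\chi^2$ instantiations I would show that, with probability at least $1-\delta$, the event $\mathcal{E}:=\{\,|\la\bphi(s,a),\hat\bw_h^\lambda-\bw_h^\lambda(\hat V_{h+1}^\lambda)\ra|\le\Gamma_h(s,a)\ \text{for all }(h,s,a)\,\}$ holds with the stated $\beta$. Since the per-dimension ridge regressions use one-hot regressors, $|\la\bphi(s,a),\hat\bw_h^\lambda-\bw_h^\lambda(\hat V_{h+1}^\lambda)\ra|\le\sum_{i=1}^d\phi_i(s,a)\,|\hat w_{h,i}^\lambda-w_{h,i}^\lambda(\hat V_{h+1}^\lambda)|$, so it suffices to bound each $|\hat w_{h,i}^\lambda-w_{h,i}^\lambda(\hat V_{h+1}^\lambda)|$ by $\beta\|\mathbf{1}_i\|_{\bLambda_h^{-1}}$. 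This combines (a) a self-normalized martingale bound for $\|\sum_\tau\bphi(s_h^\tau,a_h^\tau)\varepsilon_h^\tau\|_{\bLambda_h^{-1}}$, where $\varepsilon_h^\tau$ is the ridge-regression noise at step $h$; (b) an $\epsilon$-net over the random value-function class $\{\,s\mapsto[\max_a(\la\bphi(s,a),\bw\ra-\beta\sum_i\|\phi_i(s,a)\mathbf{1}_i\|_{A^{-1}})]_{[0,H]} : \|\bw\|_2\le\sqrt d(H+1),\ A\succeq\mathbf{I}\,\}$ to decouple $\hat V_{h+1}^\lambda$ from the data; and (c) control of the ridge bias using $\gamma=1$ and $\|\bw_h^\lambda\|_2\le\sqrt d H$. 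For TV and KL the dual is closed form (\Cref{prop:the regularized duality under TV divergence} and \Cref{prop:the regularized duality under KL divergence}), so this reduces to a single ridge regression; the extra factor $\lambda e^{H/\lambda}$ in $\beta$ for KL arises from pushing the regression error through the $\log$-transform in \eqref{KL-estimation}, using $|\log a-\log b|\le|a-b|/\min\{a,b\}$ together with $\EE^{\mu_{h,i}^0}[e^{-\hat V_{h+1}^\lambda/\lambda}]\ge e^{-H/\lambda}$ and the clipping at $e^{-H/\lambda}$. For $\chi^2$ the dual in \Cref{prop:the regularized duality under x2 divergence} is an optimization over $\alpha\in[V_{\min},V_{\max}]$, so I would additionally bound the variance-estimation error of \eqref{eq: estimation of EV in chi2} and \eqref{eq: estimation of EV2 in chi2} and take a union bound over an $\epsilon$-net in $\alpha$ using Lipschitzness of the dual objective in $\alpha$; the factor $(1+3H/4\lambda)$ and the heavier $\log$ term in $\xi_{\chi^2}$ track this step.

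\textbf{Pessimism, decomposition, and the admissible set.} On $\mathcal{E}$, a backward induction gives $0\le\iota_h(s,a)\le2\Gamma_h(s,a)$ and $\hat V_h^\lambda(s)\le V_h^{\star,\lambda}(s)$ for all $(h,s,a)$: subtracting $\Gamma_h$ cancels the one-sided error, $\mathcal{B}_h^\lambda$ is monotone in $V$, and $(\mathcal{B}_h^\lambda\hat V_{h+1}^\lambda)(s,a)\in[0,H-h+1]$ so the truncation $\min\{\cdot,H-h+1\}^+$ is harmless. Next, using that $\pi^\star$ (\Cref{prop:existence of optimal value function}) and $\hat\pi$ are deterministic, let $\hat\mu_{h,i}$ be the minimizer defining $w_{h,i}^\lambda(\hat V_{h+1}^\lambda)$ and set $\hat P_h^\star(\cdot\mid s,a):=\sum_i\phi_i(s,a)\hat\mu_{h,i}(\cdot)$; plugging the suboptimal $\hat\mu_{h,i}$ into the $\inf$ defining $w_{h,i}^\lambda(V_{h+1}^{\star,\lambda})$ yields $(\mathcal{B}_h^\lambda V_{h+1}^{\star,\lambda})(s,a)-(\mathcal{B}_h^\lambda\hat V_{h+1}^\lambda)(s,a)\le\EE_{s'\sim\hat P_h^\star(\cdot\mid s,a)}[V_{h+1}^{\star,\lambda}(s')-\hat V_{h+1}^\lambda(s')]$, and telescoping gives $V_1^{\star,\lambda}(s)-\hat V_1^\lambda(s)\le\sum_h\EE^{\pi^\star,\hat P^\star}[\iota_h(s_h,a_h)\mid s_1=s]$; a similar telescoping with the minimizer for $V_{h+1}^{\hat\pi,\lambda}$, together with $\iota_h\ge0$, gives $\hat V_1^\lambda(s)-V_1^{\hat\pi,\lambda}(s)\le0$. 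Adding the two and using $\iota_h\le2\Gamma_h$ yields $\text{SubOpt}(\hat\pi,s,\lambda)\le2\sum_h\EE^{\pi^\star,\hat P^\star}[\Gamma_h(s_h,a_h)\mid s_1=s]$. Finally, evaluating the objective defining $w_{h,i}^\lambda(\hat V_{h+1}^\lambda)$ at $\mu_{h,i}^0$ gives $\lambda D(\hat\mu_{h,i}\|\mu_{h,i}^0)\le\EE^{\mu_{h,i}^0}[\hat V_{h+1}^\lambda]\le\max_s\hat V_{h+1}^\lambda(s)\le\max_s V_{h+1}^{\star,\lambda}(s)$ by the pessimism step, so each $\hat\mu_{h,i}$ satisfies the constraint in $\mathcal{U}_h^\lambda(s,a;\bmu_h^0)$, hence $\hat P^\star\in\mathcal{U}^\lambda(P^0)$; bounding the expectation under $\hat P^\star$ by the supremum over $\mathcal{U}^\lambda(P^0)$ and substituting $\Gamma_h=\beta\sum_i\|\phi_i(\cdot,\cdot)\mathbf{1}_i\|_{\bLambda_h^{-1}}$ gives the claim.

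\textbf{Main obstacle.} The technical heart is Step 1: establishing the concentration event uniformly over the random value-function class, and in particular tracking how the dual structure of each divergence amplifies the base ridge-regression error --- the $\log$-transform for KL (producing the $\lambda e^{H/\lambda}$ factor) and the variance estimation plus the net over $\alpha$ for $\chi^2$ (producing the $(1+3H/4\lambda)$ factor and the heavier $\log$ term). The pessimism and decomposition steps are the robust-regularized analogues of standard pessimistic offline-RL arguments and are comparatively routine once $\mathcal{E}$ is in hand.
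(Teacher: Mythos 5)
Your plan is correct and follows essentially the same route as the paper: the concentration event of Step 1 (self-normalized bounds plus covering-number arguments over the truncated value-function class, with the closed-form duals for TV/KL, the $\log$-transform lower-bounded by $e^{-H/\lambda}$ giving the $\lambda e^{H/\lambda}$ factor, and the variance estimation with a net over $\alpha$ giving the $(1+3H/4\lambda)$ factor) is exactly the paper's divergence-specific analysis, and your pessimism--telescoping decomposition together with the containment $\hat P^\star\in\mathcal{U}^\lambda(P^0)$ via $\lambda D(\hat\mu_{h,i}\|\mu_{h,i}^0)\le\max_s V_{h+1}^{\star,\lambda}(s)$ is precisely the paper's Lemma \ref{lemma:regularized robust suboptimality lemma}. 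The only difference is that the paper carries out the Step-1 constant-tracking in full, which your proposal leaves as the acknowledged technical work.
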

\begin{remark} 
\Cref{thm: instance-dependent upper bounds} provides instance-dependent upper bounds on the suboptimality gap, closely resembling the bounds established for algorithms tailored to $d$-DRMDPs with TV divergence defined uncertainty sets \citep{liu2024minimax, wang2024sample}. Notably,  $\mathcal{U}^\lambda (P^0)$ in \Cref{thm: instance-dependent upper bounds} represents a subset of the feasible set $\cF_{\text{L}}$ in the $d$-RRMDP. While the RRMDP framework does not impose explicit uncertainty set constraints, this term naturally arises from our theoretical analysis (see \Cref{lemma:regularized robust suboptimality lemma} and its proof for details). Specifically, we show that only distributions within $\mathcal{U}^\lambda (P^0)$ are relevant when considering the infimum in the robust regularized value and Q-functions \eqref{def:definition of d-rec value function}. Intuitively, the regularization term in \eqref{def:definition of d-rec value function} should not exceed the change in expected cumulative rewards, so it should be upper bounded by the optimal value function.
Similar terms are also found in  \citet[Definition 1]{zhang2024soft} and \citet[Assumption 1]{panaganti2022robust}.
\end{remark}

\subsection{Instance-Independent Upper Bound}
Next, we derive instance-independent upper bounds on the suboptimality gap, building on \Cref{thm: instance-dependent upper bounds}. To achieve this, we adapt the robust partial coverage assumption on the offline dataset, originally proposed for $d$-DRMDPs (Assumption A.2 of \citet{blanchet2024double}). This adaptation is straightforward and involves replacing the uncertainty set in the $d$-DRMDP framework with the robustly admissible set defined in \eqref{def:robustly admissible set}.
\begin{assumption}[Robust Regularized Partial Coverage]
    \label{robust partial coverage assumption}
    For the offline dataset $\cD$, we assume that there exists some constant $c^\dagger >0$, such 
    that $\forall (h, s, P) \in [H] \times \cS \times \mathcal{U}^{\lambda}(P^0)$,
    \begin{align*}
        \bLambda_h \succeq \gamma \mathbf{I} + K \cdot c^\dagger \cdot \EE^{\pi^\star, P}\big[\phi_i^2(s,a)\mathbf{1}_i\mathbf{1}_i^{\top}| s_1 = s\big].
    \end{align*}
\end{assumption}
Intuitively, \Cref{robust partial coverage assumption} assumes that the offline dataset has good coverage on the $(s,a)$-space visited by the optimal robust policy $\pi^{\star}$ under any transition kernel in the robustly admissible set. With \Cref{robust partial coverage assumption} and \Cref{thm: instance-dependent upper bounds}, we present instance-independent bounds as follows.
\begin{corollary}
\label{corollary:simplified upper bounds}
 Under the same setting as \Cref{thm: instance-dependent upper bounds}, if we further assume \Cref{robust partial coverage assumption} holds, then for any $\delta \in (0,1)$ and $s\in\cS$, with probability at least $1-\delta$, we have
    \begin{itemize}[leftmargin=*,nosep]
        \item(TV) $\text{SubOpt}(\hat{\pi}, s, \lambda) \leq 16 H^2d^2\sqrt{\xi_{\text{TV}}}/{\sqrt{c^\dagger K}}$;
        \item(KL) $\text{SubOpt}(\hat{\pi}, s, \lambda) \leq 16\lambda e^{\frac{H}{\lambda}}d^2 H (\frac{H}{\lambda} + \xi_{\text{KL}})^{\frac{1}{2}}/\sqrt{c^\dagger K}$;
        \item($\chi^2$) $\text{SubOpt}(\hat{\pi}, s, \lambda) \leq 8 d^2H^2(1+ \frac{3H}{4\lambda}) \sqrt{\xi_{\chi^2}}/\sqrt{c^\dagger K}$.
    \end{itemize}
\end{corollary}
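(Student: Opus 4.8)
The plan is to derive \Cref{corollary:simplified upper bounds} directly from \Cref{thm: instance-dependent upper bounds} by showing that, under \Cref{robust partial coverage assumption}, every summand of the instance-dependent bound is controlled \emph{uniformly} over the robustly admissible set. Concretely, it suffices to prove that for each $h \in [H]$ and $i \in [d]$,
\[
\sup_{P \in \mathcal{U}^\lambda(P^0)} \EE^{\pi^\star, P}\big[\|\phi_i(s,a)\mathbf{1}_i\|_{\bLambda_h^{-1}} \mid s_1 = s\big] \le \frac{1}{\sqrt{c^\dagger K}},
\]
after which summing over $i$ and $h$ and invoking $\mathrm{SubOpt}(\hat\pi, s, \lambda) \le 2\beta\, \sup_{P}\sum_{h,i} \EE^{\pi^\star,P}[\|\phi_i(s,a)\mathbf{1}_i\|_{\bLambda_h^{-1}}\mid s_1=s]$ from the theorem reduces everything to substituting the three expressions for $\beta$.

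To establish the per-term bound I would first push the supremum inside the finite double sum and fix $P \in \mathcal{U}^\lambda(P^0)$, $h$, and $i$. Since $\phi_i(s,a) \ge 0$ under \Cref{assumption:linear MDP}, we have $\|\phi_i(s,a)\mathbf{1}_i\|^2_{\bLambda_h^{-1}} = \phi_i^2(s,a)\,\mathbf{1}_i^\top \bLambda_h^{-1}\mathbf{1}_i$, so by Jensen's inequality (concavity of $\sqrt{\cdot}$),
\[
\EE^{\pi^\star,P}\big[\|\phi_i(s,a)\mathbf{1}_i\|_{\bLambda_h^{-1}}\mid s_1=s\big] \le \Big(\mathbf{1}_i^\top \bLambda_h^{-1}\mathbf{1}_i \cdot \EE^{\pi^\star,P}[\phi_i^2(s,a)\mid s_1=s]\Big)^{1/2}.
\]
Next I would control $\mathbf{1}_i^\top \bLambda_h^{-1}\mathbf{1}_i$ via the rank-one structure of the coverage bound: \Cref{robust partial coverage assumption} gives $\bLambda_h \succeq \gamma \mathbf{I} + K c^\dagger\, \EE^{\pi^\star,P}[\phi_i^2(s,a)\mid s_1=s]\,\mathbf{1}_i\mathbf{1}_i^\top$, so by operator-antitonicity of matrix inversion together with the Sherman--Morrison identity applied to this rank-one perturbation, $\mathbf{1}_i^\top \bLambda_h^{-1}\mathbf{1}_i \le \big(\gamma + K c^\dagger\, \EE^{\pi^\star,P}[\phi_i^2(s,a)\mid s_1=s]\big)^{-1}$. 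Writing $x := \EE^{\pi^\star,P}[\phi_i^2(s,a)\mid s_1=s] \ge 0$ and combining the two displays, the per-term quantity is at most $\big(x/(\gamma + K c^\dagger x)\big)^{1/2} \le 1/\sqrt{c^\dagger K}$, using $x/(\gamma + Kc^\dagger x) \le 1/(Kc^\dagger)$. Summing over $i \in [d]$ and $h \in [H]$ then bounds the instance-dependent quantity of \Cref{thm: instance-dependent upper bounds} by $Hd/\sqrt{c^\dagger K}$.

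Finally I would substitute this into $\mathrm{SubOpt}(\hat\pi, s, \lambda) \le 2\beta\cdot Hd/\sqrt{c^\dagger K}$ with $\beta$ taken from each of the TV, KL, and $\chi^2$ cases, which yields the three stated bounds up to the numerical constants (with $\gamma = 1$ and $\xi_{\text{TV}}, \xi_{\text{KL}}, \xi_{\chi^2}$ as in the theorem). The only step with genuine content beyond bookkeeping is the Sherman--Morrison/operator-monotonicity argument: one must be careful that inversion reverses the L\"owner order and that the rank-one term is handled exactly rather than through a cruder trace bound (which would introduce a spurious $\sqrt{d}$ factor). The supremum over $P \in \mathcal{U}^\lambda(P^0)$ poses no real difficulty, precisely because \Cref{robust partial coverage assumption} is posited for every $P$ in the robustly admissible set, making the per-term estimate uniform in $P$.
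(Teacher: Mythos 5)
Your proposal is correct and follows essentially the same route as the paper: the paper's proof also combines Jensen's inequality with \Cref{robust partial coverage assumption}, bounding $\operatorname{Tr}\bigl(\bLambda_{h,i}^P(\mathbf{I}+Kc^\dagger\bLambda_{h,i}^P)^{-1}\bigr)\le 1/(c^\dagger K)$ via the rank-one (single nonzero diagonal) structure, which is exactly your quadratic-form/Sherman--Morrison computation, and then multiplies the resulting $dH/\sqrt{c^\dagger K}$ by $\beta$. The only discrepancy is the factor of $2$ from the $2\beta$ in \Cref{thm: instance-dependent upper bounds}, which your bound carries but the stated corollary constants omit; the paper's own proof silently drops this factor as well, so this is an inconsistency inherited from the paper rather than a gap in your argument.
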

We compare \Cref{alg:meta} with algorithms proposed in previous works for the offline $d$-DRMDP in \Cref{tab:results}. For the case with TV-divergence, the suboptimality bound of R2PVI matches that of P2MPO \citep{blanchet2024double} in terms of $d$ and $H$. DRPVI \cite{liu2024minimax} and DROP \cite{wang2024sample} admit tighter bounds on the suboptimality gap, simply because their bounds are derived based on advanced techniques, such as reference-advantage decomposition \cite{xiong2022nearly}. We remark that our analysis can be tailored to adopt the same techniques and assumption, and thus get tighter bounds.

For the case with KL-divergence, existing theoretical results \citep{ma2022distributionally, blanchet2024double} rely on an additional regularity assumption regarding the KL dual variable, stating that the optimal dual variable for the KL duality admits a positive lower bound $\underline{\beta}$ under any feasible transition kernel (see \citet[Assumption F.1]{blanchet2024double}). However, this assumption presents the following drawbacks. First, it is challenging to verify the assumption's validity in practice; second, even if such a lower bound holds, there is no straightforward method to determine the magnitude of the lower bound. It can be seen from \Cref{tab:results} that the suboptimality bound of R2PVI matches that of DRVI-L \citep{ma2022distributionally} in terms of $d$ and $H$. However, our result depends on $\lambda$ which is the regularization parameter and can be arbitrarily chosen,  while the result of DRVI-L depends on $\underline{\beta}$ which can be extremely small such that $\sqrt{\underline{\beta}}e^{H/\underline{\beta}}\gg \sqrt{\lambda}e^{H/\lambda}$. Moreover, \citet{zhang2024soft}\footnote{\citet{zhang2024soft} studied the infinite horizon RRMDP with a discounted factor $\gamma$, we replace the effective horizon length $\frac{1}{1-\gamma}$ by the horizon length $H$ in the finite horizon setting.} studied the RRMDP with the regularization term defined by the KL-divergence in their Theorem 5 and the suboptimality bound also depends on the term $\sqrt{\lambda}e^{H/\lambda}$. Further, comparing the bounds of P2MPO \citep{blanchet2024double} and R2PVI, we can qualitatively conclude that the regularization parameter $\lambda$ in $d$-RRMDPs plays a role analogous to $1/\rho$ in $d$-DRMDPs. This relation aligns with the intuition that a smaller $\lambda$ in $d$-RRMDPs or a larger $\rho$ in $d$-DRMDPs can induce a more robust policy.

\begin{table*}[ht]
    \centering  
    \caption{
    Comparison of the suboptimality gap between this and previous works. The $^{\star}$ symbol denotes results that require an additional assumption (Assumption 4.4 of \citet{ma2022distributionally} and Assumption F.1 of \citet{blanchet2024double}) on the KL dual variable, an assumption not required by our R2PVI algorithm. The parameter $\rho$ represents the uncertainty level in DRMDPs, while $\lambda$ represents the regularization term in RRMDPs. The Coverage column indicates the assumption used to derive the suboptimality gap: the robust partial coverage assumption refers to Assumption A.2 of \citet{blanchet2024double}, and the regularized partial coverage assumption represents \Cref{robust partial coverage assumption}.
    \label{tab:results}}
    \resizebox{0.85\textwidth}{!}{%
    \begin{tabular}{llcccc}
        \toprule
        {Algorithm} & {Setting} &{Divergence} & {Coverage} & {Suboptimality Gap}  \\
        \midrule
         \makecell[l]{DRPVI \\ \citep{liu2024minimax}} & $d$-DRMDP & TV & full &  $\Tilde{O}(dH^2 K^{-1/2})$ \\
        \makecell[l]{DROP \\ \citep{wang2024sample}} & $d$-DRMDP & TV & robust partial &  $\Tilde{O}(d^{3/2}H^2 K^{-1/2})$ \\
        \makecell[l]{P2MPO (TV) \\ \citep{blanchet2024double}}& $d$-DRMDP & TV & robust partial &  $\Tilde{O}(d^{2}H^2 K^{-1/2})$ \\
        \rowcolor{lightblue} {R2PVI-TV \textbf{(ours)}} & $d$-RRMDP & TV & regularized partial &  $\Tilde{O}(d^{2}H^2 K^{-1/2})$ \\
        \midrule
        \makecell[l]{DRVI-L \\ \citep{ma2022distributionally}} &$d$-DRMDP & KL & robust partial& $\Tilde{O}(\sqrt{\underline{\beta}} e^{H/\underline{\beta}}d^{2}H^{3/2} K^{-1/2})^{\star}$ \\
        \makecell[l]{P2MPO (KL) \\ \citep{blanchet2024double}}& $d$-DRMDP & KL & robust partial & $\Tilde{O}(e^{H/\underline{\beta}}d^{2}H^{2}\rho^{-1} K^{-1/2})^{\star}$\\
        \rowcolor{lightblue} {R2PVI-KL \textbf{(ours)}} & $d$-RRMDP & KL & regularized partial &  $\Tilde{O}(\sqrt{\lambda} e^{H/ \lambda} d^2 H^{3/2} K^{-1/2})$ \\
        \midrule

       \rowcolor{lightblue} {R2PVI-$\chi^2$ \textbf{(ours)}} & $d$-RRMDP & $\chi^2$ & regularized partial &  $\Tilde{O}(d^2H^2(1+ H/\lambda)K^{-1/2})$ \\
        \bottomrule
        
        \end{tabular}}%
             
\end{table*}

For the case with $\chi^2$ divergence, our bound is the first result in literature. Compared with the TV divergence, the complexity is higher due to the more complex geometry and dual formulation of $\chi^2$ divergence. This observation aligns with findings of tabular DRMDPs with TV and $\chi^2$ divergence defined uncertainty sets \citep{shi2024curious}. While existing works have focused on the $(s,a)$-rectangular structured regularization, our work fills the theoretical gap in RRMDPs by introducing the $d$-rectangular structured regularization, a contribution that may be of independent interest.

\subsection{Information-Theoretic Lower Bound}
We highlight that in \Cref{thm: instance-dependent upper bounds}, suboptimality bounds under cases with TV, KL, $\chi^2$-divergence share the same term $\sup_{P\in\cU^{\lambda}(P^0)}
 \sum_{h=1}^H\EE^{\pi^\star, P}\big[\sum_{i=1}^d\Vert \phi_i(s_h,a_h)\mathbf{1}_i\Vert_{\bLambda_h^{-1}} |s_1=s\big]$.
 In this section, we establish information theoretic lower bounds to show that this term is intrinsic in offline $d$-RRMDPs.

 In order to give a formal presentation of the information-theoretical lower bound, we define $\cM$ as a class of $d$-RRMDPs and  $\text{SubOpt}(M,\hat{\pi},s,\rho)$ as the suboptimality gap specific to one $d$-RRMDP instance $M\in\cM$. We state the information-theoretic lower bound in the following theorem.
 \begin{theorem}
    \label{th:lower bound}
    Let $K>\max\{\tilde{O}(d^6), \tilde{O}(d^3H^2/\lambda^2)\}$ be the sample size, where we have regularizer $\lambda$, dimension $d$, horizon length $H$.  There exists a class of $d$-rectangular linear RRMDPs $\cM$ and  an offline dataset $\cD$ of size $K$ such that for any $\delta\in(0,1)$, $s\in\cS$, divergence $D$ among $D_{\text{TV}}, D_{\text{KL}}$ and $D_{\chi^2}$, with probability at least $1-\delta$, we have $\inf_{\hat{\pi}}\sup_{M\in\cM} \text{SubOpt}(M, \hat{\pi},s,\lambda, D) \geq c \cdot \sup_{P\in\cU^{\lambda}(P^0)}
    \sum_{h=1}^H\EE^{\pi^\star, P}[\sum_{i=1}^d\Vert \phi_i(s_h,a_h)\mathbf{1}_i\Vert_{\bLambda_h^{-1}} \big|s_1=s]$, 
    where $c$ is a universal constant.
\end{theorem}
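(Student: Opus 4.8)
The plan is to construct a hard instance class $\cM$ of $d$-RRMDPs by embedding a collection of bandit-like problems into the linear structure, following the standard lower-bound recipe for offline linear (distributionally robust) MDPs. Concretely, I would adapt the construction used for $d$-DRMDPs with the TV uncertainty set (e.g., the lower bound in \citet{liu2024minimax}), since the instance-dependent upper bound in \Cref{thm: instance-dependent upper bounds} already parallels theirs. The state space would consist of a handful of absorbing states with distinct rewards, and the feature map $\bphi$ would be chosen so that each coordinate $i\in[d]$ independently controls the transition into a "good" vs. "bad" absorbing state via a Bernoulli-type parameter $p_i$ that differs across instances in $\cM$ by an amount $\Delta$. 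The behavior policy $\pi^b$ is chosen to induce a prescribed (possibly ill-conditioned) covariance $\bLambda_h$, so that the quantity $\sum_{i=1}^d \|\phi_i(s,a)\mathbf{1}_i\|_{\bLambda_h^{-1}}$ appearing in the bound is directly governed by how many samples land in each feature direction.

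The key steps, in order, are as follows. First, I would compute the robust regularized value functions for the instance class in closed form using the dual formulas from \Cref{prop:the regularized duality under TV divergence}, \Cref{prop:the regularized duality under KL divergence}, and \Cref{prop:the regularized duality under x2 divergence}; because the states are absorbing and rewards are simple, each of these reduces to an explicit scalar function of $p_i$, $\lambda$, and the reward gap, and one verifies that distinguishing the optimal action requires estimating $p_i$ to accuracy $\asymp\Delta$. Second, I would identify the robustly admissible set $\cU^\lambda(P^0)$ for this instance and verify that $\sup_{P\in\cU^\lambda(P^0)}\sum_{h=1}^H\EE^{\pi^\star,P}[\sum_{i=1}^d\|\phi_i(s_h,a_h)\mathbf{1}_i\|_{\bLambda_h^{-1}}\mid s_1=s]$ evaluates (up to constants) to $\sqrt{d/(K\Lambda_{\min})}$-type quantities matching the right-hand side of the claimed bound. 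Third, I would invoke a standard information-theoretic argument: over the offline dataset of size $K$, the KL divergence between the data distributions under two neighboring instances (differing only in one coordinate's parameter by $\Delta$) is $O(K\Delta^2/(\text{coverage}))$, so by Le Cam's two-point method (or Assouad's lemma to get the sum over $d$ coordinates), no estimator can simultaneously identify all coordinates' optimal actions unless $K$ is large; choosing $\Delta$ at the critical scale yields the matching lower bound on $\inf_{\hat\pi}\sup_{M\in\cM}\text{SubOpt}$. The sample-size condition $K>\max\{\tilde O(d^6),\tilde O(d^3H^2/\lambda^2)\}$ enters to ensure the constructed $\bLambda_h$ concentrates around its expectation and that $\Delta$ stays in the regime where the dual formulas behave linearly.

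The technical wrinkle specific to this paper — and the part I expect to be the main obstacle — is handling all three divergences ($D_{\text{TV}}, D_{\text{KL}}, D_{\chi^2}$) with a \emph{single} instance class. Because the three dual formulas have different functional forms (a clipped expectation for TV, a log-sum-exp for KL, a mean-minus-variance term for $\chi^2$), the map from the Bernoulli parameter $p_i$ to the robust value, and hence the "effective gap" that the estimator must resolve, is divergence-dependent. I would need to choose the reward magnitudes and the range of $p_i$ so that for each of the three divergences the induced gap is simultaneously $\Theta$ of the target quantity; this likely forces the regularizer $\lambda$ to appear inside the construction (e.g., rewards scaled by factors involving $\lambda$ and $H/\lambda$), and the resulting case analysis must be done carefully for each divergence. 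A secondary difficulty is ensuring the $d$-rectangular (i.e., per-coordinate) structure is genuinely exploited so that the lower bound carries the full $\sum_{i=1}^d$ rather than a single $\|\cdot\|_{\bLambda_h^{-1}}$ term — this is exactly where Assouad's lemma over the $d$ independent coordinates, rather than a plain two-point argument, becomes essential, and it requires the feature design to make the $d$ sub-problems statistically independent given the dataset.
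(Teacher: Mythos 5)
Your high-level recipe (hard instance class embedded in the linear structure, Assouad over the $d$ coordinates, match against the $\bLambda_h^{-1}$-norm quantity) is the right family of arguments, but the construction you propose differs from the paper's in a way that leaves the hardest step unproven. You make the instances differ in the \emph{transition} parameters $p_i$, so that the learner must estimate transitions to accuracy $\Delta$, and you yourself flag the resulting obstacle: the map from $p_i$ to the robust regularized value is divergence-dependent (clipped mean for TV, log-sum-exp for KL, mean-minus-variance for $\chi^2$), and you would need to show the induced gap is simultaneously of the right order for all three divergences. That step is the actual content of the theorem and is not resolved in your outline; in the regularized setting the dependence on the perturbed nominal kernel is nonlinear and $\lambda$-dependent, so "choosing reward magnitudes so the gap is $\Theta$ of the target for all three divergences" is precisely where a proof would have to do the work. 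A second, related gap: if the nominal kernels differ across instances, then the robustly admissible set $\cU^\lambda(P^0)$ and the covariance-based quantity on the right-hand side are themselves instance-dependent, so it is not clear your bound can be stated against a single $\sup_{P\in\cU^\lambda(P^0)}\sum_h\EE^{\pi^\star,P}[\sum_i\|\phi_i\mathbf{1}_i\|_{\bLambda_h^{-1}}]$ as the theorem requires.

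The paper sidesteps both issues by keeping the nominal transition common to all instances and perturbing only the \emph{rewards}. Concretely, it uses a two-state instance ($s_1$ good, $s_2$ absorbing fail state) where the nominal kernel sends $s_1\to s_2$ with a small probability $\epsilon$ independent of the action and of the instance, and parameterizes the family by a Boolean vector $\bxi\in\{-1,1\}^{dH}$ entering only the reward vectors $\btheta_h$, with Gaussian reward noise. The divergence dependence is then confined to a single sandwich lemma: for each divergence the robust regularized value is within an error $f_h^\lambda(\epsilon)$ (zero for TV, $O((H-h)\lambda\epsilon)$ for KL and $\chi^2$) of the same linear-in-$\bxi$ expression, and $\epsilon$ is chosen divergence-dependently (this is where the condition $K\gtrsim d^3H^2/\lambda^2$ comes from) so that the error is dominated by the gap $\delta=d^{3/2}/\sqrt{2K}$. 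After that, the Assouad step uses KL between Gaussian reward distributions only—identical across the three divergences—and the final matching to the target quantity uses a dataset-concentration lemma (requiring $K\gtrsim d^6$) giving $\sum_h\EE^{\pi^\star,P}[\sum_i\|\phi_i\mathbf{1}_i\|_{\bLambda_h^{-1}}]\le 4d^{3/2}H/\sqrt{K}$ uniformly over $P$. The $\epsilon$-leak to $s_2$ is itself essential: it keeps $P^0_h(\cdot|s_1,a)$ fully supported so the KL and $\chi^2$ regularizers are finite and the robust problem is nontrivial, which is exactly why the TV-only construction of the prior work could not be reused verbatim. If you want to salvage your transition-perturbation route, you would have to carry out the divergence-by-divergence gap analysis you deferred and additionally argue that the right-hand side quantity can be made instance-uniform; the paper's reward-perturbation design is the cleaner way to get a single instance class covering all three divergences.
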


\Cref{th:lower bound} is a universal information theoretic lower bound for $d$-RRDMPs with all three divergences studied in \Cref{sec:main theoretical results}. \Cref{th:lower bound} shows that the instance-dependent term is actually intrinsic to the offline $d$-RRDMPs, and \Cref{alg:meta} is near-optimal up to a factor $\beta$, for which the definition varies among different divergence metric $D$ as shown in \Cref{thm: instance-dependent upper bounds}. The proof outline of \Cref{th:lower bound} is inspired by that of Theorem 6.1 in \citet{liu2024minimax}, but here we need careful treatment on bounding the robust regularized value function by duality under different choices of $f$-divergences. 
We provide more details on the hard instance construction, the proof techniques, and the comparison with existing results in \Cref{sec:proof of lower bound}. 

\begin{figure*}[h]
    \centering
    \subfigure[$\lambda = 0.1$]{
    \includegraphics[width=0.23\linewidth]{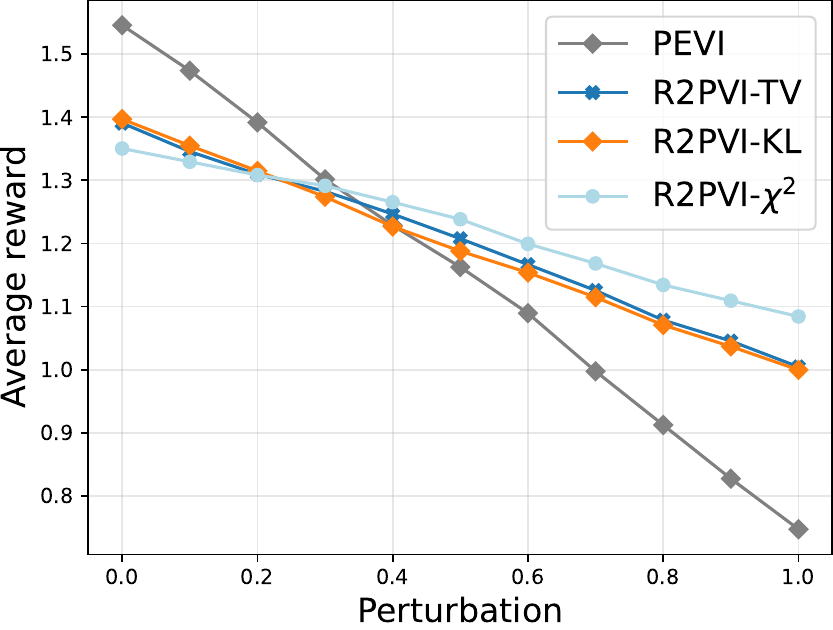}
    \label{fig:linear mdp - comparison}
    }
    \subfigure[R2PVI]{
        \includegraphics[width=0.23\linewidth]{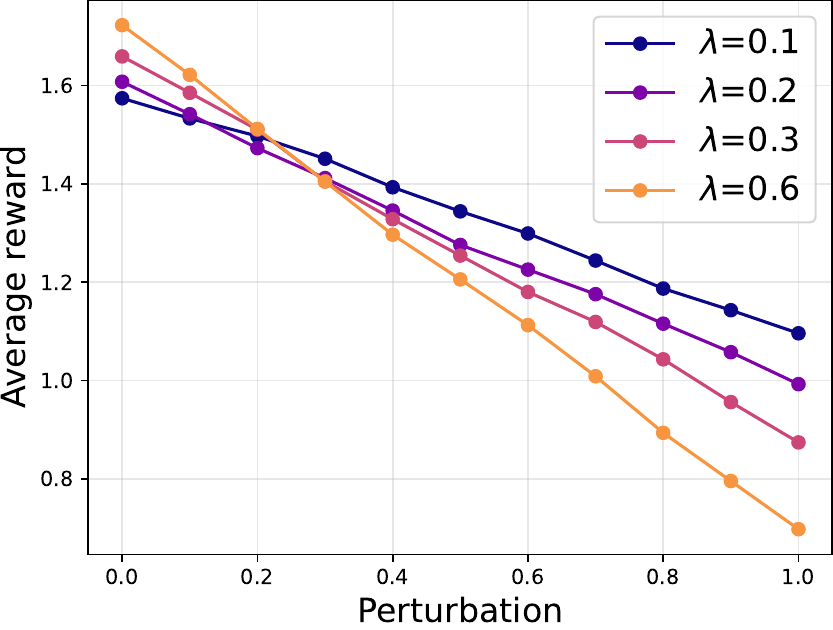}
        \label{fig:linear mdp - lambda}
    }   
    \subfigure[$q=0.9$]{
        \includegraphics[width=0.23\linewidth]{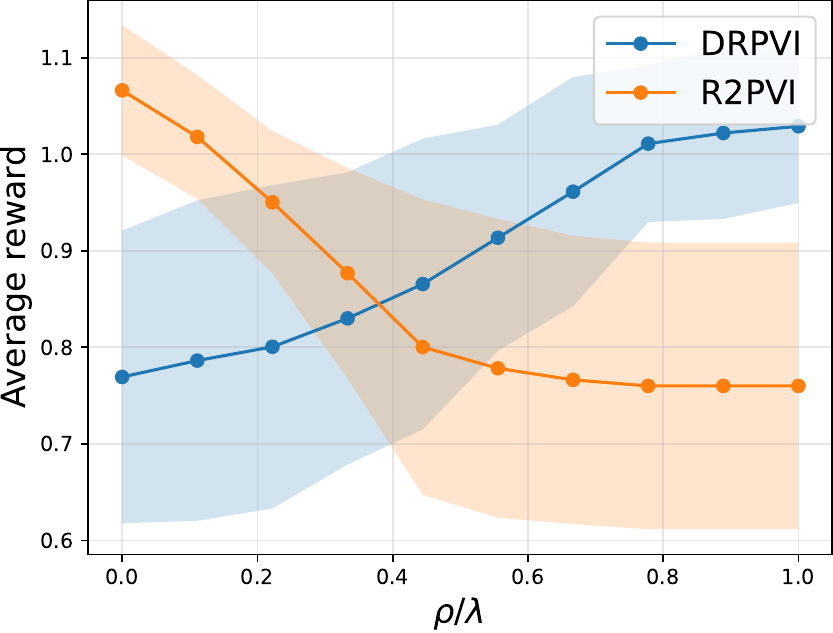}
    \label{fig:linear mdp - rho and lambda}
    }  
     \subfigure[(up to down) $\lambda = 1, 3, 5, \rho = 0.2, 0.1, 0.025$.]{
        \includegraphics[width=0.23\linewidth]{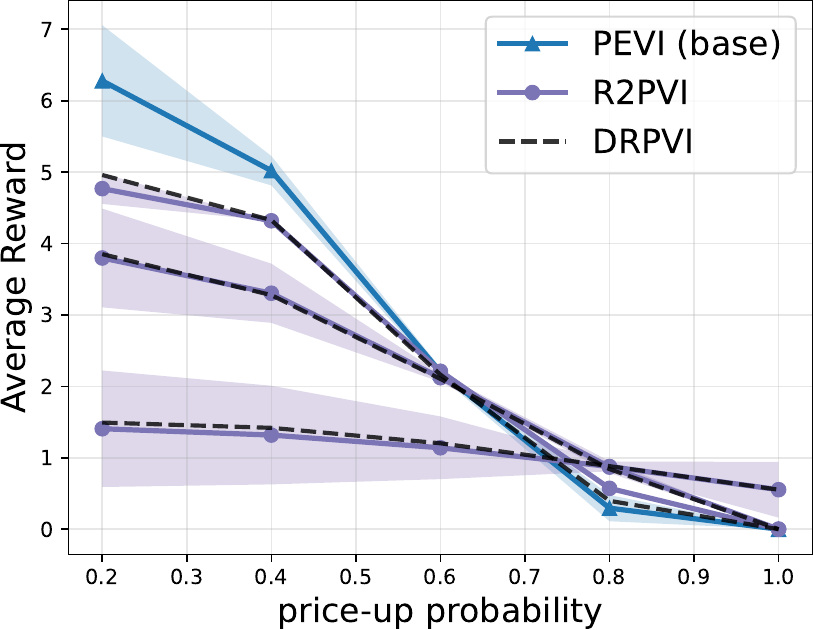}
    \label{fig: robustness}
    }
    \caption{Simulated results for linear MDP. In \Cref{fig:linear mdp - comparison} and \Cref{fig:linear mdp - lambda}, the $x$-axis refers to the perturbation in the testing environment. In \Cref{fig:linear mdp - rho and lambda}, the $x$-axis represents different robust level $\rho$ and regularized penalty $\lambda$, respectively. \Cref{fig: robustness} shows the robustness of algorithms under different robust level $\rho$ (DRPVI) or regularization penalty $\lambda$ (R2PVI). }
        \label{fig: simulated MDP}
\end{figure*}
\begin{figure}[ht]
    \centering
    \subfigure[execution time w.r.t N.]{
\includegraphics[width=0.46\linewidth]
    {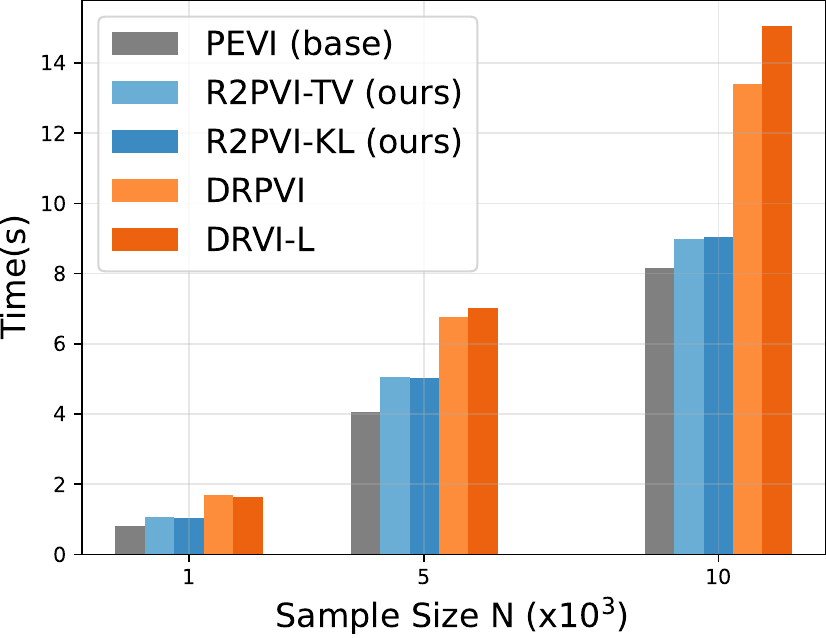}
    \label{fig: excution time w.r.t N}
    }
    \subfigure[execution time w.r.t d.]{
    \includegraphics[width=0.46\linewidth]{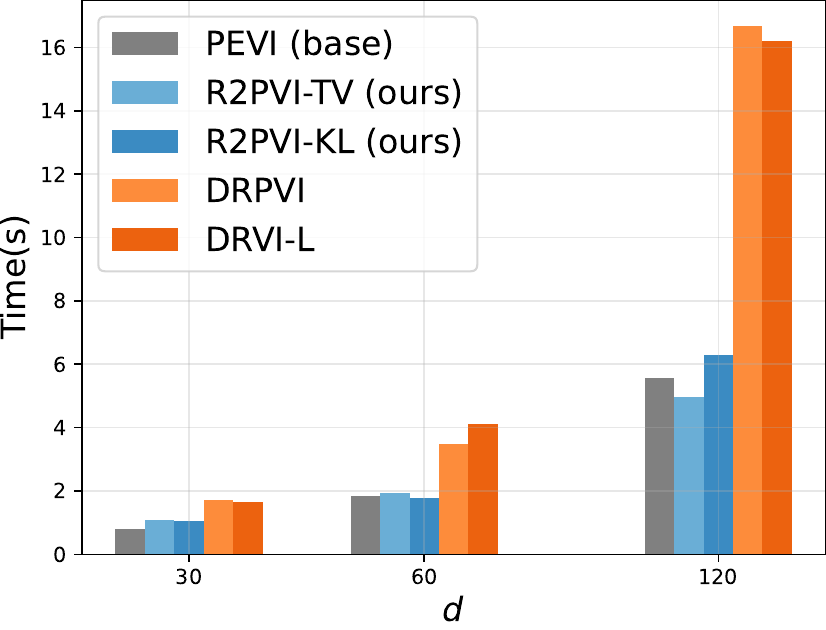}
    \label{fig:exution time w.r.t. d}
    }
    \caption{Simulation results for the simulated American put option task. \Cref{fig: excution time w.r.t N} shows the computation time of \algname\ with respect to the sample size $N$. 
    \Cref{fig:exution time w.r.t. d} shows the computation time of algorithms with respect to the feature dimension $d$.
    }
    \label{fig: American put option}
\end{figure}

\section{Experiment}
\label{sec:experiment}
In this section, we conduct numerical experiments to explore (1) the robustness of R2PVI regarding dynamics shifts, (2) how the regularizer $\lambda$ affects the robustness of R2PVI, and (3) the computation cost of R2PVI. We evaluate our algorithm in two off-dynamics problems that have been used in the literature \citep{ma2022distributionally,liu2024distributionally}. All experiments are conducted on a machine with an 11th Gen Intel(R) Core(TM) i5-11300H @ 3.10GHz processor, featuring 8 logical CPUs, 4 physical cores, and 2 threads per core. The implementation of our \algname\ algorithm is available at https://github.com/panxulab/Robust-Regularized-Pessimistic-Value-Iteration.

\paragraph{Baselines.}We compare our algorithms with three types of baseline frameworks: (1) non-robust pessimism-based algorithm: PEVI~\citep{jin2021pessimism}, (2) algorithms for $d$-DRMDPs with TV divergence defined uncertainty sets: DRPVI~\citep{liu2024minimax}, (3) algorithms for $d$-DRMDPs with KL divergence defined uncertainty sets: DRVI-L~\citep{ma2022distributionally}. We do not implement P2MPO and DROP mentioned in \Cref{tab:results} in our experiment, due to the lack of code base and numerical experiment in their works.

\subsection{Simulated Linear MDPs}
We borrow the simulated linear MDP constructed in \citet{liu2024distributionally} and adapt it to the offline setting. We set the behavior policy $\pi^b$ such that it chooses actions uniformly at random. The sample size of the offline dataset is set to 100. For completeness, we present more details on the experiment set up and results in \Cref{supp:numerical experiment}.

In \Cref{fig:linear mdp - comparison}, we compare \algname\ with its non-robust  counterpart PEVI \citep{jin2021pessimism}. We conclude that PEVI outperforms \algname\ when the perturbation of the environment is small, but underperforms  when the environment encounters a significant shift, which verifies the robustness of \algname. The regularizer $\lambda$ controls the extent of robustness of \algname\ by determining the magnitude of the penalty as shown in  \Cref{prop:regularized Robust Bellman equation}. By \Cref{fig:linear mdp - lambda}, we conclude that a smaller $\lambda$ leads to a more robust policy. To illustrate the relation between the $d$-RRMDP and the $d$-DRMDP, we fix a target environment, and then test \algname\ with different $\lambda$ and DRPVI \citep{liu2024minimax} with different $\rho$. We find from \Cref{fig:linear mdp - rho and lambda} that the ranges of the average reward are about the same for the two algorithms, though the behaviors w.r.t. $\lambda$ and $\rho$ are opposite. Thus, we verify that the regularizer $\lambda$ plays a similar role in the RRMDP as the inverted robustness parameter $1/\rho$ in the DRMDP.
\subsection{Simulated American Put Option}
In this section, we test our algorithm in a simulated American Put Option environment \citep{tamar2014scaling, zhou2021finite} that does not belong to the $d$-rectangular linear RRMDP. This environment is a finite horizon MDP with $H=20$, and 
is controlled by a hyperparameter $p_0$, which is set to be 0.5 in the nominal environment. We collect the offline data from the nominal environment by a uniformly random behavior policy. 
An agent uses the collected offline dataset to learn a policy which decides at each state whether or not to exercise the option. To implement our algorithm, we use a manually designed feature mapping of dimension $d$. For more details, we refer readers to \Cref{experiment: APO-setup}.

All experiment results are shown in \Cref{fig: American put option}. In particular, 
from \Cref{fig: excution time w.r.t N} and \Cref{fig:exution time w.r.t. d}, we can conclude that the computation cost of \algname\ is as low as its non-robust counterpart PEVI \citep{jin2021pessimism}, and improves that of DRPVI \citep{liu2024minimax} and DRVI-L \citep{ma2022distributionally} designed for 
the $d$-DRMDP. This is due to the closed form duality of TV and  KL under the $d$-RRMDP framework. From \Cref{fig: robustness}, we conclude that \algname\ not only demonstrates robustness to environment perturbations but also matches DRPVI's performance for appropriate values of the robust regularizer $\lambda$ and uncertainty level $\rho$.

\section{Conclusion}
We introduced the $d$-rectangular linear Robust Regularized Markov Decision Process ($d$-RRMDP) framework to address limitations of the $d$-rectangular DRMDP framework for robust policy learning in literature, improving both theoretical robustness and computational efficiency. We developed \algname, a provably effective algorithm for learning robust policies from offline datasets using $f$-divergence-based regularization. Our results highlight the advantages of $d$-RRMDPs, particularly in simplifying the duality oracle. Experiments confirm \algname's robustness and efficiency. It remains an intriguing open question to improve the current upper and lower bounds to study the fundamental hardness of $d$-RRMDPs. 

\section*{Impact Statement}
This paper presents work whose goal is to advance the field
of Machine Learning. There are many potential societal
consequences of our work, none which we feel must be
specifically highlighted here.

\section*{Acknowledgements}
Z. Liu and P. Xu are supported in part by the National Science Foundation (DMS-2323112) and the Whitehead Scholars Program at the Duke University School of Medicine.

\bibliography{referenceRMDP,reference}
\bibliographystyle{icml2025}

\newpage
\appendix
\onecolumn
\section{Additional Details on Experiments}
\label{supp:numerical experiment}
In this section, we provide details on experiment setup. %
\subsection{Simulated Linear MDPs}
\paragraph{Construction of the Simulated Linear MDP} We leverage the simulated linear MDP instance proposed by \citet{liu2024distributionally}. 
The state space is $\cS = \{x_1,\cdots,x_5\}$ and the action space is $\cA=\{-1,1\}^4\subset\RR^4$. At each episode, the initial state is always $x_1$. From $x_1$, the next state can be $x_2, x_4, x_5$ with probability defined on the arrows. Both $x_4$ and $x_5$ are absorbing states. $x_4$ is the fail state with 0 reward and $x_5$ is the goal state with reward 1.
The hyperparameter $\bxi\in\RR^4$ is designed to determine the reward functions and transition probabilities and $\delta$ is the parameter defined to determine the environment. We perturb the transition probability at the initial stage to construct the source environment. The extend of perturbation is controlled by the hyperparameter $q\in(0,1)$. For more details on the simulated linear DRMDP, we refer readers to the Supplementary A.1 in \citet{liu2024distributionally}.
\paragraph{Hyperparameters} The hyper-parameters in our setting are shown in \Cref{table: Hyper-parameters}. The horizon is 3, the $\beta, \gamma, \delta$ are set the same in all tasks, the $\| \bxi \|_1$ is set as 0.3, 0.2, 0.1 in \Cref{fig: simulated MDP} in order to illustrate the versatility of our algorithms.
\begin{table}[h]
    \centering
    \caption{Hyper-parameters. \label{table: Hyper-parameters}}    
    \begin{tabular}{lc}
        \toprule
        {Hyper-parameters} & {Value} \\
        \midrule
        $H$ (Horizon)& 3 \\
        $\beta$ (pessimism parameter)& 1 \\
        $\gamma$ & 0.1 \\
        $\delta$ & 0.3 \\
        $\|\bxi\|_1$ & 0.3, 0.2, 0.1 \\
        \bottomrule
    \end{tabular}
\end{table}

\subsection{Simulated American Put Option}
\label{experiment: APO-setup}
\paragraph{Construction of the Simulated American Put Option} In each episode, there are $H=20$ stages, and each state $h$, the dynamics evolves following the Bernoulli distribution:
\begin{align}
\label{American put option transit} 
    s_{h+1} = 
    \begin{cases}
        1.02s_h, \text{  w.p } p_0 \\
        0.98 s_h, \text{  w.p } 1-p_0 
    \end{cases},
\end{align}
where $p_0 \in (0,1)$ is the probability of price up.
At each step, the agent has two actions to take: exercise the option $a_h = 1$ or not exercise $a=0$. If exercising the option $a_h =0$, the agent will obtain reward $r_h = \max \{0, 100-s_h\}$ and the state comes to an end. If not exercising the option $a_h=1$, The state will continue to transit based on \eqref{American put option transit} and no reward will be received.
To implement our algorithms, we use  the following feature mapping: 
\begin{align*}
\phi\left(s_h, a\right)= \begin{cases}{\left[\varphi_1\left(s_h\right),\cdots, \varphi_d\left(s_h\right), 0\right]} & \text { if } a=1 \\ {\left[0, \cdots, 0, \max \left\{0,100-s_h\right\}\right]} & \text { if } a=0
\end{cases},
\end{align*}
where $\varphi_i(s)=\max \left\{0,1-\left|s_h-s_i\right| / \Delta\right\}$,  $\{s_i\}_{i=1}^d$ are anchor states, $s_1=80$  $s_{i+1}-s_i=\Delta$ and $\Delta = 60/d$.
For more details on the simulated American put option environment, we refer readers to the Appendix C of \citet{ma2022distributionally}.
\paragraph{Offline Dataset and Hyperparameters} We set $p_0=0.5$ in the nominal environment, from which trajectories are collected by fixed behavior policy, which chooses $a_h = 0$. The $\beta = 0.1$ and $\gamma = 1$ are set hyper-parameters in all tasks. For the time efficiency comparison in \Cref{fig: excution time w.r.t N} and \Cref{fig:exution time w.r.t. d}, we counted the time it took for the agent to train once and repeated 5 times to take the average.

\section{Proof of Properties of $d$-RRMDPs}
In this section, we provide the proofs of  results in \Cref{sec:Problem formulation,sec:Robust Regularized Pessimistic Value Iteration}, namely, the robust regularized Bellman equation, the existence of the optimal robust policy,
and the linear representation of the robust regularized Q-function under the $d$-rectangular linear RRMDP.
\subsection{Proof of \Cref{prop:regularized Robust Bellman equation}}
\begin{proof}
 We prove the a stronger proposition by induction from the last stage $H$. Specifically, besides the equations in \Cref{prop:regularized Robust Bellman equation} hold, we further assume that there exist transition kernels $\{ \hat{\bmu}_t \}_{t=1}^H, \hat{P}_t = \la \bphi, \hat{\bmu}_t \ra$, such that for any $(h,s)\in[H]\times\cS$,
\begin{align}
    V_h^{\pi, \lambda}(s) &= \EE^{\{\hat{P}_t \}_{t=h}^H} \bigg[\sum_{t = h}^H \big[r_t(s_t, a_t) + \lambda \la\bphi(s_t, a_t),\bD(\hat{\bmu}_t||\bmu_t^0) \ra \Big| s_h = s, \pi \bigg]. \label{existence of transition kernel}
\end{align}As there is no transitional kernel involved, the base case holds trivially. Suppose the conclusion holds for stage $h+1$, that is to say, there exists $\hat{P}_{t}, t=h+1, h+2,\cdots, H$ such that
 \begin{align*}
     V_{h+1}^{\pi,\lambda}(s) = \EE^{\{\hat{P}_i\}_{i=h+1}^H} \bigg[ \sum_{t = h+1}^H \big[r_t(s_t, a_t) + \lambda \la\bphi(s_t, a_t),\bD(\hat{\bmu}_t||\bmu_t^0)\big] 
 \Big| s_{h+1}= s, \pi \bigg].
 \end{align*}
 For the case of $h$, recall the definition of $Q_h^{\pi}$, we have 
 \begin{align}
     Q^{\pi, \lambda}_h(s,a) 
     & =\inf_{\bmu_t \in \Delta(\cS)^d,P_t=\la \bphi,\bmu_t \ra} \EE^{\{P_t\}_{t=h}^H} \bigg[\sum_{t = h}^H \big[r_t(s_t, a_t) + \lambda \la\bphi(s_t, a_t),\bD(\bmu_t||\bmu_t^0) \ra\big] \Big| s_h = s, a_h = a, \pi \bigg] \nonumber\\ 
  & = r_h(s,a) + \inf_{\bmu_t \in \Delta(\cS)^d,P_t=\la \bphi,\bmu_t \ra} \lambda \la\bphi(s_h, a_h),\bD(\bmu_h||\bmu_h^0)\ra  \nonumber\\   
  &\quad + \int_{\cS} P_{h}(ds'| s,a)\EE^{\{P_t\}_{t=h+1}^H} \bigg[ \sum_{t = h+1}^H \big[r_t(s_t, a_t) + \lambda \la\bphi(s_t, a_t),\bD(\bmu_t||\bmu_t^0)\ra\big] 
 \Big| s_{h+1} = s' , \pi \bigg] \nonumber\\
  & \leq r_h(s,a) + \inf_{\bmu_h \in \Delta(\cS)^d,P_h=\la \bphi,\bmu_h \ra}\lambda \la\bphi(s_h, a_h),\bD(\bmu_h||\bmu_h^0)\ra  \nonumber\\
  &\quad + \int_{\cS} P_{h}(ds'| s,a)\EE^{\{\hat{P}_t\}_{t=h+1}^H} \bigg[ \sum_{t = h+1}^H \big[r_t(s_t, a_t) + \lambda \la\bphi(s_t, a_t),\bD(\hat{\bmu}_t||\bmu_t^0)\ra\big] 
 \Big| s_{h+1} = s' , \pi \bigg] \nonumber \\
  & = r_h(s,a) + \inf_{\bmu_h \in \Delta(\cS)^d,P_h=\la \bphi,\bmu_h \ra}  \lambda \la\bphi(s_h, a_h),\bD(\bmu_h||\bmu_h^0)\ra + \EE_{s'\sim P_h(\cdot |s,a)}[V_{h+1}^{\pi, \lambda}(s')] \label{B2},
 \end{align}
    where \eqref{B2} follows by the 
    inductive hypothesis of $V_{h+1}^{\pi, \lambda}(s)$. On the other hand, we can lower bound $Q_h^{\pi, \lambda}(s,a)$ as
    \begin{align}
        &Q^{\pi, \lambda}_h(s,a) \nonumber \\
    & = r_h(s,a) + \inf_{\bmu_t \in \Delta(\cS)^d,P_t=\la \bphi,\bmu_t \ra} \lambda \la\bphi(s_h, a_h),\bD(\bmu_h||\bmu_h^0)\ra  \nonumber\\   
  &\quad + \int_{\cS} P_{h}(ds'| s,a)\EE^{\{P_t\}_{t=h+1}^H} \bigg[ \sum_{t = h+1}^H \big[r_t(s_t, a_t) + \lambda \la\bphi(s_t, a_t),\bD(\bmu_t||\bmu_t^0)\ra\big] 
 \Big| s_{h+1} = s' , \pi \bigg] \nonumber\\
    &\geq  r_h(s,a) + \inf_{\bmu_h \in \Delta(\cS)^d,P_h=\la \bphi,\bmu_h \ra} \lambda \la\bphi(s_h, a_h),\bD(\bmu_h||\bmu_h^0)\ra  \label{B3}\\
    & \quad + \int_{\cS} P_{h}^\pi(ds'| s,a) \inf_{\bmu_t \in \Delta(\cS)^d,P_t=\la \bphi,\bmu_t \ra} \EE^{\{P_t\}_{t=h+1}^H} \bigg[ \sum_{t = h+1}^H \big[r_t(s_t, a_t) + \lambda \la\bphi(s_t, a_t),\bD(\bmu_t||\bmu_t^0)\ra\big] 
 \Big| s_{h+1} = s' , \pi \bigg] \nonumber\\
    & = r_h(s,a) + \inf_{\bmu_h \in \Delta(\cS)^d,P_h=\la \bphi,\bmu_h \ra} \lambda \la\bphi(s_h, a_h),\bD(\bmu_h||\bmu_h^0)\ra + \EE_{s'\sim P_h(\cdot |s,a)}[V_{h+1}^{\pi, \lambda}(s')] \label{B4},
    \end{align} 
    where \eqref{B3} follows by the Fatou's lemma, \eqref{B4} follows by the definition of $V_{h+1}^{\pi, \lambda}(s)$. Hence, combining the two above inequalities, we conclude the proof of the first equation. Next we focus on the proof of the (\ref{existence of transition kernel}), by which we aim to proof the existence of transition kernel $\{\hat{P}_t\}_{t=h}^H$. By the fact that
    \begin{align*}
        Q_{h}^{\pi, \lambda}(s, a) &= r_h(s, a) + \inf_{\bmu_h \in \Delta(\cS)^d,P_h=\la \bphi,\bmu_h \ra}\big[\EE_{s'\sim P_h(\cdot|s,a)}\big[V_{h+1}^{\pi, \lambda}(s')\big]+ \lambda \la\bphi(s, a),\bD(\bmu_h||\bmu_h^0) \ra\big],
    \end{align*}
    we notice that the $\inf$ problem above is constraint by the distance $D$. Therefore by Lagrange duality and the closeness of distribution $\Delta(\cS)$, there exists $\hat{\bmu}_h \in \Delta(\cS)^d, \hat{P}_h = \la \bphi, \hat{\bmu}_h \ra$ such that
    \begin{align}
        Q^{\pi, \lambda}_h(s,a) =  r_h(s, a) + \EE_{s'\sim \hat{P}_h(\cdot|s,a)}\big[V_{h+1}^{\pi, \lambda}(s')\big]+ \lambda \la\bphi(s, a),\bD(\hat{\bmu}_h||\bmu_h^0) \ra.
        \label{(A.1)}
    \end{align}
    Now it remains to proof \eqref{regularized robust bellman equation: 2}. By the definition of $V_{h}^{\pi, \lambda}(s)$, we have
    \begin{align}
     &V_{h}^{\pi, \lambda}(s) \\
     &= \inf_{\bmu_t \in \Delta(\cS)^d,P_t=\la \bphi,\bmu_t \ra} \EE^{\{P_t\}_{t=h}^H} \bigg[\sum_{t = h}^H \big[r_t(s_t, a_t) + \lambda \la\bphi(s_t, a_t),\bD(\bmu_t||\bmu_t^0) \ra\big] \Big| s_h = s, \pi \bigg] \nonumber\\
    &=  \inf_{\bmu_t \in \Delta(\cS)^d,P_t=\la \bphi,\bmu_t \ra}\sum_{a \in \cA} \pi(a|s) \EE^{\{P_t\}_{t=h}^H} \bigg[\sum_{t = h}^H \big[r_t(s_t, a_t) + \lambda \la\bphi(s_t, a_t),\bD(\bmu_t||\bmu_t^0) \ra\big] \Big| s_h = s, a_h =a, \pi \bigg] \nonumber\\
    & \leq \sum_{a \in \cA} \pi(a|s) \EE^{\{\hat{P}_t\}_{t=h}^H} \bigg[ \sum_{t = h}^H \big[r_t(s_t, a_t) + \lambda \la\bphi(s_t, a_t),\bD(\hat{\bmu}_t||\bmu_t^0)\ra\big] 
    \Big| s_{h}= s, a_h = a, \pi \bigg] \nonumber\\
    & = \sum_{a \in \cA} \pi(a|s) Q_{h}^{\pi, \lambda}(s,a), \label{B6}
    \end{align}
    where \eqref{B6} comes from \eqref{(A.1)} and the inductive hypothesis. 
    On the other hand, by the definition of $Q_{h}^{\pi, \lambda}(s,a)$, we have
    \begin{align}
       &\sum_{a \in \cA} \pi(a|s) Q_{h}^{\pi, \lambda}(s,a) \nonumber\\
       &= \sum_{a \in \cA} \pi(a|s) \inf_{\bmu_t \in \Delta(\cS)^d,P_t=\la \bphi,\bmu_t \ra} \EE^{\{P_t\}_{t=h}^H} \bigg[\sum_{t = h}^H \big[r_t(s_t, a_t) + \lambda \la\bphi(s_t, a_t),\bD(\bmu_t||\bmu_t^0) \ra\big] \Big| s_h = s, a_h = a, \pi \bigg] \nonumber\\
        & \leq  \inf_{\bmu_t \in \Delta(\cS)^d,P_t=\la \bphi,\bmu_t \ra}\sum_{a \in \cA} \pi(a|s) \EE^{\{P_t\}_{t=h}^H} \bigg[\sum_{t = h}^H \big[r_t(s_t, a_t) + \lambda \la\bphi(s_t, a_t),\bD(\bmu_t||\bmu_t^0) \ra\big] \Big| s_h = s, a_h = a, \pi \bigg] \nonumber
        \\
        & = V_{h}^{\pi, \lambda}(s), \label{B5}
    \end{align}
    where \eqref{B5} comes from the definition of $V_h^{\pi, \lambda}(s)$. Combining the two inequalities \eqref{B6} and \eqref{B5}, we have 
    \begin{align*}
        V_{h}^{\pi, \lambda}(s) = \EE_{a \sim \pi(\cdot | s)}\big[Q_{h}^{\pi}(s,a)\big].
    \end{align*}
    This proves the \eqref{regularized robust bellman equation: 2} for stage h. Therefore, by using an induction argument, we finish the proof of \Cref{prop:regularized Robust Bellman equation}.
\end{proof}

\subsection{Proof of \Cref{prop:existence of optimal value function}}
    \begin{proof}
    We define the optimal stationary policy $\pi^\star = \{ \pi_h^\star \}_{h=1}^H$ as: for all $(h,s) \in [H]\times\cS$,
    \begin{align*}
        \pi^\star_h(s) = \argmax_{a \in \cA} \Big[r_h(s,a) + \inf_{\bmu_h \in \Delta(\cS)^d,P_h=\la \bphi,\bmu_h \ra}\big[\EE_{s'\sim P_h(\cdot|s,a)}\big[V_{h+1}^{\star, \lambda}(s')\big]+ \lambda \la\bphi(s, a),\bD(\bmu_h||\bmu_h^0) \ra\big]\Big].
    \end{align*}
    Now it remains to show that the regularized robust value function $V_h^{\pi^\star, \lambda}, Q_h^{\pi^\star, \lambda}$ induced by policy $\pi^\star$ is optimal, i.e., for all $(h,s) \in [H] \times \cS$,
    \begin{align*}
        V^{\pi^\star, \lambda}_h(s) = V^{\star, \lambda}_{h}(s), Q^{\pi^\star, \lambda}_h(s, a) = Q^{\star, \lambda}_h(s,a).
    \end{align*}
    By the \eqref{prop:regularized Robust Bellman equation}, we only need to prove the first equation above, then the optimality of the $Q$ holds trivially. we prove this statement by induction from $H$ to $1$. For stage $H$, the conclusion holds by:
    \begin{align*}
        V_H^{\star, \lambda}(s) &= 
        \sup_{\pi}V_H^{\pi, \lambda}(s)\\
        &= \sup_{\pi} \inf_{\bmu_H \in \Delta(\cS)^d,P_H=\la \bphi,\bmu_H \ra} \EE^{P_H} \bigg[\big[r_H(s_H, a_H) + \lambda \la\bphi(s_H, a_H),\bD(\bmu_H||\bmu_H^0) \ra\big] \Big| s_H = s, \pi \bigg] \\
        & = \sup_{\pi} \Big[ r_H(s_H,\pi_H(s_H)) +  \inf_{\bmu_H \in \Delta(\cS)^d,P_H=\la \bphi,\bmu_H \ra} \lambda \la\bphi(s, a),\bD(\bmu_H||\bmu_H^0) \ra \Big] \\
        & = V_H^{\pi^\star, \lambda}(s).
    \end{align*}
    Now assume that the conclusion holds by stage $h+1$. Hence, we have that for all $s \in \cS$,
    \begin{align*}
        V_{h+1}^{\pi^\star, \lambda}(s) = V_{h+1}^{\star, \lambda}(s).
    \end{align*}
    For the case of $h$, by \eqref{prop:regularized Robust Bellman equation}, we have 
    \begin{align}
        &V_{h}^{\pi^\star, \lambda}(s) \nonumber\\&= \EE_{a \sim \pi^\star_{h}(\cdot|s)} \Big[ Q_h^{\pi^\star, \lambda}(s,a) \Big] \nonumber\\
        & = \EE_{a \sim \pi^\star_{h}(\cdot|s)} \Big[r_h(s, a) + \inf_{\bmu_h \in \Delta(\cS)^d,P_h=\la \bphi,\bmu_h \ra}\big[\EE_{s'\sim P_h(\cdot|s,a)}\big[V_{h+1}^{\pi^\star, \lambda}(s')\big]+ \lambda \la\bphi(s, a),\bD(\bmu_h||\bmu_h^0) \ra\big] \Big] \nonumber\\
        & = \EE_{a \sim \pi^\star_{h}(\cdot|s)} \Big[r_h(s, a) + \inf_{\bmu_h \in \Delta(\cS)^d,P_h=\la \bphi,\bmu_h \ra}\big[\EE_{s'\sim P_h(\cdot|s,a)}\big[V_{h+1}^{\star, \lambda}(s')\big]+ \lambda \la\bphi(s, a),\bD(\bmu_h||\bmu_h^0) \ra\big] \Big] \label{B10}\\
        & = \max_{a \in \cA}  \Big[r_h(s, a) + \inf_{\bmu_h \in \Delta(\cS)^d,P_h=\la \bphi,\bmu_h \ra}\big[\EE_{s'\sim P_h(\cdot|s,a)}\big[V_{h+1}^{\star, \lambda}(s')\big]+ \lambda \la\bphi(s, a),\bD(\bmu_h||\bmu_h^0) \ra\big] \Big], \label{i}
    \end{align}
    where \eqref{B10} holds by the inductive hypothesis, \eqref{i} holds by the definition of $\pi^\star$. On the other hand, recall the definition of $V^{\star,\lambda}_h(s)$, then for any $s \in \cS$, by \eqref{prop:regularized Robust Bellman equation} we have 
    \begin{align}
        &V_h^{\star,\lambda}(s) \nonumber
        \\&= \sup_{\pi}V_h^{\pi, \lambda}(s) \nonumber\\
        &= \sup_{\pi}\EE_{a \sim \pi_{h}(\cdot|s)} \Big[ Q_h^{\pi, \lambda}(s,a) \Big] \nonumber\\
        &  = \sup_{\pi} \EE_{a \sim \pi_{h}(\cdot|s)} \Big[r_h(s, a) + \inf_{\bmu_h \in \Delta(\cS)^d,P_h=\la \bphi,\bmu_h \ra}\big[\EE_{s'\sim P_h(\cdot|s,a)}\big[V_{h+1}^{\pi, \lambda}(s')\big]+ \lambda \la\bphi(s, a),\bD(\bmu_h||\bmu_h^0) \ra\big] \Big] \nonumber\\
        & \leq \sup_{\pi} \EE_{a \sim \pi_{h}(\cdot|s)} \Big[r_h(s, a) + \inf_{\bmu_h \in \Delta(\cS)^d,P_h=\la \bphi,\bmu_h \ra}\big[\EE_{s'\sim P_h(\cdot|s,a)}\big[V_{h+1}^{\star, \lambda}(s')\big]+ \lambda \la\bphi(s, a),\bD(\bmu_h||\bmu_h^0) \ra\big] \Big] \label{B11}\\
        & = \max_{a \in \cA} \Big[r_h(s, a) + \inf_{\bmu_h \in \Delta(\cS)^d,P_h=\la \bphi,\bmu_h \ra}\big[\EE_{s'\sim P_h(\cdot|s,a)}\big[V_{h+1}^{\star, \lambda}(s')\big]+ \lambda \la\bphi(s, a),\bD(\bmu_h||\bmu_h^0) \ra\big] \Big] \nonumber\\
        & = V_{h}^{\pi^\star, \lambda}(s), \label{B12}
    \end{align}
    where \eqref{B11} holds by the fact that $V^{\pi^\star, \lambda}_{h+1}(s) \leq V^{\star,\lambda}_{h+1}(s) ,\forall s \in \cS$, \eqref{B12} holds by (\ref{i}). In turn, we trivially have $V_h^{\star, \lambda}(s) \geq V_h^{\pi^\star, \lambda}(s)$ due to the optimality of the value function. Hence, we obtain $V_{h}^{\pi^\star, \lambda}(s) = V_h^{\star, \lambda}(s), \forall s \in \cS$. Therefore, by the induction argument, we conclude the proof.
    \end{proof}
\subsection{Proof of \Cref{prop:linear of bellman operator}}
    \begin{proof}
    By \Cref{prop:regularized Robust Bellman equation}, we have 
    \begin{align}
        Q_h^{\pi, \lambda}(s,a) \nonumber
        &= r_h(s, a) + \inf_{\bmu_h \in \Delta(\cS)^d,P_h=\la \bphi,\bmu_h \ra}\big[\EE_{s'\sim P_h(\cdot|s,a)}\big[V_{h+1}^{\pi, \lambda}(s')\big]+ \lambda \la\bphi(s, a),\bD(\bmu_h||\bmu_h^0) \ra\big] \nonumber\\
        & = \big\langle \bphi(s,a) , \btheta_h \big\rangle + \inf_{\bmu_h \in \Delta(\cS)^d} \Big[ \big\langle \bphi(s,a), \EE_{s' \sim \bmu_h}[V_{h+1}^{\pi, \lambda}(s')] \big\rangle + \lambda \sum_{i=1}^d\phi_i(s,a)D(\mu_{h, i} \| \mu_{h,i}^0) \Big] \nonumber\\
        & = \big\langle \bphi(s,a) ,\btheta_h \big\rangle +\big\langle \bphi(s,a) , \bw^{\pi, \lambda}_h \big\rangle \nonumber\\
        & =\big\langle \bphi(s,a) ,\btheta_h + \bw_h^{\pi, \lambda} \big\rangle.\nonumber
    \end{align}
    Hence we conclude the proof.
\end{proof}

\subsection{Proof of \Cref{prop:the regularized duality under TV divergence}}
\begin{proof}
    The optimization problem can be formalized as: 
   \begin{align*}
       \inf_{\mu}\EE_{s\sim\mu}V(s) + \lambda D_{\text{TV}}(\mu\|\mu^0) ~~
        \text{subject to} \sum_s\mu(s) = 1, \mu(s) \geq 0.
   \end{align*}
   Denote $y(s) = \mu(s) - \mu^0(s)$, the objective function can be rewritten as:
   \begin{align*}
    \EE_{s\sim\mu}V(s) + \lambda D_{\text{TV}}(\mu\| \mu^0) 
    &=\sum_{s} \mu(s)V(s) + \lambda/2\sum_s|\mu(s)-\mu^0(s)| \\ 
    &= \sum_{s} V(s)(y(s) + \mu^0(s)) + \lambda/2 \sum |y(s)| \\&= \EE_{s\sim\mu^0}V(s)+ \sum_{s} V(s)y(s) + \lambda /2 \sum |y(s)|.
   \end{align*}
    Recall the constraint 
   $\sum_{s}y(s) = 0, y(s) \geq -\mu^0(s)$, by the Lagrange duality, we establish the Lagrangian function:
   \begin{align*}
       \cL = \min_y\max_{\mu \geq 0, r \in R}\Big(\sum_s [y(s)(V(s)-\mu(s)-r)) + \lambda/2|y(s)|] - \sum_s \mu(s)\mu^0(s) \Big).
   \end{align*}
   In order to achieve the minimax optimality, for any $s$, term $ y(s)(V(s)-\mu(s)-r)) + \lambda/2|y(s)|$ should obtain a bounded lower bound with respect to $y(s)$, which requires that $ \mu(s), r$ should satisfy the following conditions:
   \begin{align*}
       \forall s \in \cS, |V(s) - \mu(s) - r| \leq \lambda/2 
        \Rightarrow \max_{s \in \cS}\{V(s)-\mu(s)\} - \min_{s \in \cS}\{V(s) - \mu (s)\} \leq \lambda.
   \end{align*}
   With the constraint above, we denote $g(s): = V(s) - \mu(s)$, we have %
   \begin{align*}
       \cL &= \max_{\mu \geq 0, r \in R}\min_y\Big \{\sum_s [y(s)(V(s)-\mu(s)-r)) + \lambda/2|y(s)|] - \sum_s \mu(s)\mu^0(s) \Big \} \\
       &= \max_{\max_{s \in \cS}(V(s)-\mu(s)) - \min_{s \in \cS}(V(s) - \mu (s)) \leq \lambda} - \sum\mu(s)\mu^0(s) \\
       & = \max_{\max_sg(s) - \min_sg(s) \leq \lambda, g(s) \leq V(s)}\Big \{ \sum g(s)\mu^0(s)\Big \} - \EE_{s\sim\mu^0}V(s).
   \end{align*}
    Thus we have, %
   \begin{align}
       &\EE_{s\sim\mu}V(s) + \lambda D_{\text{TV}}(\mu\| \mu^0) \nonumber\\
       &= \EE_{s\sim\mu^0}V(s) +\max_{\max_sg(s) - \min_sg(s) \leq \lambda, g(s) \leq V(s)}\Big \{\sum g(s)\mu^0(s)\Big \} -\EE_{s\sim\mu^0}V(s) \nonumber \\
       &= \EE_{s\sim\mu^0}[V(s)]_{V_{\min}+ \lambda}, \label{3.2}
   \end{align}
    where \eqref{3.2} holds by directly solving the max problem. Hence we conclude the proof.
\end{proof}

\subsection{Proof of \Cref{prop:the regularized duality under x2 divergence}}
\begin{proof}
     Similar to the proof of \Cref{prop:the regularized duality under TV divergence}, define $y(s) = \mu(s) - \mu^0(s)$, with lagrange duality, we have:
     \begin{align*}
         \mathcal{L} &= \sum_{s} V(s) (y(s) + \mu^0(s)) + \lambda \sum_{s} \frac{y(s)^2}{\mu^0(s)} - \sum_{s} (\mu^0(s) + y(s)) \mu(s) - r \sum_{s} y(s) \\
        &= \sum_{s} \Big( \frac{\lambda y(s)^2}{\mu^0(s)} + y(s) (V(s) - \mu(s) - r) + \sum_{s} V(s) \mu^0(s) - \sum_{s} \mu^0(s) \mu(s) \Big).
     \end{align*}
     Noticing that $\cL$ is a quadratic function with respect to $y(s)$, therefore after we fix the term $\mu(s), r$ and compute the min with respect to $y(s)$, we have
     \begin{align}
         \mathcal{L}
        &= -\frac{1}{4 \lambda} \sum_{s} \mu^0(s) (V(s) - \mu(s) - r)^2 + \sum_{s} V(s) \mu^0(s) - \sum_{s} \mu^0(s) \mu(s) \nonumber\\
        & = -\frac{1}{4 \lambda} \Big[ \EE_{s\sim\mu^0} [V - \mu]^2 - \big(\EE_{s\sim\mu^0} [V - \mu] \big)^2 \Big] + \EE_{s\sim\mu^0} [V - \mu] \label{B3.3}\\
        & = \EE_{s\sim\mu^0} [V - \mu] - \frac{1}{4 \lambda} \Var_{s\sim\mu^0}[V - \mu] \nonumber\\
        &= \sup_{\alpha \in [V_{\min}, V_{\max}]} \EE_{s\sim\mu^0}[V(s)]_{\alpha} - \frac{1}{4\lambda}\Var_{s\sim\mu^0}[V(s)]_{\alpha} \label{B3.4},
     \end{align}
     where \eqref{B3.3} comes from maximizing $r$, and \eqref{B3.4} comes from maximizing $\mu(s), s \in \cS$ and the observation that $\mu(s) = 0 \text{ or } V(s), \forall s \in \cS$ when achieving its maximum. Hence, we conclude the proof.
    \end{proof}

\section{Proof of the Upper Bounds of Suboptimality}
In this section, we prove \Cref{thm: instance-dependent upper bounds} and \Cref{corollary:simplified upper bounds}. For simplicity, we denote $\bphi_h^{\tau} = \bphi(s_h^{\tau}, a_h^{\tau})$. According to the robust regularized Bellman equation in \Cref{prop:regularized Robust Bellman equation}, we first define the robust regularized  Bellman operator: for any $(h,s,a) \in [H]\times\cS\times\cA$ and any function $V:\cS\times\cA\rightarrow [0,H]$, 
\begin{align}
    \cT^{\lambda} _hV(s,a):= r_h(s, a) + \inf_{\bmu_h \in \Delta(\cS)^d,P_h=\la \bphi,\bmu_h \ra}\big[\EE_{s'\sim P_h(\cdot|s,a)}\big[V_{h+1}^{\pi, \lambda}(s')\big]+ \lambda \la\bphi(s, a),\bD(\bmu_h||\bmu_h^0) \ra\big].
    \label{def: regularized bellman operator}
\end{align}
We have $Q_h^{\pi, \lambda}(s,a) = \cT^{\lambda} _hV_{h+1}^{\pi, \lambda}(s,a)$.

\subsection{Proof of \Cref{thm: instance-dependent upper bounds}}
We start from bounding the suboptimality gap by the estimation uncertainty in the following Lemma.

\begin{lemma}
 \label{lemma:regularized robust suboptimality lemma}
     If the following inequality holds for any $(h,s,a) \in [H] \times \cS \times \cA$:
     \begin{align*}
        |\mathcal{T}_h^{\lambda}\hat{V}^{\lambda}_{h+1}(s,a) - \la \bphi(s,a), \hat{\bw}_h^\lambda \ra| \leq \Gamma_h(s,a), 
     \end{align*}
     then we have 
     \begin{align*}
         \text{SupOpt}(\hat{\pi}, s, \lambda) \leq 
        2\sup_{P \in \mathcal{U}^\lambda(P^0)}\sum_{h=1}^H\EE^{\pi^\star, P}\big[\Gamma_h(s_h, a_h)|s_1 = s\big].
     \end{align*}
\end{lemma}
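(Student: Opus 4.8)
\textbf{Proof plan for Lemma~\ref{lemma:regularized robust suboptimality lemma}.}
The plan is to follow the standard ``pessimism decomposition'' argument used in offline RL suboptimality analyses (e.g.\ \citet{jin2021pessimism,liu2024minimax}), adapted to the robust regularized setting and the linear feasible set $\cF_{\text{L}}$. Write $\iota_h(s,a) := \cT_h^\lambda \hat V_{h+1}^\lambda(s,a) - \hat Q_h^\lambda(s,a)$ for the Bellman error of the estimated $Q$-function. The first step is to show a ``pessimism'' bound: under the hypothesis $|\cT_h^\lambda \hat V_{h+1}^\lambda(s,a) - \langle \bphi(s,a), \hat\bw_h^\lambda\rangle| \le \Gamma_h(s,a)$ and the definition $\hat Q_h^\lambda = \min\{\langle\bphi, \btheta_h+\hat\bw_h^\lambda\rangle - \Gamma_h, H-h+1\}^+$, one gets $0 \le \iota_h(s,a) \le 2\Gamma_h(s,a)$ for all $(h,s,a)$ (using that $r_h = \langle\bphi,\btheta_h\rangle$, that $\cT_h^\lambda\hat V_{h+1}^\lambda \in [0,H-h+1]$, and the clipping). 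The lower bound $\iota_h \ge 0$ is what makes the estimator pessimistic.

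The second step is the value-difference telescoping. I would prove, by backward induction on $h$, that $V_h^{\star,\lambda}(s) - V_h^{\hat\pi,\lambda}(s)$ can be expanded along trajectories. For the $\hat\pi$ side: since $\hat V_h^\lambda(s) = \langle \hat Q_h^\lambda(s,\cdot),\hat\pi_h(\cdot|s)\rangle$ and $\hat\pi_h$ is greedy w.r.t.\ $\hat Q_h^\lambda$, combined with $\hat Q_h^\lambda = \cT_h^\lambda\hat V_{h+1}^\lambda - \iota_h$, a robust-Bellman-type comparison shows $\hat V_h^\lambda(s) - V_h^{\hat\pi,\lambda}(s) \ge -\sup_{P\in\cF_{\text{L}}}\EE^{\hat\pi,P}[\sum_{h'\ge h}\iota_{h'}|\,\cdot\,]$ type control; more directly, one shows $V_1^{\hat\pi,\lambda}(s) \ge \hat V_1^\lambda(s) - (\text{error terms that vanish by the sign of }\iota)$ using that the perturbed kernel attaining the infimum for $\hat V$ is a valid competitor for $V^{\hat\pi,\lambda}$. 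For the $\star$ side: using greediness of $\hat\pi_h$ w.r.t.\ $\hat Q_h^\lambda$ and optimality of $\pi^\star$, $V_1^{\star,\lambda}(s) - \hat V_1^\lambda(s) \le \sum_{h=1}^H \EE^{\pi^\star,P_h^{\text{worst}}}[\iota_h(s_h,a_h)\mid s_1=s]$ where $P^{\text{worst}}$ is the kernel attaining the infimum in the definition of $V^{\star,\lambda}$ at each step. Adding the two and applying $\iota_h \le 2\Gamma_h$ gives the bound with the supremum over the kernel realized in the definition of $V^{\star,\lambda}$.

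The third step is to argue that this particular ``worst-case'' kernel actually lies in the robustly admissible set $\mathcal{U}^\lambda(P^0)$ rather than all of $\cF_{\text{L}}$, which lets us replace $\sup_{P\in\cF_{\text{L}}}$ by $\sup_{P\in\mathcal{U}^\lambda(P^0)}$. Here the key observation is the one flagged in the remark after Theorem~\ref{thm: instance-dependent upper bounds}: at the minimizer $\bmu_h$ of $\inf_{\mu_{h,i}}[\EE^{\mu_{h,i}}V_{h+1}^{\star,\lambda} + \lambda D(\mu_{h,i}\|\mu_{h,i}^0)]$, one must have $\lambda D(\mu_{h,i}\|\mu_{h,i}^0) \le \EE^{\mu_{h,i}^0}[V_{h+1}^{\star,\lambda}] \le \max_s V_{h+1}^{\star,\lambda}(s)$, because the choice $\mu_{h,i} = \mu_{h,i}^0$ is feasible and achieves objective value $\EE^{\mu_{h,i}^0}[V_{h+1}^{\star,\lambda}]$ with zero divergence penalty; hence any better minimizer satisfies $D(\mu_{h,i}\|\mu_{h,i}^0) \le \max_s V_{h+1}^{\star,\lambda}(s)/\lambda$, which is exactly the defining constraint of $\mathcal{U}_h^\lambda(s,a;\bmu_h^0)$.

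The main obstacle I anticipate is Step~2: carefully handling the $\inf$ over perturbed kernels in the robust regularized value functions while telescoping. Unlike the non-robust case, $V^{\hat\pi,\lambda}$ and $V^{\star,\lambda}$ involve their own (generally different) worst-case kernels, so the comparison $V_1^{\star,\lambda} - V_1^{\hat\pi,\lambda}$ does not telescope under a single transition law. The standard fix is to use, on the $\star$-minus-$\hat V$ side, the kernel that minimizes for $V^{\star,\lambda}$ (valid because $\hat V$ is an infimum, hence $\le$ the objective at that kernel), and on the $\hat V$-minus-$\hat\pi$ side, the kernel minimizing $\hat V$ is itself a feasible competitor in the infimum defining $V^{\hat\pi,\lambda}$; combined with the sign $\iota_h \ge 0$ this yields the one-sided bound. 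Making these ``mix-and-match kernel'' steps rigorous — in particular checking that the relevant infima are attained (which follows from compactness of $\Delta(\cS)^d$ and lower semicontinuity of $D$, as already used in the proof of Proposition~\ref{prop:regularized Robust Bellman equation}) — is the technical heart of the argument; the rest is bookkeeping.
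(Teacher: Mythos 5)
Your overall architecture is the same as the paper's (decompose through $\hat V$, show $0\le \cT_h^\lambda \hat V_{h+1}^\lambda - \hat Q_h^\lambda \le 2\Gamma_h$ from the hypothesis and the clipping, telescope under a worst-case kernel, and check that kernel lies in $\mathcal{U}^\lambda(P^0)$), but your Step 2 assigns the kernels to the wrong sides, and as justified the inequalities point the wrong way. To upper bound a difference of robust regularized backups $\cT_h^\lambda V_1 - \cT_h^\lambda V_2$ of two \emph{different} functions, you must evaluate at the minimizer of the \emph{subtracted} backup: only there does the infimum equal its objective, while the first backup is $\le$ its objective at any kernel. So on the $V^{\star,\lambda}-\hat V$ side the correct kernel is $\hat P_h$, the minimizer of $\inf_{\bmu_h}\{\EE_{P_h}[\hat V^\lambda_{h+1}]+\lambda\la\bphi,\bD(\bmu_h\|\bmu_h^0)\ra\}$ (this is what the paper uses), \emph{not} the minimizer for $V^{\star,\lambda}$: with your choice $P^{\text{worst}}$ one only gets $\cT_h^\lambda V^{\star,\lambda}_{h+1}-\cT_h^\lambda\hat V^\lambda_{h+1}\ \ge\ \EE_{P^{\text{worst}}}[V^{\star,\lambda}_{h+1}-\hat V^\lambda_{h+1}]$, a lower bound, because "$\cT_h^\lambda\hat V$ is an infimum, hence $\le$ the objective at that kernel" bounds the negated term from below, not above. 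Symmetrically, on the $\hat V - V^{\hat\pi,\lambda}$ side, plugging the minimizer for $\hat V$ into the infimum defining $V^{\hat\pi,\lambda}$ only \emph{upper} bounds $V^{\hat\pi,\lambda}$, which is the wrong direction; the valid choices are either the minimizer associated with $\cT_h^\lambda V^{\hat\pi,\lambda}_{h+1}$ (the paper's $P^{\hat\pi}$) or, more simply, monotonicity of $\cT_h^\lambda$ together with $\iota_h\ge 0$, which gives $\hat V^\lambda_h\le V^{\hat\pi,\lambda}_h$ by backward induction so that this whole side is nonpositive (the paper achieves the same by keeping the $-\delta_h^\lambda$ terms and dropping them at the end).

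This misassignment also undermines your Step 3: the kernel that actually appears in the valid telescoping and must be shown to lie in $\mathcal{U}^\lambda(P^0)$ is $\hat P$ (the minimizer for $\hat V^\lambda_{h+1}$), not the minimizer for $V^{\star,\lambda}_{h+1}$. Your "compare against $\mu^0_{h,i}$" trick is the right idea (and is exactly what the paper does), but applied to $\hat P$ it needs one extra ingredient you did not state: the pessimism property $\hat V^\lambda_{h+1}\le V^{\star,\lambda}_{h+1}$ (which follows from $\iota_h\ge 0$ and monotonicity), so that
\begin{align*}
\lambda D(\hat\mu_{h,i}\|\mu^0_{h,i}) \le \inf_{\mu}\big[\EE^{\mu}[\hat V^\lambda_{h+1}]+\lambda D(\mu\|\mu^0_{h,i})\big] \le \inf_{\mu}\big[\EE^{\mu}[V^{\star,\lambda}_{h+1}]+\lambda D(\mu\|\mu^0_{h,i})\big] \le \EE^{\mu^0_{h,i}}[V^{\star,\lambda}_{h+1}] \le \max_s V^{\star,\lambda}_{h+1}(s).
\end{align*}
With the kernels swapped back as above and this pessimism step added, your plan becomes the paper's proof; as written, the two central inequalities of Step 2 do not hold.
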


\subsubsection{Proof of \Cref{thm: instance-dependent upper bounds} - Case with the TV Divergence}
\begin{algorithm}[ht]
    \caption{Robust Regularized Pessimistic Value Iteration under TV distance (\algname-TV) }\label{alg:R2PVI-TV}
    \begin{algorithmic}[1]
    \REQUIRE{
        Dataset $\cD$, regularizer $\lambda >0$, $\gamma>0$ and parameter $\beta$
        }
        \STATE init $\hat{V}^\lambda_{H+1}(\cdot)=0$
        \FOR {episode $h=H, \cdots, 1$}{
            \STATE $\bLambda_h \leftarrow \sum_{\tau = 1}^K \bphi(s_h^{\tau}, a_h^{\tau})(\bphi(s_h^{\tau}, a_h^{\tau}))^{\top} + \gamma \mathbf{I}$
            \STATE $\alpha_{h+1} \leftarrow \min_{s \in \cS}\{\hat{V}^\lambda_{h+1}(s) \} + \lambda$
            \STATE $\hat{\bw}_h^{\lambda}\leftarrow \bLambda_h^{-1}(\sum_{\tau=1}^{K} \bphi(s_h^{\tau}, a_h^{\tau}) [\hat{V}^\lambda_{h+1}(s_{h+1}^{\tau})]_{\alpha_{h+1}})$ \hfill $\triangleleft$ {\color{blue} Estimated by \eqref{TV-estimation}}
            \STATE $\Gamma_h(\cdot, \cdot) \leftarrow 
            \beta \sum_{i=1}^d\|\phi_i(\cdot, \cdot) \mathbf{1}_i\| _{\bLambda_h^{-1}}$
            \STATE $\hat{Q}_h^{\lambda}(\cdot, \cdot) \leftarrow \min\{\bphi(\cdot, \cdot)^{\top}(\btheta_h+ \hat{\bw}_h^{\lambda}) - \Gamma_h(\cdot, \cdot), H-h+1\}^+$
            \STATE $\hat{\pi}_h(\cdot | \cdot) \leftarrow \argmax_{\pi_h}\la\hat{Q}_h^{\lambda}(\cdot, \cdot), \pi_h(\cdot| \cdot)\ra_{\mathcal{A}}$ and $\hat{V}^{\lambda}_h(\cdot) \leftarrow\la\hat{Q}_h^{\lambda}(\cdot, \cdot), \hat{\pi}_h(\cdot|\cdot)\ra_{\mathcal{A}}$
        }\ENDFOR
    \end{algorithmic}  
\end{algorithm}

For completeness, we present \algname\  specific to the TV distance in \Cref{alg:R2PVI-TV}, which gives a closed form solution of \eqref{TV-estimation}. Now we present the upper bound of weights as follows.
        \begin{lemma}
            \label{bound of weights in TV distance}
            (Bound of weights - TV) For any $h \in [H]$, we have
            \begin{align*}
                \|\bw_h^\lambda\|_2 \leq H\sqrt{d}, \|\hat{\bw}_h^{\lambda}\|_2 \leq H\sqrt{\frac{Kd}{\gamma}}.
            \end{align*}
        \end{lemma}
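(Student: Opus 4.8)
The plan is to bound the two quantities separately, each by a short chain of elementary estimates, using the closed-form TV duality of \Cref{prop:the regularized duality under TV divergence} together with the fact that all value functions in play are $[0,H]$-valued.

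\textbf{The population bound $\|\bw_h^\lambda\|_2\le H\sqrt d$.} By \Cref{prop:linear of bellman operator} the $i$-th coordinate of $\bw_h^\lambda$ is $w_{h,i}^\lambda=\inf_{\mu\in\Delta(\cS)}\big[\EE_{s\sim\mu}V_{h+1}^\lambda(s)+\lambda D_{\text{TV}}(\mu\|\mu_{h,i}^0)\big]$, where $V_{h+1}^\lambda:\cS\to[0,H]$ is the relevant value function. Applying \Cref{prop:the regularized duality under TV divergence}, $w_{h,i}^\lambda=\EE_{s\sim\mu_{h,i}^0}\big[V_{h+1}^\lambda(s)\big]_{(V_{h+1}^\lambda)_{\min}+\lambda}$; since clipping a $[0,H]$-valued function from above at the nonnegative threshold $(V_{h+1}^\lambda)_{\min}+\lambda$ keeps it inside $[0,H]$, we get $0\le w_{h,i}^\lambda\le H$ for every $i\in[d]$, hence $\|\bw_h^\lambda\|_2=\big(\sum_{i=1}^d(w_{h,i}^\lambda)^2\big)^{1/2}\le (dH^2)^{1/2}=H\sqrt d$.

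\textbf{The empirical bound $\|\hat{\bw}_h^\lambda\|_2\le H\sqrt{Kd/\gamma}$.} I would start from the closed-form solution of the ridge regression \eqref{TV-estimation}, namely $\hat{\bw}_h^\lambda=\bLambda_h^{-1}\sum_{\tau=1}^K\bphi_h^\tau\big[\hat{V}_{h+1}^\lambda(s_{h+1}^\tau)\big]_{\alpha_{h+1}}$ (cf. \Cref{alg:R2PVI-TV}). First note that the algorithm's $\hat{V}_{h+1}^\lambda$ is $[0,H]$-valued—the clip $\min\{\cdot,H-h+1\}^+$ in the definition of $\hat{Q}_h^\lambda$ forces $\hat{Q}_h^\lambda\in[0,H-h+1]$, and $\hat{V}_h^\lambda=\la\hat{Q}_h^\lambda,\hat{\pi}_h\ra_\cA$ is a convex combination of its values—and that $\alpha_{h+1}=\min_s\hat{V}_{h+1}^\lambda(s)+\lambda\ge 0$, so $\big|[\hat{V}_{h+1}^\lambda(s_{h+1}^\tau)]_{\alpha_{h+1}}\big|\le H$. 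By the triangle inequality $\|\hat{\bw}_h^\lambda\|_2\le H\sum_{\tau=1}^K\|\bLambda_h^{-1}\bphi_h^\tau\|_2$. Since $\bLambda_h\succeq\gamma\mathbf{I}$ implies $\bLambda_h^{-2}\preceq\gamma^{-1}\bLambda_h^{-1}$, each summand obeys $\|\bLambda_h^{-1}\bphi_h^\tau\|_2=\big((\bphi_h^\tau)^\top\bLambda_h^{-2}\bphi_h^\tau\big)^{1/2}\le\gamma^{-1/2}\big((\bphi_h^\tau)^\top\bLambda_h^{-1}\bphi_h^\tau\big)^{1/2}$, and combining Cauchy--Schwarz over $\tau$ with the trace bound $\sum_{\tau=1}^K(\bphi_h^\tau)^\top\bLambda_h^{-1}\bphi_h^\tau=\operatorname{tr}\big(\bLambda_h^{-1}\sum_\tau\bphi_h^\tau(\bphi_h^\tau)^\top\big)\le\operatorname{tr}(\bLambda_h^{-1}\bLambda_h)=d$ yields $\sum_{\tau=1}^K\|\bLambda_h^{-1}\bphi_h^\tau\|_2\le\gamma^{-1/2}\sqrt{K}\sqrt{d}$, whence $\|\hat{\bw}_h^\lambda\|_2\le H\sqrt{Kd/\gamma}$.

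\textbf{Where the work lies.} This is a standard linear-function-approximation weight bound, so there is no serious obstacle; the two points needing care are (i) checking that the clipped value functions entering both $\bw_h^\lambda$ and $\hat{\bw}_h^\lambda$ stay in $[0,H]$—this is precisely where the closed-form TV duality, which produces a clip at a nonnegative threshold, keeps the estimate clean—and (ii) ordering the steps $\bLambda_h^{-2}\preceq\gamma^{-1}\bLambda_h^{-1}$, Cauchy--Schwarz, and the trace inequality so as to extract exactly the claimed $\sqrt d$ factor.
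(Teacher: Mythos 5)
Your proposal is correct and follows essentially the same route as the paper: the coordinates of $\bw_h^\lambda$ are the clipped expectations from the TV duality, each lying in $[0,H]$, giving $H\sqrt d$; and the empirical bound uses the triangle inequality, $\|\bLambda_h^{-1}\|\le\gamma^{-1}$, Cauchy--Schwarz over $\tau$, and the trace identity $\sum_\tau(\bphi_h^\tau)^\top\bLambda_h^{-1}\bphi_h^\tau\le d$, exactly as in the paper. The only cosmetic difference is that the paper writes the trace step as $\operatorname{Tr}(\bLambda_h^{-1}(\bLambda_h-\gamma\mathbf{I}))\le\operatorname{Tr}(\mathbf{I})$, which is equivalent to your bound.
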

    \begin{proof}[Proof of \Cref{thm: instance-dependent upper bounds} - TV]
    The \algname\ with TV-divergence is presented in \Cref{alg:R2PVI-TV}. We derive the upper bound on the estimation uncertainty $\Gamma_h(s,a)$ to prove the theorem. 
    We first decompose the difference between the regularized robust bellman operator $\mathcal{T}_h^{\lambda}$ and the empirical regularized robust bellman operator $\hat{\mathcal{T}}_h^{\lambda}$ as
    \begin{align}
         &\big|\mathcal{T}_h^{\lambda} \hat{V}^{\lambda}_{h+1}(s,a)-\la \bphi(s,a), \hat{\bw}_h^\lambda \ra\big| 
            \\&= \Big|\sum_{i=1}^d \phi_i(s,a)(w_{h,i}^\lambda - \hat{w}_{h,i}^{\lambda})\Big| \nonumber\\
           &= \Big|\sum_{i =1}^d\phi_i(s,a)\mathbf{1}_i(\bw_h^\lambda - \hat{\bw}_{h}^{\lambda}) \Big|\nonumber\\
           &= \Big| \gamma\sum_{i =1 }^d\phi_i(s,a)\mathbf{1}_i\bLambda_h^{-1}\bw_h^\lambda + \sum_{i =1 }^d\phi_i(s,a)\mathbf{1}_i\bLambda_h^{-1} \sum_{\tau =1}^K\bphi(s_h^\tau, a_h^{\tau})\eta_h^{\tau}([\hat{V}^{\lambda}_{h+1}(s)]_{\alpha_{h+1}})\Big| \label{C1.1}
           \\
           & \leq \gamma \sum_{i =1 }^d\|\phi_i(s,a)\mathbf{1}_i\|_{\bLambda_h^{-1}}\underbrace{\|\bw_h^\lambda\|_{\bLambda_h^{-1}}}_{\text{(i)}} + \sum_{i =1 }^d\|\phi_i(s,a)\mathbf{1}_i\|_{\bLambda_h^{-1}} \underbrace{\|\sum_{\tau =1}^K\bphi(s_h^\tau, a_h^{\tau})\eta_h^{\tau}([\hat{V}^{\lambda}_{h+1}(s)]_{\alpha_{h+1}})\|_{\bLambda_h^{-1}}}_{\text{(ii)}},
           \label{upperbound in TV}
    \end{align}
where \eqref{C1.1} comes from the definition of $\hat{\bw}_h^\lambda$, while \eqref{upperbound in TV} follows by the Cauchy-Schwartz inequality. By \Cref{bound of weights in TV distance} and the fact that $\hat{V}^{\lambda}_{h+1}(s) \leq H$ and $\gamma = 1$, we have
\begin{align*}
    \text{(i)} = \|\bw_h^\lambda\|_{\bLambda_h^{-1}}\leq \|\bLambda_h^{-1}\|^{1/2} \|\bw_h^\lambda\|_2 \leq H\sqrt{d},
\end{align*}
where the last inequality comes from the fact that $\|\bLambda_h^{-1}\| \leq \gamma^{-1}$. Now it remains to bound term (ii), as  $\hat{V}^{\lambda}_{h+1}$ depends on data, which makes it difficult to bound it directly by concentration equality. Instead, we consider focus on the function class $\mathcal{V}_{h}(R_0, B_0, \gamma)$:
         \begin{align*}
            \mathcal{V}_{h}(R_0, B_0, \gamma) = \{ V_h(x; \btheta, \beta, \bLambda): \mathcal{S} \rightarrow [0, H], \|\btheta\|_2 \leq R_0, \beta \in [0, B_0], \gamma_{\min}(\bLambda_h) \geq \gamma \},
         \end{align*}
         \noindent
         where $V_h(x; \btheta, \beta, \bLambda) = \max_{a \in \mathcal{A}}[\bphi(s,a)^\top\btheta - \beta \sum_{i=1}^d\|\phi_i(s,a)\|_{\bLambda_h^{-1}}]_{[0, H-h+1]}$. By \Cref{bound of weights in TV distance} and the definition of $\hat{V}_{h+1}^\lambda$, when we set $R_0 = H\sqrt{Kd/\gamma}, B_0 = \beta = 16 Hd \sqrt{\xi_{\text{TV}}}$, it suffices to show that $\hat{V}_{h+1}^\lambda \in \mathcal{V}_{h+1}(R_0, B_0, \gamma)$. 
         Next we aim to find a union cover of the $\mathcal{V}_{h+1}(R_0, B_0, \gamma)$, hence the term (ii) can be upper bounded. Let $\mathcal{N}_h(\epsilon; R_0, B_0, \gamma)$ be the minimum $\epsilon$-cover of $\mathcal{V}_h(R_0, B_0, \lambda)$ with respect to the supreme norm, $\mathcal{N}_{h}([0, H])$ be the minimum $\epsilon$-cover of $[0, H]$ respectively. In other words, for any function $V \in \mathcal{V}_{h}(R_0, B_0, \gamma), \alpha_{h+1} \in [0, H]$, there exists a function $V' \in \mathcal{V}_{h}(R_0, B_0, \gamma)$ and a real number $\alpha_{\epsilon} \in [0,H]$ such that:
         \begin{align*}
            \sup_{s \in \mathcal{S}}|V(s) - V'(s)| \leq \epsilon, |\alpha_{\epsilon} - \alpha_{h+1}| \leq \epsilon.
         \end{align*}
By Cauchy-Schwartz inequality and the fact that $\|a + b\|_{\bLambda_h^{-1}}^2 \leq 2\|a\|_{\bLambda_h^{-1}}^2 + 2\|b\|_{\bLambda_h^{-1}}^2$ and the definition of the term (ii), we have
         \begin{align}
            \text{(ii)}^2 &\leq 2 \Big\|\sum_{\tau = 1}^K \bphi(s_h^\tau, a_h^{\tau})\eta_{h}^{\tau}([\hat{V}^{\lambda}_{h+1}]_{\alpha_{\epsilon}})\Big\|^2_{\bLambda_h^{-1}} + 2 \Big\|\sum_{\tau = 1}^K \bphi(s_h^\tau, a_h^{\tau})\eta_{h}^{\tau}([\hat{V}^{\lambda}_{h+1}]_{\alpha_{h+1}} - [\hat{V}^{\lambda}_{h+1}]_{\alpha_{\epsilon}})\Big\|^2_{\bLambda_h^{-1}} \nonumber \\
            & \leq 4 \Big\|\sum_{\tau = 1}^K \bphi(s_h^\tau, a_h^{\tau})\eta_{h}^{\tau}([V'_{h+1}]_{\alpha_{\epsilon}})\Big\|^2_{\bLambda_h^{-1}} + 4 \Big\|\sum_{\tau = 1}^K \bphi(s_h^\tau, a_h^{\tau})\eta_{h}^{\tau}([\hat{V}^{\lambda}_{h+1}]_{\alpha_{\epsilon}} - [V'_{h+1}]_{\alpha_{\epsilon}})\Big\|^2_{\bLambda_h^{-1}} + \frac{2\epsilon^2K^2}{\gamma},
            \label{(eq.1)}
         \end{align}
 where \eqref{(eq.1)} follows by the fact that 
         \begin{align*}
            2 \Big\|\sum_{\tau = 1}^K \bphi(s_h^\tau, a_h^{\tau})\eta_{h}^{\tau}([\hat{V}^{\lambda}_{h+1}]_{\alpha_{h+1}} - [\hat{V}^{\lambda}_{h+1}]_{\alpha_{\epsilon}})\Big\|^2_{\bLambda_h^{-1}} \leq 2 \epsilon^2 \sum_{\tau = 1, \tau' = 1}^{K}|\bphi_h^\tau\bLambda_h^{-1}\bphi_h^{\tau\top}| \leq \frac{2\epsilon^2K^2}{\gamma}.
         \end{align*}
Meanwhile, by the fact that $|[\hat{V}^{\lambda}_{h+1}]_{\alpha_{\epsilon}} - [V'_{h+1}]_{\alpha_{\epsilon}}| \leq |\hat{V}^{\lambda}_{h+1} - V'_{h+1}|$, we have
         \begin{align}
            &4 \Big\|\sum_{\tau = 1}^K \bphi(s_h^\tau, a_h^{\tau})\eta_{h}^{\tau}([\hat{V}^{\lambda}_{h+1}]_{\alpha_{\epsilon}} - [V'_{h+1}]_{\alpha_{\epsilon}})\Big\|^2_{\bLambda_h^{-1}}\\
            &\leq 4 \sum_{\tau = 1, \tau' = 1}^{K}|\bphi_h^\tau\bLambda_h^{-1}\bphi_h^{\tau'}|
            \max|\eta_{h}^{\tau}([\hat{V}^{\lambda}_{h+1}]_{\alpha_{\epsilon}} - [V'_{h+1}]_{\alpha_{\epsilon}})|^2 \nonumber\\
            &\leq 4 \epsilon^2\sum_{\tau = 1, \tau' = 1}^{K}|\bphi_h^\tau\bLambda_h^{-1}\bphi_h^{\tau'}| \nonumber\\
            &\leq \frac{4\epsilon^2K^2}{\gamma}. \label{TV-100}
         \end{align}
By applying the (\ref{TV-100}) into (\ref{(eq.1)}), we have
         \begin{align}
            (\text{ii})^2 \leq 4 \sup_{V' \in \mathcal{N}_{h}(\epsilon; R_0, B_0, \gamma), \alpha_{\epsilon} \in \mathcal{N}_{h}([0, H])}\Big\|\sum_{\tau = 1}^K \bphi(s_h^\tau, a_h^{\tau})\eta_{h}^{\tau}([V'_{h+1}]_{\alpha_{\epsilon}})\Big\|^2_{\bLambda_h^{-1}} + \frac{6\epsilon^2K^2}{\gamma}.
            \label{TV-1}
         \end{align}
By \Cref{lemma:concentration bound}, applying a union bound over $\mathcal{N}_{h}(\epsilon; R_0, B_0, \gamma)$ and $\mathcal{N}_{h}([0, H])$, with probability at least $1 - \delta / 2H$, we have
         \begin{align}
            &4 \sup_{V' \in \mathcal{N}_{h}(\epsilon; R_0, B_0, \gamma), \alpha_{\epsilon} \in \mathcal{N}_{h}([0, H])}\Big\|\sum_{\tau = 1}^K \bphi(s_h^\tau, a_h^{\tau})\eta_{h}^{\tau}([V'_{h+1}]_{\alpha_{\epsilon}})\Big\|^2_{\bLambda_h^{-1}} + \frac{6\epsilon^2K^2}{\gamma} \nonumber\\
            &\leq 4 H^2\Big(2\log\frac{2H|\mathcal{N}_{h}(\epsilon; R_0, B_0, \gamma)\|\mathcal{N}_{h}([0, H])|}{\delta} + d\log(1+ K/\gamma)\Big) + \frac{6\epsilon^2K^2}{\gamma} .
            \label{TV-2}
         \end{align}
Applying \Cref{lemma:covering number of function class}, we have
    \begin{align}
        \log|\mathcal{N}_h(\epsilon;R_0, B_0, \lambda)| &\leq d\log(1+4R_0/\epsilon) + d^2\log(1+8d^{1/2}B_0^2/\gamma \epsilon^2) \nonumber\\
        &= d \log(1+ 4K^{3/2}d^{-1/2}) + d^2\log(1+ 8d^{-3/2}B_0^2K^2H^{-2}) \nonumber\\
        & \leq 2d^2\log(1+8d^{-3/2}B_0^2K^2H^{-2}).
        \label{TV-3}
    \end{align}
Similarly, by \Cref{lemma:covering number of interval}, we have
    \begin{align*}
        |\mathcal{N}_h([0, H])| \leq 3H/\epsilon.
    \end{align*}
    Combining (\ref{TV-3}) with (\ref{TV-1}) and (\ref{TV-2}), by setting $\epsilon = dH/K$, we have
    \begin{align*}
        \text{(ii)}^2 &\leq  4 H^2\Big(2\log\frac{2H|\mathcal{N}_{h}(\epsilon; R_0, B_0, \gamma)\|\mathcal{N}_{h}([0, H])|}{\delta} + d\log(1+ K/\gamma)\Big) + \frac{6\epsilon^2K^2}{\gamma} \\
        &\leq 4H^2(4d^2 \log(1 + 8d^{-3/2} B_0^2 K^2 H^{-2}) + \log(3K/d) + d \log(1+ K) + 2\log 2H/\delta) + 6d^2H^2 \\
        & \leq 16H^2d^2 (\log(1 + 8d^{-3/2} B_0^2 K^2 H^{-2})  + \log(1+K)/d + 3/8 + \log H/\delta) \\
        & \leq 32H^2 d^2 \log 8d^{-3/2} B_0^2 K^2 H^{-1}/\delta \\
        & = 32H^2 d^2 \log 1024Hd^{1/2} K^2 \xi_{\text{TV}} /\delta\\
        &= 32H^2 d^2 (\log 1024Hd^{1/2} K^2/\delta  + \log \xi_{\text{TV}}) \\
        & \leq 64 H^2 d^2 \xi_{\text{TV}} := \frac{\beta^2}{4}.
    \end{align*}
        Recall the upper bound in (\ref{upperbound in TV}), we have with probability at least $1- \delta$,
    \begin{align}
         &|\mathcal{T}_h^{\lambda} \hat{V}^{\lambda}_{h+1}(s,a)-\la \bphi(s,a), \hat{\bw}_h^\lambda \ra| \nonumber\\
           & \leq \gamma \sum_{i =1 }^d\|\phi_i(s,a)\mathbf{1}_i\|_{\bLambda_h^{-1}}\|\bw_h^\lambda\|_{\bLambda_h^{-1}} + \sum_{i =1 }^d\|\phi_i(s,a)\mathbf{1}_i\|_{\bLambda_h^{-1}}\Big\|\sum_{\tau =1}^K\bphi(s_h^\tau, a_h^{\tau})\eta_h^{\tau}([\hat{V}^{\lambda}_{h+1}(s)]_{\alpha_{\epsilon}})\Big\|_{\bLambda_h^{-1}} \nonumber\\
           & \leq \sum_{i =1 }^d\|\phi_i(s,a)\mathbf{1}_i\|_{\bLambda_h^{-1}}(H\sqrt{d} + \beta/2) \nonumber\\
           &\leq \beta\sum_{i =1 }^d\|\phi_i(s,a)\mathbf{1}_i\|_{\bLambda_h^{-1}} ,\label{C1.2}
    \end{align}
    where \eqref{C1.2} follows by the fact that $ 2H\sqrt{d} \leq \beta $. Hence, the prerequisite is satisfied in \Cref{lemma:regularized robust suboptimality lemma}, we can upper bound the suboptimality gap as:
     \begin{align*}
         \text{SubOpt}(\hat{\pi}, s, \lambda) &\leq 2\sup_{P \in \mathcal{U}^\lambda(P^0)}\sum_{h=1}^H\EE^{\pi^\star, P}\big[\Gamma_h(s_h, a_h)|s_1 = s\big] \\
         & = 2\beta\cdot\sup_{P \in \mathcal{U}^\lambda(P^0)}\sum_{h=1}^H\EE^{\pi^\star, P}\Big[\sum_{i=1}^d\|\phi_i(s,a)\mathbf{1}_i\|_{\bLambda_h^{-1}}|s_1 = s\Big].
     \end{align*}
     \noindent
     This concludes the proof.
    \end{proof}
\subsubsection{Proof of \Cref{thm: instance-dependent upper bounds} - Case with KL Divergence}
\begin{algorithm}[ht]
    \caption{Robust Regularized Pessimistic Value Iteration under KL distance (\algname-KL)}\label{alg:R2PVI-KL}
    \begin{algorithmic}[1]
    \REQUIRE{
        Dataset $\cD$, regularizer $\lambda>0, \gamma > 0$ and parameter $\beta$
        }
        \STATE init $\hat{V}_{H+1}^{\lambda}(\cdot)=0$ 
        \FOR {episode $h=H, \cdots, 1$}{
            \STATE $\bLambda_h \leftarrow \sum_{\tau = 1}^K \bphi(s_h^{\tau}, a_h^{\tau})(\bphi(s_h^{\tau}, a_h^{\tau}))^{\top} + \gamma \mathbf{I}$ 
            \STATE $\hat{\bw}'_h \leftarrow \bLambda_h^{-1}\big(\sum_{\tau=1}^{K} \bphi(s_h^{\tau}, a_h^{\tau} ) e^{-\frac{\hat{V}_{h+1}^{\lambda}(s^\tau_{h+1})}{\lambda}} \big)$ 
            \STATE $\hat{\bw}^\lambda_h \leftarrow -\lambda \log \max\{\hat{\bw}'_h, e^{-H/\lambda}\}$ \label{eq:modification}
            \STATE $\Gamma_h(\cdot, \cdot) \leftarrow  \beta \sum_{i=1}^d\|\phi_i(\cdot, \cdot) \mathbf{1}_i\|_{\Lambda_h^{-1}}$
            \STATE $\hat{Q}_h^{\lambda}(\cdot, \cdot) \leftarrow \min \{\bphi(\cdot, \cdot)^{\top}(\btheta_h + \hat{\bw}^\lambda_h) - \Gamma_h(\cdot, \cdot), H-h+1\}^+$
            \STATE $\hat{\pi}_h(\cdot | \cdot) \leftarrow \argmax_{\pi_h}\langle \hat{Q}_h^{\lambda}(\cdot, \cdot), \pi_h(\cdot| \cdot) \rangle_{\mathcal{A}}$ and $\hat{V}_h^{\lambda}(\cdot) \leftarrow \langle \hat{Q}_h^{\lambda}(\cdot, \cdot), \hat{\pi}_h(\cdot|\cdot) \rangle_{\mathcal{A}}$
        }\ENDFOR
    \end{algorithmic}
\end{algorithm}
For completeness, we present the \algname\ algorithm specific to the KL distance in \Cref{alg:R2PVI-KL}. Note that we have used the following closed form solution
\begin{align}
    \hat{\bw}'_h &= \argmin_{\bw \in \RR^d} \sum_{\tau=1}^K \Big(e^{-\frac{\hat{V}_{h+1}^{\lambda}(s^\tau_{h+1})}{\lambda}} - \bphi(s_h^\tau, a_h^{\tau})^{\top}\bw \Big)^2 + \gamma \| \bw \|_2^2. \\
    &=\bLambda_h^{-1}\big(\sum_{\tau=1}^{K} \bphi(s_h^{\tau}, a_h^{\tau} ) e^{-\frac{\hat{V}_{h+1}^{\lambda}(s^\tau_{h+1})}{\lambda}} \big).
\end{align}
Our proof relies on the following lemmas on bounding the regression parameter and $\epsilon$-covering number of the robust value function class. 
\begin{lemma}[Bound of weights - KL]
 \label{lemma:bound of weights in KL distance}
     For any $h \in [H]$,
     \begin{align*}
        \|\bw^\lambda_h\|_2 \leq \sqrt{d}, \|\hat{\bw}'_h\|_2 \leq \sqrt{\frac{Kd}{\gamma}}.
     \end{align*}
 \end{lemma}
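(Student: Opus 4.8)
The plan is to prove the two inequalities separately; neither needs a concentration argument. For the first (population) bound I would begin from the coordinatewise formula of \Cref{prop:linear of bellman operator}, $w_{h,i}^\lambda=\inf_{\mu_{h,i}\in\Delta(\cS)}\{\EE^{\mu_{h,i}}[V_{h+1}^\lambda(s)]+\lambda D_{\mathrm{KL}}(\mu_{h,i}\|\mu_{h,i}^0)\}$, and apply the closed-form KL duality of \Cref{prop:the regularized duality under KL divergence} to rewrite it as $w_{h,i}^\lambda=-\lambda\log\EE_{s\sim\mu_{h,i}^0}[e^{-V_{h+1}^\lambda(s)/\lambda}]$; equivalently, by \Cref{assumption:linear MDP}, the associated regression target $\bw'_h$ (defined by $\EE_{s'\sim P_h^0(\cdot\mid s,a)}[e^{-V_{h+1}^\lambda(s')/\lambda}]=\langle\bphi(s,a),\bw'_h\rangle$) has $i$-th coordinate $\EE_{s\sim\mu_{h,i}^0}[e^{-V_{h+1}^\lambda(s)/\lambda}]$. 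The single nontrivial ingredient is that the robust regularized value function is bounded, $V_{h+1}^\lambda(s)\in[0,H]$: it is nonnegative because rewards and $f$-divergences are nonnegative, and it is at most $H$ because evaluating the infimum that defines $V_{h+1}^\lambda$ at the nominal kernel $P^0$ kills the penalty term and leaves at most $H$. Therefore $e^{-V_{h+1}^\lambda(s)/\lambda}\in[e^{-H/\lambda},1]$, each of the $d$ coordinates of $\bw'_h$ lies in $[0,1]$, and $\|\bw'_h\|_2\le\sqrt d$ (and by the same reasoning $w_{h,i}^\lambda\in[0,H]$).

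For the second (empirical) bound I would use the ridge closed form $\hat\bw'_h=\bLambda_h^{-1}\sum_{\tau=1}^K\bphi_h^\tau y_h^\tau$ with $y_h^\tau:=e^{-\hat V_{h+1}^\lambda(s_{h+1}^\tau)/\lambda}\in[0,1]$, where $y_h^\tau\in[0,1]$ holds because \Cref{alg:R2PVI-KL} clips $\hat V_{h+1}^\lambda$ to $[0,H-h+1]$. Fixing a unit vector $\bv$, the triangle inequality and Cauchy--Schwarz in the $\bLambda_h^{-1}$ inner product give $|\bv^\top\hat\bw'_h|\le\sum_{\tau=1}^K|y_h^\tau|\,\|\bv\|_{\bLambda_h^{-1}}\|\bphi_h^\tau\|_{\bLambda_h^{-1}}\le\gamma^{-1/2}\sum_{\tau=1}^K\|\bphi_h^\tau\|_{\bLambda_h^{-1}}$, using $\bLambda_h\succeq\gamma\mathbf I$ for the last step. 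A second Cauchy--Schwarz over the $K$ trajectories bounds this by $\gamma^{-1/2}\sqrt K\,(\sum_{\tau=1}^K\|\bphi_h^\tau\|_{\bLambda_h^{-1}}^2)^{1/2}$, and the elliptical-potential identity $\sum_{\tau=1}^K\bphi_h^{\tau\top}\bLambda_h^{-1}\bphi_h^\tau=\mathrm{tr}\!\big(\bLambda_h^{-1}\sum_{\tau=1}^K\bphi_h^\tau\bphi_h^{\tau\top}\big)\le\mathrm{tr}(\bLambda_h^{-1}\bLambda_h)=d$ gives $|\bv^\top\hat\bw'_h|\le\sqrt{Kd/\gamma}$; taking the supremum over unit $\bv$ yields $\|\hat\bw'_h\|_2\le\sqrt{Kd/\gamma}$.

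There is no genuine obstacle here — the proof is pure bookkeeping. The only two points that merit a sentence of justification are (i) the range $V_{h+1}^\lambda\in[0,H]$, together with the fact that the clipping in \Cref{alg:R2PVI-KL} keeps $\hat V_{h+1}^\lambda$ in the same range so that the regression responses lie in $[0,1]$, and (ii) the trace/elliptical-potential bound $\sum_\tau\|\bphi_h^\tau\|_{\bLambda_h^{-1}}^2\le d$. The lemma is auxiliary: these weight bounds are exactly what is later needed to split $\bw_h^\lambda-\hat\bw_h^\lambda$ into a ridge-bias term and a noise term and to set up the bounded robust value function class and its $\epsilon$-cover used in the KL case of \Cref{thm: instance-dependent upper bounds}.
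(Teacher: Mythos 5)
Your proof is correct and follows essentially the same route as the paper: the population bound comes from the fact that each coordinate $\EE_{s\sim\mu_{h,i}^0}[e^{-\hat V_{h+1}^\lambda(s)/\lambda}]\in[0,1]$ (your coordinatewise argument is in fact cleaner than the paper's integral bound on $\|\bmu_h^0(s)\|_2$), and the empirical bound uses the same chain of ideas as the paper — responses in $[0,1]$, $\bLambda_h\succeq\gamma\mathbf{I}$, Cauchy--Schwarz over the $K$ samples, and the trace identity $\sum_\tau\bphi_h^{\tau\top}\bLambda_h^{-1}\bphi_h^\tau=\mathrm{Tr}(\bLambda_h^{-1}(\bLambda_h-\gamma\mathbf{I}))\le d$ — with your unit-vector duality step being only a cosmetic variant of the paper's direct triangle inequality on $\|\bLambda_h^{-1}\bphi_h^\tau\|_2$. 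Your reading of $\bw_h^\lambda$ (and $\bw'_h$) as the exponential-transformed regression targets, rather than the $-\lambda\log$ quantities of \Cref{prop:linear of bellman operator}, matches exactly how the paper uses this lemma in the KL analysis, so the $\sqrt d$ bound you prove is the intended one.
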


\begin{lemma}[Bound of covering number - KL]
 \label{lemma: covering number of function class in KL}
     For any $h \in [H]$, let $\mathcal{V}_h$ denote a class of functions mapping from $\mathcal{S}$ to $\RR$ with the following form:
     \begin{align*}
        V_h(x; \btheta, \beta, \bLambda_h) = \max_{a \in \mathcal{A}} \Big\{ \bphi(s,a)^{\top} \btheta - \lambda \log\Big(1+\beta \sum_{i=1}^d\|\phi_i(\cdot, \cdot) \mathbf{1}_i\|_{\bLambda_h^{-1}}\Big) \Big\}_{[0, H-h+1]},
     \end{align*}
    the parameters $(\btheta, \beta, \bLambda_h) $ satisfy $\|\btheta\|_2 \leq L, \beta \in [0, B], \gamma_{\min}(\bLambda_h) \geq \gamma$. Let $\mathcal{N}_h(\epsilon)$ be the $\epsilon$-covering number of $\mathcal{V}$ with respect to the distance $\text{dist}(V_1, V_2) = \sup_x|V_1(x) - V_2(x)|$.Then
     \begin{align*}
        \log|\mathcal{N}_h(\epsilon)| \leq d\log(1+4L/\epsilon) + d^2\log(1+8\lambda^2d^{1/2}B^2/\gamma \epsilon^2).
     \end{align*}     
 \end{lemma}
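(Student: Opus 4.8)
This is a covering-number bound for a linearly parametrized value-function class carrying a \emph{logarithmic} bonus term, so I would follow the volumetric argument of \citet[Lemma D.6]{jin2020provably}, modified in two ways: to handle the $d$-rectangular bonus $\sum_{i=1}^d\|\phi_i\mathbf{1}_i\|_{\bLambda_h^{-1}}$, and to account for the bonus entering through $\log(1+\cdot)$ rather than linearly (the KL dual form from \Cref{prop:the regularized duality under KL divergence}). The plan is: (1) reparametrize the matrix and rewrite the bonus; (2) prove a Lipschitz estimate for $\mathrm{dist}$ in the parameters; (3) take a product of Euclidean coverings of the parameter sets and count.

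\emph{Step 1 (reparametrization).} For $V_h(\cdot;\btheta,\beta,\bLambda_h)$ in the class, set $\mathbf{M}:=\beta^2\bLambda_h^{-1}$, a symmetric PSD matrix. From $\gamma_{\min}(\bLambda_h)\geq\gamma$ and $\beta\leq B$ we get $\|\mathbf{M}\|_{\mathrm{op}}=\beta^2/\gamma_{\min}(\bLambda_h)\leq B^2/\gamma$, hence $\|\mathbf{M}\|_F\leq d^{1/2}\|\mathbf{M}\|_{\mathrm{op}}\leq d^{1/2}B^2/\gamma$. Since $\phi_i(s,a)\geq0$, the bonus rewrites as $\beta\sum_{i=1}^d\|\phi_i(s,a)\mathbf{1}_i\|_{\bLambda_h^{-1}}=\sum_{i=1}^d\sqrt{\phi_i(s,a)^2\,\mathbf{1}_i^\top\mathbf{M}\mathbf{1}_i}$, so the class is contained in the family $g_{\btheta,\mathbf{M}}(s):=\max_{a\in\cA}\{\bphi(s,a)^\top\btheta-\lambda\log(1+\sum_{i=1}^d\sqrt{\phi_i(s,a)^2\mathbf{1}_i^\top\mathbf{M}\mathbf{1}_i})\}_{[0,H-h+1]}$ indexed by $\|\btheta\|_2\leq L$ and PSD $\mathbf{M}$ with $\|\mathbf{M}\|_F\leq d^{1/2}B^2/\gamma$.

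\emph{Step 2 (Lipschitz estimate).} I would show $\mathrm{dist}(g_{\btheta_1,\mathbf{M}_1},g_{\btheta_2,\mathbf{M}_2})\leq\|\btheta_1-\btheta_2\|_2+\lambda\sqrt{\|\mathbf{M}_1-\mathbf{M}_2\|_F}$. Indeed, $\max_a$ and the clip $[\cdot]_{[0,H-h+1]}$ are $1$-Lipschitz in the sup norm and may be dropped; the linear part contributes $\sup_{s,a}|\bphi(s,a)^\top(\btheta_1-\btheta_2)|\leq\|\btheta_1-\btheta_2\|_2$ since $\|\bphi(s,a)\|_2\leq\|\bphi(s,a)\|_1=1$; and for the bonus I would chain (i) $t\mapsto\lambda\log(1+t)$ is $\lambda$-Lipschitz on $[0,\infty)$, (ii) $|\sqrt a-\sqrt b|\leq\sqrt{|a-b|}$ for $a,b\geq0$, and (iii) Jensen's inequality applied with the probability vector $\bphi(s,a)$ together with $|\mathbf{1}_i^\top(\mathbf{M}_1-\mathbf{M}_2)\mathbf{1}_i|\leq\|\mathbf{M}_1-\mathbf{M}_2\|_{\mathrm{op}}\leq\|\mathbf{M}_1-\mathbf{M}_2\|_F$, which together give $\sup_{s,a}\big|\sum_i\sqrt{\phi_i^2\mathbf{1}_i^\top\mathbf{M}_1\mathbf{1}_i}-\sum_i\sqrt{\phi_i^2\mathbf{1}_i^\top\mathbf{M}_2\mathbf{1}_i}\big|\leq\sqrt{\|\mathbf{M}_1-\mathbf{M}_2\|_F}$.

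\emph{Step 3 (counting) and the main obstacle.} Let $\mathcal{C}_\theta$ be a minimal $(\epsilon/2)$-cover of $\{\btheta:\|\btheta\|_2\leq L\}$ in $\|\cdot\|_2$ and $\mathcal{C}_M$ a minimal $(\epsilon^2/4\lambda^2)$-cover, in $\|\cdot\|_F$, of the PSD matrices with $\|\mathbf{M}\|_F\leq d^{1/2}B^2/\gamma$. Given any $g_{\btheta,\mathbf{M}}$, choosing $(\btheta',\mathbf{M}')\in\mathcal{C}_\theta\times\mathcal{C}_M$ within those radii of $(\btheta,\mathbf{M})$ gives, by Step 2, $\mathrm{dist}(g_{\btheta,\mathbf{M}},g_{\btheta',\mathbf{M}'})\leq\epsilon/2+\lambda\sqrt{\epsilon^2/4\lambda^2}=\epsilon$, so $\mathcal{N}_h(\epsilon)\leq|\mathcal{C}_\theta|\,|\mathcal{C}_M|$. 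The standard volume bound $(1+2R/\rho)^n$ for the $\rho$-covering number of a radius-$R$ Euclidean ball in $\RR^n$ then yields $|\mathcal{C}_\theta|\leq(1+4L/\epsilon)^d$ and $|\mathcal{C}_M|\leq(1+8\lambda^2 d^{1/2}B^2/(\gamma\epsilon^2))^{d^2}$ (ambient dimension at most $d^2$), and taking logarithms gives the claimed bound. The only delicate point is the bonus estimate in Step 2, namely the chain \emph{$\log(1+\cdot)$ is $\lambda$-Lipschitz, then the square-root inequality, then Jensen on the simplex, then passing to the Frobenius norm}, together with the bookkeeping of constants so that choosing matrix-cover precision $\epsilon^2/4\lambda^2$ produces exactly the factor $8\lambda^2 d^{1/2}B^2/(\gamma\epsilon^2)$ in the statement; everything else is routine.
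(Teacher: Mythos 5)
Your proposal is correct and follows essentially the same route as the paper's proof: reparametrize via $\mathbf{M}=\beta^2\bLambda_h^{-1}$ with $\|\mathbf{M}\|_F\leq d^{1/2}B^2/\gamma$, establish the Lipschitz bound $\mathrm{dist}\leq\|\btheta_1-\btheta_2\|_2+\lambda\sqrt{\|\mathbf{M}_1-\mathbf{M}_2\|_F}$ (the paper uses the identity $|\log\tfrac{1+x}{1+y}|\leq|x-y|$, which is the same $1$-Lipschitz fact you invoke, and the same square-root and simplex-normalization steps), and then count via an $\epsilon/2$-cover of the $\btheta$-ball and an $\epsilon^2/(4\lambda^2)$-Frobenius cover of the matrix ball using the $(1+2R/\rho)^n$ volume bound. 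Your constants and cover radii match the statement exactly, so no gap remains.
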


\begin{proof}
The \algname\ with KL-divergence is presented in \Cref{alg:R2PVI-KL}. Similar to the proof of TV divergence, we decompose the estimation uncertainty between $\mathcal{T}_h^{\lambda}$ and $\hat{\mathcal{T}}_h^{\lambda}$ as:
         \begin{align}
             |\mathcal{T}_h^{\lambda} \hat{V}^{\lambda}_{h+1}(s,a)-\la \bphi(s,a), \hat{\bw}_h^\lambda \ra| &= \big|\bphi(s,a)^{\top}\big(\btheta_h - \lambda \log \bw_h^{\lambda} - \btheta_h + \lambda \log \max \{\hat{\bw}'_h, e^{-H/\lambda} \}\big)\big| \nonumber \\
             &=\big |\bphi(s,a)^{\top}\big(\lambda \log \max \{\hat{\bw}'_h, e^{-H/\lambda} \} - \lambda \log \bw_h^{\lambda}\big)\big| \nonumber \\
             &= \lambda \Big|\sum_{i=1}^d \phi_i(s,a) \log \frac{\max \{ \hat{w}'_{h,i}, e^{-H/\lambda}\}}{w_{h,i}^{\lambda}}\Big| \nonumber \\
             &\leq 
             \lambda \sum_{i=1}^d \phi_i(s,a) \Big|\log \frac{\max \{\hat{w}'_{h,i}, e^{-H/\lambda}\}}{w_{h,i}^{\lambda}} \Big| \nonumber \\
             & \leq \lambda \sum_{i=1}^d \phi_i(s,a) \big|\log \big(1+ e^{H/\lambda}|\max \{\hat{w}'_{h,i}, e^{-H/\lambda}\} - w_{h,i}^{\lambda}|\big)\big| \label{KL1}\\
             & \leq \lambda \sum_{i=1}^d \phi_i(s,a) \log \big(1+ e^{H/\lambda}|\hat{w}'_{h,i} - w_{h,i}^{\lambda}|\big) \label{KL2}\\
             & \leq \lambda \log \Big(\sum_{i=1}^d\phi_i(s,a)+ e^{H/\lambda}\sum_{i=1}^d\phi_i(s,a)|\hat{w}'_{h,i} - w_{h,i}^{\lambda}|\Big) \label{KL3} \\
             & = \lambda \log\Big(1 + e^{H/\lambda}\sum_{i=1}^d\phi_i(s,a) \mathbf{1}_i^\top|\hat{\bw}'_h - \bw_h^\lambda|\Big),\nonumber
         \end{align}
where \eqref{KL1} and \eqref{KL2} comes from the fact that: 
         \begin{align*}
           w_{h,i}^{\lambda} = \EE_{s' \sim \mu_{h,i}^0}\Big[e^{-\frac{\hat{V}_h^{\lambda}(s')}{\lambda}}\Big] \geq \EE_{s' \sim \mu_{h,i}^0}[e^{-H/\lambda}]=e^{-H/\lambda}, |\log A - \log B| = \log \Big(1 + \frac{|A-B|}{\min \{ A, B\} }\Big),  
         \end{align*}
and \eqref{KL3} comes from the Jensen's inequality applying to function $\log(x)$.   
Therefore, our next goal is to bound the term $\sum_{i=1}^d\phi_i(s,a)\mathbf{1}_i^\top|\hat{\bw}'_h - \bw_h^{\lambda}|$. Specifically, we have
         \begin{align*}           &\sum_{i=1}^d\phi_i(s,a)\mathbf{1}_i^\top|\hat{\bw}'_h - \bw_h^{\lambda}| \\&= \sum_{i=1}^d\phi_i(s,a)\mathbf{1}_i^\top\Big|\bw_h^\lambda - \bLambda_h^{-1}\sum_{\tau=1}^K(\bphi^{\tau}_h) e^{-\frac{\hat{V}^{\lambda}_{h+1}(s_{h+1})}{\lambda}}\Big| \\
            &=\sum_{i=1}^d\phi_i(s,a)\mathbf{1}_i^\top\Big|\bw_h^\lambda - \bLambda_h^{-1}\sum_{\tau=1}^K\bphi^{\tau}_h(\bphi_h^\tau)^{\top}\bw_h^\lambda +\bLambda_h^{-1}\sum_{\tau=1}^K\bphi^{\tau}_h(\bphi_h^\tau)^{\top}\bw_h^\lambda -\bLambda_h^{-1}\sum_{\tau=1}^K(\bphi^{\tau}_h) e^{-\frac{\hat{V}_{h+1}^{\lambda}(s_{h+1})}{\lambda}}\Big| \\
            &=\sum_{i=1}^d\phi_i(s,a)\mathbf{1}_i^\top\Big(\underbrace{\Big|\bw_h^\lambda - \bLambda_h^{-1}\sum_{\tau=1}^K\bphi^{\tau}_h(\bphi_h^\tau)^{\top}\bw_h^\lambda\Big|}_{\text{(i)}} +\underbrace{\Big|\bLambda_h^{-1}\sum_{\tau=1}^K \bphi^{\tau}_h (\bphi_h^\tau)^{\top} \bw_h^\lambda - \bLambda_h^{-1}\sum_{\tau=1}^K(\bphi^{\tau}_h) e^{-\frac{\hat{V}_{h+1}^{\lambda}(s_{h+1})}{\lambda}}\Big|}_{\text{(ii)}} \Big).
         \end{align*}         
        Next, we upper bound term (i) and (ii), respectively.
For the first term, we have: 
        \begin{align}
\sum_{i=1}^d\phi_i(s,a)\mathbf{1}_i^\top \cdot \text{(i)} &= \sum_{i=1}^d\phi_i(s,a)\mathbf{1}_i^\top(|\bw_h^{\lambda} - \bLambda_h^{-1}
            (\bLambda_h - \gamma \mathbf{I})\bw_h^{\lambda}|) \nonumber\\
            &=\gamma \sum_{i=1}^d\phi_i(s,a)\mathbf{1}_i^\top\bLambda_h^{-1}|\bw_h^{\lambda}| \nonumber\\
            &\leq \gamma \sum_{i=1}^d \|\phi_i(s,a)\mathbf{1}_i\|_{\bLambda_h^{-1}}\|\bw_h^{\lambda}\|_{\bLambda_h^{-1}} \label{KL4}\\
            &\leq \gamma \sqrt{d}\sum_{i=1}^d \|\phi_i(s,a)\mathbf{1}_i\|_{\bLambda_h^{-1}},\label{KL5}
        \end{align}
        where \eqref{KL4} follows from the Cauchy-Schwartz inequality, \eqref{KL5} follows from the fact that:
        \begin{align*}
        \|\bw_h^\lambda\|_{\bLambda_h^{-1}} \leq \|\bLambda_h^{-1}\|^{1/2}\|\bw_h^\lambda\|_2 \leq \sqrt{d/\gamma},   
        \end{align*}
        where the last inequality follows from  \Cref{lemma:bound of weights in KL distance} and the fact that $\|\bLambda_h^{-1}\| \leq \gamma^{-1}$. 
Now it remains to bound the term (ii), by the definition of $\eta_h^{\tau}(f) = \EE_{s' \sim P_h^0(\cdot |s^{\tau}_h, a_h^{\tau})}[f(s')] - f(s_{h+1}^{\tau})$, the term (ii) can be rewritten as:
     \begin{align*}
         \text{(ii)} &= \sum_{i=1}^d\phi_i(s,a) \mathbf{1}_i^\top \Big|\bLambda_h^{-1} \sum_{\tau=1}^K \bphi_h^\tau\Big[(\bphi_h^\tau)^{\top}\bw^\lambda_h - e^{-\frac{\hat{V}^{\lambda}_{h+1}(s_{h+1})}{\lambda}}\Big]\Big| \\
         &=\sum_{i=1}^d\phi_i(s,a) \mathbf{1}_i^\top\Big| \bLambda_h^{-1} \sum_{\tau=1}^K \bphi_h^\tau\eta_h^{\tau}\Big(e^{-\frac{\hat{V}^{\lambda}_{h+1}(s)}{\lambda}}\Big)\Big| \\
         & \leq \sum_{i=1}^d \|\phi_i(s,a) \mathbf{1}_i\|_{\bLambda_h^{-1}} \underbrace{\Big\|\sum_{\tau=1}^K \bphi^{\tau}_h \eta_h^{\tau}\Big(e^{-\frac{\hat{V}^{\lambda}_{h+1}(s)}{\lambda}}\Big)\Big\|_{\bLambda_h^{-1}}}_{\text{(iii)}}.
     \end{align*}
      For the rest of the proof, it's left to bound the term (iii). As the $\hat{V}^{\lambda}_{h+1}$ depends on the offline dataset, which makes it difficult to upper bound directly from concentration equality due to the dependence issue, we seek for providing a uniform concentration bound applied to the term (iii), i.e. we aim to upper bound the following term:
    \begin{align*}
        \sup_{V \in \mathcal{V}_{h+1}(R, B, \gamma)}\Big\|\sum_{\tau = 1}^{K} \bphi_{h}^{\tau}\eta_h^{\tau}(e^{-\frac{V}{\lambda}})\Big\|_{\bLambda_h^{-1}}.
    \end{align*}
    Here for all $h \in [H]$, the function class is defined as:
    \begin{align*}
        &\mathcal{V}_h(R,B, \gamma) = \{ V_h(x; \btheta, \beta, \bLambda_h) : \|\btheta\|_2 \leq R, \beta \in [0,B], \gamma_{\min}(\bLambda_h) \geq \gamma \},
    \end{align*}
    where $V_h(x; \btheta, \beta, \bLambda_h) = \max_{a \in \mathcal{A}} \{ \bphi(s,a)^{\top} \btheta - \lambda \log(1+\beta \sum_{i=1}^d\|\phi_i(\cdot, \cdot) \mathbf{1}_i\|_{\bLambda_h^{-1}}) \}_{[0, H-h+1]}$.
    In order to ensure $\hat{V}^{\lambda}_{h+1} \in \mathcal{V}_{h+1}(R_0, B_0, \lambda)$, we need to bound $\hat{\btheta}_h = \btheta_h - \lambda \log \max\{\hat{\bw}'_h, e^{-H/\lambda}\}$. Following the fact that:
    \begin{align*}
        \|\hat{\btheta}_h\|_2 \leq \|\btheta_h\|_2 + \lambda \|\log \max\{\hat{\bw}'_h, e^{-H/\lambda}\}\|_2.
    \end{align*}
    By \Cref{lemma:bound of weights in KL distance}, $e^{-H/\lambda} \leq \max\{\hat{w}'_{h,i}, e^{-H/\lambda}\} \leq \max\{\|\hat{\bw}'_h\|, e^{-H/\lambda}\} \leq \max \{ \sqrt{Kd/\lambda},e^{-H/\lambda}\} $, therefore the term can be bounded as:
    \begin{align}
        \|\hat{\btheta}_h\|_2 &\leq \sqrt{d} + \lambda \sqrt{d} \max{\Big(\log\sqrt{\frac{Kd}{\lambda}}, H/\lambda\Big)} \nonumber\\
        &\leq H\sqrt{d} + d \sqrt{K\lambda} \nonumber\\
        &\leq 2Hd\sqrt{K\lambda}.\label{bound of weights in KL}
    \end{align}
    \noindent
    Hence, we can choose $R_0 = 2Hd\sqrt{K\lambda}$ and $B_0 = \beta = 16d\lambda  e^{H/\lambda} \sqrt{(H/\lambda + \xi_{\text{KL}})}$, then we have for all $h \in [H], \hat{V}^{\lambda}_{h+1} \in \mathcal{V}_{h+1}(R_0, B_0, \lambda)$. 
    Next we aim to find a union cover of the $\mathcal{V}_{h+1}(R_0, B_0, \gamma)$, hence the term (iii) can be upper bounded. For all $\epsilon \in (0, \lambda), h \in [H]$, let $\mathcal{N}_h(\epsilon; R, B, \lambda): = \mathcal{N}_h(\epsilon)$ denote the minimal $\epsilon$-cover of $\mathcal{V}_h(R,B, \lambda)$ with respect to the supreme norm. In other words, for any function $\hat{V}^{\lambda} \in \mathcal{V}_h(R, B, \lambda)$, there exists a function $V' \in \mathcal{N}_{h+1}(\epsilon)$ such that 
    \begin{align*}
        \sup_{x \in \mathcal{S}}|\hat{V}^{\lambda}_{h+1}(x) - V'_{h+1}(x)| \leq \epsilon.
    \end{align*}
        Hence, given $\hat{V}^{\lambda}_{h+1}, V'_{h+1}$ satisfying the inequality above, recall the definition of $\eta_h^\tau = \eta_h^{\tau}(f)= \EE_{s' \sim P_h^0(\cdot |s^{\tau}_h, a_h^{\tau})}[f(s')] - f(s_{h+1}^{\tau}) $, we have:
    \begin{align}
        &\Big|\eta_h^{\tau}\Big(e^{-\frac{\hat{V}^{\lambda}_{h+1}(s)}{\lambda}}\Big) - \eta_h^{\tau}\Big(e^{-\frac{V'_{h+1}(s)}{\lambda}}\Big)\Big| \nonumber\\
        &\leq 
        \Big|\EE_{s \sim P_h^0(\cdot |s^{\tau}_h, a_h^{\tau})}\Big[e^{-\frac{\hat{V}^{\lambda}_{h+1}(s)}{\lambda}} - e^{-\frac{V'_{h+1}(s)}{\lambda}}\Big] - e^{-\frac{\hat{V}^{\lambda}_{h+1}(s_{h+1})}{\lambda}} + e^{-\frac{V'_{h+1}(s_{h+1})}{\lambda}}\Big| \nonumber\\
        &\leq \Big|\EE_{s \sim P_h^0(\cdot |s^{\tau}_h, a_h^{\tau})}\Big[e^{-\frac{\hat{V}^{\lambda}_{h+1}(s)}{\lambda}} - e^{-\frac{V'_{h+1}(s)}{\lambda}}\Big]\Big| + \Big|e^{-\frac{\hat{V}^{\lambda}_{h+1}(s_{h+1})}{\lambda}} - e^{-\frac{V'_{h+1}(s_{h+1})}{\lambda}}\Big| \nonumber \\
        &\leq 2\epsilon /\lambda + 2\epsilon / \lambda=4\epsilon /\lambda, \label{KL6}
    \end{align}
    \noindent
    where \eqref{KL6} follows from the fact that for any $s \in \mathcal{S}$, 
    \begin{align*}
       \Big|e^{-\frac{\hat{V}^{\lambda}_{h+1}(s)}{\lambda}} - e^{-\frac{V'_{h+1}(s)}{\lambda}}\Big| \leq e^{\frac{|\hat{V}^{\lambda}_{h+1}(s)-V'_{h+1}(s)|}{\lambda}}  - 1 \leq e^{\frac{\epsilon}{\lambda}}-1 \leq 2\epsilon /\lambda,
    \end{align*}
    where the last inequality is held by the fact that $\epsilon \in (0, \lambda)$.
    By the Cauchy-Schwartz inequality, for any two vectors $a,b \in \RR^d$ and positive definite matrix $\bLambda \in \RR^{d \times d}$, it holds that $\|a+b\|^2_{\bLambda} \leq 2\|a\|_{\bLambda}^2 + 2\|b\|_{\bLambda}^2$, hence for all $h \in [H]$, we have:
    \begin{align}
        |\text{(iii)}|^2 &\leq 2\Big\|\sum_{\tau = 1}^{K} \bphi_{h}^{\tau}\eta_h^{\tau}\Big(e^{-\frac{V'_{h+1}(s)}{\lambda}}\Big)\Big\|^2_{\bLambda_h^{-1}} + 2\Big\|\sum_{\tau = 1}^{K} \bphi_{h}^{\tau}\Big[\eta_h^{\tau}\Big(e^{-\frac{V'_{h+1}(s)}{\lambda}}\Big) - \eta_h^{\tau}\Big(e^{-\frac{\hat{V}^{\lambda}_{h+1}(s)}{\lambda}}\Big)\Big]\Big\|^2_{\bLambda_h^{-1}} \nonumber\\
        &\leq 2\Big\|\sum_{\tau = 1}^{K} \bphi_{h}^{\tau}\eta_h^{\tau}\Big(e^{-\frac{V'_{h+1}(s)}{\lambda}}\Big)\Big\|_{\bLambda_h^{-1}}^2 + 32\epsilon^2/\lambda^2\sum_{\tau, \tau' =1}^K|\bphi_h^\tau\bLambda_h^{-1}\bphi_h^{\tau'}| \nonumber\\
        &\leq 2\sup_{V \in \mathcal{N}_{h+1}(\epsilon)}\Big\|\sum_{\tau = 1}^{K} \bphi_{h}^{\tau}\eta_h^{\tau}\Big(e^{-\frac{V(s)}{\lambda}}\Big)\Big\|_{\bLambda_h^{-1}}^2 + \frac{32\epsilon^2K^2}{\lambda^2 \gamma}.
        \label{KL10}
    \end{align}
     We set $f(s) = e^{-\frac{V(s)}{\lambda}}$, by applying \Cref{lemma:concentration bound}, for any fixed $h \in [H], \delta \in (0, 1)$, we have:
    \begin{align}
        P\Big(\sup_{V \in \mathcal{N}_{h+1}(\epsilon)}\Big\|\sum_{\tau = 1}^K \bphi_h^\tau\eta_{h}^{\tau}\Big(e^{-\frac{V(s)}{\lambda}}\Big)\Big\|_{\bLambda_h^{-1}}^2 \geq4\Big(2\log\frac{H |\mathcal{N}_{h+1}(\epsilon)|}{\delta}+ d\log\Big(1+ \frac{K}{\gamma}\Big)\Big)\Big) \leq \delta/H. \label{KL7}
    \end{align}
    Hence, combining \eqref{KL7} with \eqref{KL10} and let $\gamma = 1$, then for all $h \in [H]$, it holds that
    \begin{align}
        \Big\|\sum_{\tau = 1}^K \bphi_h^\tau\eta_{h}^{\tau}\Big(e^{-\frac{\hat{V}^{\lambda}_{h+1}(s)}{\lambda}}\Big)\Big\|_{\bLambda_h^{-1}}^2 \leq 8\Big(2\log\frac{H |\mathcal{N}_{h+1}(\epsilon)|}{\delta}+ d\log(1+ K) + \frac{4\epsilon^2K^2}{\lambda^2}\Big),
        \label{E.5}
    \end{align}
    with probability at least $1-\delta$.
    By \Cref{lemma: covering number of function class in KL}, recall $L = R_0 = 2Hd\sqrt{K\lambda}$ in this setting, we have 
    \begin{align}
        \log(|\mathcal{N}_{h+1}(\epsilon)|) 
        \leq d\log(1+ 4R_0/\epsilon) + d^2\log(1+ 8\lambda^2 d^{1/2}B^2/ \epsilon^2).  \label{E.6}
    \end{align}
    We then set $\epsilon = d\lambda/{K} \in (0, \lambda)$ and define $\beta' = \beta / \lambda e^{H/\lambda} = 16d \sqrt{(H/\lambda + \xi_{\text{KL}})}$ for brevity, then (\ref{E.6}) 
    can be bounded as:
    \begin{align*}
        \log(|\mathcal{N}_{h+1}(\epsilon)|) 
        &\leq d\log(1+ 4R_0K/d\lambda) + d^2\log(1+ 8\lambda^2 K^2d^{-3/2}e^{\frac{2H}{\lambda}}\beta'^2)   \\
        & = d \log (1+ 4R_0 K/d\lambda) + d^2 \log(e^{-2H/\lambda}+ 8\lambda^2 K^2d^{-3/2}\beta'^2) + 2d^2H/\lambda\\
        &\leq 2d^2\log(8\lambda^2K^2d^{-3/2}\beta'^2) + 2d^2H/\lambda.
    \end{align*}
    \noindent
    Therefore, by combining the result with the inequality (\ref{E.5}), we can get  
    \begin{align}        
        \Big\|\sum_{\tau = 1}^K \bphi_h^\tau\eta_{h}^{\tau}\Big(e^{-\frac{\hat{V}^{\lambda}_{h+1}(s)}{\lambda}}\Big)\Big\|_{\bLambda_h^{-1}}^2 
       &\leq 8\Big(2\log\frac{H}{\delta} + 4d^2H/\lambda + 4d^2\log(8\lambda^2K^2d^{-3/2}\beta'^2) + 4d^2 + d\log(1+K)\Big) \nonumber\\ 
    &\leq 8(4d^2H/\lambda + 4d^2\log(8\lambda^2K^3Hd^{-3/2}\beta'^2/\delta)) \label{KL11}
    \\
    & = 8(4d^2H/\lambda + 4d^2\log(8\lambda^2K^3Hd^{-3/2}/\delta) + 4d^2\log(\beta'^2))\nonumber\\
    & \leq \beta'^2/4, \label{KL12}
    \end{align}
    where \eqref{KL11} follows by the fact that $2\log\frac{H}{\delta} + 4d ^2 + d\log(1+K) \leq 4d^2 \log (\frac{HK}{\delta})$, and \eqref{KL12} is held due to the fact that 
    \begin{align}
        \beta'^2/4 &= 64d^2\Big(H/\lambda + \log\frac{1024d\lambda^2 K^3H}{\delta}\Big)\nonumber \\
                   &= 8\Big(8d^2 H /\lambda + 4d^2\log(8\lambda^2K^3Hd^{-3/2}/\delta)+4d^2\log\frac{1024d^{7/2} \lambda^2 K^3H}{\delta} + 4d^2 \log(128)\Big) \nonumber\\
                   &\geq 8\Big(4d^2 H /\lambda + 4d^2\log(8\lambda^2K^3Hd^{-3/2}/\delta) + 4d^2 \log(\beta'^2)\Big),
                   \label{KL13}
    \end{align}
    where \eqref{KL13} holds by 
    \begin{align*}
        \log(\beta'^2) &= \log\Big(256d^2 \Big(H/\lambda + \log\frac{1024d\lambda^2 K^3H}{\delta}\Big) \Big) \\
        & \leq \log(256d^2) + \Big(H/\lambda + \log\frac{1024d\lambda^2 K^3H}{\delta}\Big)      \\
        & \leq \log(128) + \log\frac{1024d^{7/2} \lambda^2 K^3H}{\delta} + H/\lambda.
    \end{align*}
    By the bound on (i), (ii), (iii), for all $h\in H$ and $(s,a) \in \mathcal{S} \times \mathcal{A}$, with probability at least $1-\delta$, it holds that
    \begin{align}        
   |\mathcal{T}_h^{\lambda} \hat{V}^{\lambda}_{h+1}(s,a)-\la \bphi(s,a), \hat{\bw}_h^\lambda \ra| &\leq\lambda \log\Big(1 + e^{H/\lambda}(\sqrt{d} +\beta'/2)\sum_{i=1}^d\|\phi_i(s,a)\mathbf{1}_i\|_{\bLambda_h^{-1}}\Big) \nonumber\\ 
   &\leq\lambda \log\Big(1 + e^{H/\lambda}\beta'\sum_{i=1}^d\|\phi_i(s,a)\mathbf{1}_i\|_{\bLambda_h^{-1}}\Big) \label{KL14}\\
   & \leq  \beta\sum_{i=1}^d\|\phi_i(s,a)\mathbf{1}_i\|_{\bLambda_h^{-1}},\label{KL15}
    \end{align}
     where \eqref{KL14} follows by the fact that $\beta' \geq 2\sqrt{d}$, \eqref{KL15} follows by the fact that $\log(1+x) \leq x$ holds for any positive $x$. Thus, by \Cref{lemma:regularized robust suboptimality lemma}, we can upper bound the suboptimality gap as:
     \begin{align*}
         \text{SubOpt}(\hat{\pi}_1, s) &\leq 2\sup_{P \in \mathcal{U}^\lambda(P^0)}\sum_{h=1}^H\EE^{\pi^\star, P}[\Gamma_h(s_h, a_h)|s_1 = s] \\
         & = 2\beta\sup_{P \in \mathcal{U}^\lambda(P^0)}\sum_{h=1}^H\EE^{\pi^\star, P}\Big[\sum_{i=1}^d\|\phi_i(s,a)\mathbf{1}_i\|_{\bLambda_h^{-1}}|s_1 = s\Big].
     \end{align*}
     \noindent
     Therefore, we conclude the proof.
\end{proof}

\subsubsection{Proof of \Cref{thm: instance-dependent upper bounds} - Case with $\chi^2$ Divergence} 
\begin{algorithm}[ht]
    \caption{ Robust Regularized Pessimistic Value Iteration under $\chi^2$ distance (\algname-$\chi^2$)}\label{alg:R2PVI-chi2}
    \begin{algorithmic}[1]
    \REQUIRE{
        Dataset $\cD$, regularizer $\lambda>0, \gamma>0$ and parameter $\beta$
        }
        \STATE init $\hat{V}^\lambda_{H+1}(\cdot)=0$
        \FOR {episode $h=H, \cdots, 1$}{
            \STATE $\bLambda_h \leftarrow \sum_{\tau = 1}^K \bphi(s_h^{\tau}, a_h^{\tau})(\bphi(s_h^{\tau}, a_h^{\tau}))^{\top} + \gamma \mathbf{I}$
             \STATE $\hat{\EE}^{\mu_{h,i}^0}[\hat{V}_{h+1}^{\lambda}(s)]_{\alpha} \leftarrow  [\Lambda_h^{-1}(\sum_{\tau=1}^{K} \phi(s_h^{\tau}, a_h^{\tau})^{\top} [\hat{V}_{h+1}^\lambda(s_{h+1}^{\tau})]_{\alpha})]_{[0,H]}$ \hfill $\triangleleft$ {\color{blue} Estimated by \eqref{eq: estimation of EV in chi2}}
             \STATE $\hat{\EE}^{\mu_{h,i}^0}[\hat{V}_{h+1}^{\lambda}(s)]^2_{\alpha} \leftarrow [\Lambda_h^{-1}(\sum_{\tau=1}^{K} \phi(s_h^{\tau}, a_h^{\tau})^{\top} [\hat{V}^\lambda_{h+1}(s_{h+1}^{\tau})]_{\alpha}^2 )]_{[0,H^2]}$  \hfill $\triangleleft$ {\color{blue} Estimated by \eqref{eq: estimation of EV2 in chi2}}
            \STATE Estimate $\hat{w}_{h,i}^{\lambda}$ according to \eqref{eq:parameter estimation x2} 
            \STATE $\Gamma_h(\cdot, \cdot) \leftarrow 
            \beta \sum_{i=1}^d\|\phi_i(\cdot, \cdot)\|_{\Lambda_h^{-1}}$
            \STATE $\hat{Q}_h^{\lambda}(\cdot, \cdot) \leftarrow \min \{\bphi(\cdot, \cdot)^{\top}(\btheta_h +  \hat{\bw}_h^{\lambda}) - \Gamma_h(\cdot, \cdot), H-h+1\}^+$
            \STATE $\hat{\pi}_h(\cdot | \cdot) \leftarrow \argmax_{\pi_h}\langle \hat{Q}_h^{\lambda}(\cdot, \cdot), {\pi}_h(\cdot| \cdot) \rangle_{\mathcal{A}}$ and $\hat{V}_h^{\lambda}(\cdot) \leftarrow  \langle \hat{Q}_h^{\lambda}(\cdot, \cdot), \hat{\pi}_h(\cdot|\cdot) \rangle_{\mathcal{A}}$
        }\ENDFOR
    \end{algorithmic}
\end{algorithm}
For completeness, we present the \algname\ algorithm specific to the $\chi^2$ distance in \Cref{alg:R2PVI-chi2},  which gives closed form solution of \eqref{eq: estimation of EV in chi2} and \eqref{eq: estimation of EV2 in chi2}.
Before the proof, we first present the bound on weights under $\chi^2$-divergence:
\begin{lemma}[Bound of weights - $\chi^2$]
 \label{lemma:bound of weights in chi2 distance}
     For any $h \in [H]$,
     \begin{align*}
            \|\hat{\bw}_h^{\lambda}\|_2 
             \leq \sqrt{d}\Big(H + \frac{H^2}{2\lambda}\Big).
     \end{align*}
 \end{lemma}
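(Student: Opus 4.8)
The plan is to exploit the fully explicit, closed-form definition of $\hat{\bw}_h^{\lambda}$ in \eqref{eq:parameter estimation x2}: it is built coordinate by coordinate from two clipped ridge-regression estimates, so I would simply bound $|\hat{w}_{h,i}^{\lambda}|$ for each $i\in[d]$ and then pass to the $\ell_2$-norm via $\|\hat{\bw}_h^{\lambda}\|_2 \le \sqrt{d}\,\|\hat{\bw}_h^{\lambda}\|_\infty$. No concentration argument is needed here; the bound is deterministic in the data, which is why this lemma is purely preparatory.

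First I would fix $h\in[H]$ and $i\in[d]$ and introduce the shorthand $A_\alpha := \hat{\EE}^{\mu_{h,i}^0}[\hat{V}_{h+1}^{\lambda}(s)]_{\alpha}$ and $B_\alpha := \hat{\EE}^{\mu_{h,i}^0}[\hat{V}_{h+1}^{\lambda}(s)]^2_{\alpha}$. By construction in \eqref{eq: estimation of EV in chi2} and \eqref{eq: estimation of EV2 in chi2}, these are the $i$-th entries of ridge-regression outputs \emph{after clipping}, so $A_\alpha\in[0,H]$ and $B_\alpha\in[0,H^2]$ for every admissible dual variable $\alpha$, with no a priori control of the unclipped estimators required. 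The inner objective maximized over $\alpha$ in \eqref{eq:parameter estimation x2} is $g(\alpha) := A_\alpha + \tfrac{1}{4\lambda}A_\alpha^2 - \tfrac{1}{4\lambda}B_\alpha$, and the key observation is that the three ranges yield a uniform two-sided bound: since $A_\alpha\le H$ and $-\tfrac{1}{4\lambda}B_\alpha\le 0$, we get $g(\alpha)\le H+\tfrac{H^2}{4\lambda}$; since $A_\alpha\ge 0$ and $-\tfrac{1}{4\lambda}B_\alpha\ge -\tfrac{H^2}{4\lambda}$, we get $g(\alpha)\ge -\tfrac{H^2}{4\lambda}$. Taking the maximum over $\alpha$ in the (nonempty compact) interval $[(\hat{V}_{h+1}^{\lambda})_{\min},(\hat{V}_{h+1}^{\lambda})_{\max}]$ preserves both bounds, hence $-\tfrac{H^2}{4\lambda}\le \hat{w}_{h,i}^{\lambda}\le H+\tfrac{H^2}{4\lambda}$, which gives $|\hat{w}_{h,i}^{\lambda}|\le H+\tfrac{H^2}{4\lambda}$.

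Finally, since this coordinatewise bound is uniform over $i\in[d]$, I would conclude
\[
\|\hat{\bw}_h^{\lambda}\|_2 = \Big(\sum_{i=1}^d (\hat{w}_{h,i}^{\lambda})^2\Big)^{1/2} \le \sqrt{d}\,\Big(H+\frac{H^2}{4\lambda}\Big) \le \sqrt{d}\,\Big(H+\frac{H^2}{2\lambda}\Big),
\]
which is exactly the claimed estimate (in fact slightly sharper, with $1/(4\lambda)$ in place of $1/(2\lambda)$, so the stated version is amply safe). There is no real obstacle in this argument; the only point that warrants a moment of care is the sign bookkeeping on the subtracted term $-\tfrac{1}{4\lambda}B_\alpha$ — it is harmless for the upper bound (worst case $B_\alpha=0$) but is precisely what drives the lower bound (worst case $B_\alpha=H^2$) — together with the reminder that it is the clipping in \eqref{eq: estimation of EV in chi2}--\eqref{eq: estimation of EV2 in chi2} that forces $A_\alpha\in[0,H]$ and $B_\alpha\in[0,H^2]$ in the first place.
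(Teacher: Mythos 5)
Your proof is correct and follows essentially the same route as the paper: bound each coordinate of $\hat{\bw}_h^{\lambda}$ deterministically using the clipping $\hat{\EE}^{\mu_{h,i}^0}[\hat{V}_{h+1}^{\lambda}(s)]_{\alpha}\in[0,H]$ and $\hat{\EE}^{\mu_{h,i}^0}[\hat{V}_{h+1}^{\lambda}(s)]^2_{\alpha}\in[0,H^2]$, then pay a $\sqrt{d}$ factor to pass to the $\ell_2$-norm. Your explicit two-sided bookkeeping even yields the slightly sharper constant $H+\tfrac{H^2}{4\lambda}$, which of course implies the stated $\sqrt{d}\bigl(H+\tfrac{H^2}{2\lambda}\bigr)$.
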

   \begin{proof}[Proof of \Cref{thm: instance-dependent upper bounds} - $\chi^2$]
   The \algname\ with $\chi^2$-divergence is presented in \Cref{alg:R2PVI-chi2}. By the definition of $\mathcal{T}_h^{\lambda}, \hat{\mathcal{T}}_h^{\lambda}$, we have
         \begin{align}
             &\mathcal{T}_h^{\lambda} \hat{V}^{\lambda}_{h+1}(s,a)-\la \bphi(s,a), \hat{\bw}_h^\lambda \ra \nonumber\\
             &= \bphi(s,a)^{\top}(\btheta_h + \bw_h^{\lambda} - \btheta_h - \hat{\bw}_h^{\lambda}) \nonumber\\
             &= \sum_{i=1}^d \phi_i(s,a)(w_{h, i}^{\lambda} -  \hat{w}'_{h,i}) \nonumber\\
             & = \sum_{i=1}^d \phi_i(s,a)\Big[ \sup_{\alpha \in [0, H]}\Big\{\EE_{s \sim \mu_{h,i}^0}[\hat{V}^{\lambda}_{h+1}(s)]_{\alpha} + \frac{1}{4\lambda}(\EE_{s \sim \mu_{h,i}^0}[\hat{V}^{\lambda}_{h+1}(s)]_{\alpha})^2 - \frac{1}{4\lambda}\EE_{s \sim \mu_{h,i}^0}[\hat{V}^{\lambda}_{h+1}(s)]_{\alpha}^2 \Big\}
             \nonumber\\
             & \quad - \sup_{\alpha \in [0, H]}\Big\{\hat{\EE}^{\mu_{h,i}^0}[\hat{V}^{\lambda}_{h+1}(s)]_{\alpha} + \frac{1}{4\lambda}(\hat{\EE}^{\mu_{h,i}^0}[\hat{V}^{\lambda}_{h+1}(s)]_{\alpha})^2 - \frac{1}{4\lambda}\hat{\EE}^{\mu_{h,i}^0}[\hat{V}^{\lambda}_{h+1}(s)]_{\alpha}^2 \Big\} \Big].  \label{chi2-first decompose}
         \end{align}
        To continue, for any $i \in [d]$, we denote 
         \begin{align*}
             \alpha_i = \argmax_{\alpha \in [0, H]}\Big\{\EE_{s \sim \mu_{h,i}^0}[\hat{V}^{\lambda}_{h+1}(s)]_{\alpha} + \frac{1}{4\lambda}(\EE_{s \sim \mu_{h,i}^0}[\hat{V}^{\lambda}_{h+1}(s)]_{\alpha})^2 - \frac{1}{4\lambda}\EE_{s \sim \mu_{h,i}^0}[\hat{V}^{\lambda}_{h+1}(s)]_{\alpha}^2 \Big\}.
         \end{align*}
         Hence, \eqref{chi2-first decompose} can be further upper bounded as 
         \begin{align}
             &\mathcal{T}_h^{\lambda} \hat{V}^{\lambda}_{h+1}(s,a)-\la \bphi(s,a), \hat{\bw}_h^\lambda \ra \nonumber
             \\
             &\leq 
              \underbrace{\sum_{i=1}^d \phi_i(s,a)\big(\EE_{s \sim \mu_{h,i}^0}[\hat{V}^{\lambda}_{h+1}(s)]_{\alpha_i} - \hat{\EE}^{\mu_{h,i}^0}[\hat{V}^{\lambda}_{h+1}(s)]_{\alpha_i}\big)\Big(\frac{1}{4\lambda}\big(\EE_{s \sim \mu_{h,i}^0}[\hat{V}^{\lambda}_{h+1}(s)]_{\alpha_i} + \hat{\EE}^{\mu_{h,i}^0}[\hat{V}^{\lambda}_{h+1}(s)]_{\alpha_i}\big)+1\Big)}_{\text{(i)}} \nonumber  \\
             & \quad- \underbrace{\sum_{i=1}^d \phi_i(s,a)\frac{1}{4\lambda}\big(\EE_{s \sim \mu_{h,i}^0}[\hat{V}^{\lambda}_{h+1}(s)]^2_{\alpha_i} - \hat{\EE}^{\mu_{h,i}^0}[\hat{V}^{\lambda}_{h+1}(s)]^2_{\alpha_i}\big)}_{\text{(ii)}}. \label{upper bound for suboptimality}
         \end{align}
         Next, we bound (i) and (ii), respectively.
         \paragraph{Bounding term (i).} We define 
         \begin{align*}
             \tilde{\EE}^{\mu_{h,i}^0}[\hat{V}_{h+1}^\lambda(s)]_{\alpha} &= \Big[\argmin_{\bw \in R^d} \sum_{\tau=1}^K ([\hat{V}_{h+1}^{\lambda}(s_{h+1}^\tau)]_{\alpha} - \bphi(s_h^\tau, a_h^{\tau})^{\top}\bw)^2 + \gamma \| \bw \|_2^2 \Big]^i.
         \end{align*}
        Considering the gap between the $\hat{\EE}^{\mu_{h,i}^0}[\hat{V}^{\lambda}_{h+1}(s)]_{\alpha_i}$ and $\tilde{\EE}^{\mu_{h,i}^0}[\hat{V}^{\lambda}_{h+1}(s)]_{\alpha_i}$ due to the definition that  $\hat{\EE}^{\mu_{h,i}^0}[\hat{V}^{\lambda}_{h+1}(s)]_{\alpha_i} = [\tilde{\EE}^{\mu_{h,i}^0}[\hat{V}^{\lambda}_{h+1}(s)]_{\alpha_i}]_{[0,H]}$, we eliminate the clip operator at first. We rewrite (i) as follows:
        \begin{align*}
            \text{(i)} &= \sum_{i=1}^d \phi_i(s,a)\big(\EE_{s \sim \mu_{h,i}^0}[\hat{V}^{\lambda}_{h+1}(s)]_{\alpha_i} - \hat{\EE}^{\mu_{h,i}^0}[\hat{V}^{\lambda}_{h+1}(s)]_{\alpha_i}\big)\Big(\frac{1}{4\lambda}\big(\EE_{s \sim \mu_{h,i}^0}[\hat{V}^{\lambda}_{h+1}(s)]_{\alpha_i} + \hat{\EE}^{\mu_{h,i}^0}[\hat{V}^{\lambda}_{h+1}(s)]_{\alpha_i}\big)+1\Big) \\
            & =  \sum_{i=1}^d \phi_i(s,a)(\EE_{s \sim \mu_{h,i}^0}[\hat{V}^{\lambda}_{h+1}(s)]_{\alpha_i} - \tilde{\EE}^{\mu_{h,i}^0}[\hat{V}^{\lambda}_{h+1}(s)]_{\alpha_i}) \\
            &\qquad \times\underbrace{\Big(\frac{1}{4\lambda}\big(\EE_{s \sim \mu_{h,i}^0}[\hat{V}^{\lambda}_{h+1}(s)]_{\alpha_i} + \hat{\EE}^{\mu_{h,i}^0}[\hat{V}^{\lambda}_{h+1}(s)]_{\alpha_i}\big)+1\Big) \frac{\EE^{\mu_{h, i}^0}[\hat{V}_{h+1}^\lambda(s)]_{\alpha_i} -\hat{\EE}^{\mu_{h, i}^0}[\hat{V}_{h+1}^\lambda(s)]_{\alpha_i}}{\EE^{\mu_{h, i}^0}[\hat{V}_{h+1}^\lambda(s)]_{\alpha_i}- \tilde{\EE}^{\mu_{h, i}^0}[\hat{V}_{h+1}^\lambda(s)]_{\alpha_i}}}_{:=C_i}.
         \end{align*}
         We claim that $|C_i| \leq 1 + H/2\lambda, \forall i \in [H]$. We prove the claim by discussing the value of $\tilde{\EE}^{\mu_{h,i}^0}[\hat{V}^{\lambda}_{h+1}(s)]_{\alpha_i}$ in the following three cases:
         \paragraph{Case I.} $\tilde{\EE}^{\mu_{h,i}^0}[\hat{V}^{\lambda}_{h+1}(s)]_{\alpha_i} \leq 0$. By the fact that $\EE_{s \sim \mu_{h,i}^0}[\hat{V}^{\lambda}_{h+1}(s)]_{\alpha_{i}} \leq H$, we have:
         \begin{align*}
             |C_i| &= \Bigg|\Big( \frac{1}{4\lambda}\EE_{s \sim \mu_{h,i}^0}[\hat{V}^{\lambda}_{h+1}(s)]_{\alpha_i} + 1 \Big) \frac{\EE_{s \sim \mu_{h,i}^0}[\hat{V}^{\lambda}_{h+1}(s)]_{\alpha_i}}{\EE_{s \sim \mu_{h,i}^0}[\hat{V}^{\lambda}_{h+1}(s)]_{\alpha_i} - \tilde{\EE}^{\mu_{h,i}^0}[\hat{V}^{\lambda}_{h+1}(s)]_{\alpha_i}} \Bigg|
               \leq 1 + H/4\lambda,
         \end{align*}
         where the equality holds by $\frac{1}{4\lambda}\EE_{s \sim \mu_{h,i}^0}[\hat{V}^{\lambda}_{h+1}(s)]_{\alpha_i} + 1 \leq 1 + H/4\lambda$. Hence the claim holds by \textbf{Case I}. 
         \paragraph{Case II.} $0\leq \tilde{\EE}^{\mu_{h,i}^0} 
             [\hat{V}^{\lambda}_{h+1}(s)]_{\alpha_i} \leq H$. The claim holds trivially, as we have:
        \begin{align*}
            |C_i| = \frac{1}{4\lambda}\big(\EE_{s \sim \mu_{h,i}^0}[\hat{V}^{\lambda}_{h+1}(s)]_{\alpha_i} + \tilde{\EE}^{\mu_{h,i}^0}[\hat{V}^{\lambda}_{h+1}(s)]_{\alpha_i}\big)+1  \leq 1 + H/2\lambda.
        \end{align*}
        Hence, we conclude the claim. 
        \paragraph{Case III.} $\tilde{\EE}^{\mu_{h,i}^0} 
             [\hat{V}^{\lambda}_{h+1}(s)]_{\alpha_i} > H$. Notice that 
        \begin{align}
            |C_i| &= \Bigg|\Big(\frac{1}{4\lambda}\big(\EE_{s \sim \mu_{h,i}^0}[\hat{V}^{\lambda}_{h+1}(s)]_{\alpha_i} + H\big) + 1 \Big) \frac{\EE_{s \sim \mu_{h,i}^0}[\hat{V}^{\lambda}_{h+1}(s)]_{\alpha_i} - H}{\EE_{s \sim \mu_{h,i}^0}[\hat{V}^{\lambda}_{h+1}(s)]_{\alpha_i} - \tilde{\EE}^{\mu_{h,i}^0}[\hat{V}^{\lambda}_{h+1}(s)]_{\alpha_i}} \Bigg| \nonumber\\
            & = \Big(\frac{1}{4\lambda}\big(\EE_{s \sim \mu_{h,i}^0}[\hat{V}^{\lambda}_{h+1}(s)]_{\alpha_i} + H\big) + 1 \Big)\frac{H - \EE_{s \sim \mu_{h,i}^0}[\hat{V}^{\lambda}_{h+1}(s)]_{\alpha_i}}{\tilde{\EE}^{\mu_{h,i}^0}[\hat{V}^{\lambda}_{h+1}(s)]_{\alpha_i} - \EE_{s \sim \mu_{h,i}^0}[\hat{V}^{\lambda}_{h+1}(s)]_{\alpha_i}} \nonumber\\
            &\leq H/2\lambda + 1, \label{chi2.10}
        \end{align}
        where \eqref{chi2.10} holds by the fact that $\tilde{\EE}^{\mu_{h,i}^0}[\hat{V}^{\lambda}_{h+1}(s)]_{\alpha_i} >H$.
        
         With the upper bound for $C_i$, we can upper bound (i) as 
        \begin{align}
             |\text{(i)}| & = \Big|\sum_{i=1}^d \phi_i(s,a)(\EE_{s \sim \mu_{h,i}^0}[\hat{V}^{\lambda}_{h+1}(s)]_{\alpha_i} - \tilde{\EE}^{\mu_{h,i}^0}[\hat{V}^{\lambda}_{h+1}(s)]_{\alpha_i})C_i\Big| \nonumber
             \\ &= \Big|\gamma \sum_{i=1}^d\phi_i(s,a)\mathbf{1}_i\bLambda_h^{-1}\EE^{\bmu_h^0}[\hat{V}^{\lambda}_h(s)]_{\alpha_i}C_i +  \sum_{i=1}^d\phi_i(s,a)\mathbf{1}_i\bLambda_h^{-1}\sum_{\tau = 1}^K\bphi(s_h^\tau, a_h^{\tau})\eta_{h}^{\tau} ([\hat{V}^{\lambda}_{h+1}]_{\alpha_i})C_i \Big| \nonumber\\ 
             &\leq (1 + H/2\lambda)\sum_{i=1}^d\|\phi_i(s,a)\mathbf{1}_i\|_{\bLambda_h^{-1}}\Big(\gamma H  +\Big\|\sum_{\tau = 1}^K\bphi(s_h^\tau, a_h^{\tau})\eta_{h}^{\tau}([\hat{V}^{\lambda}_{h+1}]_{\alpha_i})\Big\|_{\bLambda_{h}^{-1}}\Big). \label{upper bound for (i)}
         \end{align}
         \paragraph{Bounding term (ii).} Similar to bounding (i), we can deduce that: 
         \begin{align}
            |\text{(ii)}| & = \frac{1}{4\lambda}\Big |\sum_{i=1}^d \phi_i(s,a)(\EE_{s \sim \mu_{h,i}^0}[\hat{V}^{\lambda}_{h+1}(s)]^2_{\alpha_i} - \hat{\EE}^{\mu_{h,i}^0}[\hat{V}^{\lambda}_{h+1}(s)]^2_{\alpha_i}) \Big | \nonumber \\
             &\leq 
             \frac{1
             }{4\lambda}\Big(\gamma H ^2\sum_{i = 1}^d \|\phi_i(s,a)\mathbf{1}_i\|_{\bLambda_h^{-1}} + \sum_{i=1}^d \|\phi_i(s,a)\mathbf{1}_i\|_{\bLambda_h^{-1}}\Big\|\sum_{\tau=1}^{K} \bphi(s_h^\tau, a_h^{\tau})\eta_h^{\tau}([\hat{V}^{\lambda}_{h+1}]_{\alpha_i}^2)\Big\|_{\bLambda_h^{-1}}\Big), \label{upper bound for (ii)}
         \end{align}
         where \eqref{upper bound for (ii)} follows by the Cauchy Schwartz inequality and the fact that $\EE_{s \sim \mu_{h,i}^0}[\hat{V}^{\lambda}_h(s)]_{\alpha_i}^2 \leq H^2, \forall i \in [d]$.
         Hence combining \eqref{upper bound for suboptimality}, \eqref{upper bound for (i)} and \eqref{upper bound for (ii)}, we have 
        \begin{align*}
            &\mathcal{T}_h^{\lambda} \hat{V}^{\lambda}_{h+1}(s,a) -\la \bphi(s,a), \hat{\bw}_h^\lambda \ra   \\
            & \leq (1 + H/2\lambda)\Big(\gamma H \sum_{i=1}^d\|\phi_i(s,a)\mathbf{1}_i\|_{\bLambda_h^{-1}} +\sum_{i=1}^d\|\phi_i(s,a)\mathbf{1}_i\|_{\bLambda_h^{-1}}\Big\|\sum_{\tau = 1}^K\bphi(s_h^\tau, a_h^{\tau})\eta_{h}^{\tau}([\hat{V}^{\lambda}_{h+1}]_{\alpha'_i})\Big\|_{\bLambda_{h}^{-1}} \Big) \\
              &\quad + \frac{1
             }{4\lambda} \Big(\gamma H ^2\sum_{i = 1}^d \|\phi_i(s,a)\mathbf{1}_i\|_{\bLambda_h^{-1}} + \sum_{i=1}^d \|\phi_i(s,a)\mathbf{1}_i\|_{\bLambda_h^{-1}}\Big\|\sum_{\tau=1}^{K} \bphi(s_h^\tau, a_h^{\tau})\eta_h^{\tau}([\hat{V}^{\lambda}_{h+1}]_{\alpha'_i}^2)\Big\|_{\bLambda_h^{-1}}  \Big).
         \end{align*}
         On the other hand, we can similarly deduce that there exists $\alpha'_i$ s.t. 
         \begin{align*}
            &\la \bphi(s,a), \hat{\bw}_h^\lambda \ra- \mathcal{T}_h^{\lambda} \hat{V}^{\lambda}_{h+1}(s,a)  \\
            & \leq (1 + H/2\lambda)\Big(\gamma H \sum_{i=1}^d\|\phi_i(s,a)\mathbf{1}_i\|_{\bLambda_h^{-1}} +\sum_{i=1}^d\|\phi_i(s,a)\mathbf{1}_i\|_{\bLambda_h^{-1}}\Big\|\sum_{\tau = 1}^K\bphi(s_h^\tau, a_h^{\tau})\eta_{h}^{\tau}([\hat{V}^{\lambda}_{h+1}]_{\alpha'_i})\Big\|_{\bLambda_{h}^{-1}} \Big)\\
              &\quad + \frac{1
             }{4\lambda} \Big(\gamma H ^2\sum_{i = 1}^d \|\phi_i(s,a)\mathbf{1}_i\|_{\bLambda_h^{-1}} + \sum_{i=1}^d \|\phi_i(s,a)\mathbf{1}_i\|_{\bLambda_h^{-1}}\Big\|\sum_{\tau=1}^{K} \bphi(s_h^\tau, a_h^{\tau})\eta_h^{\tau}([\hat{V}^{\lambda}_{h+1}]_{\alpha'_i}^2)\Big\|_{\bLambda_h^{-1}} \Big).
         \end{align*}
         Then for all $i \in [d]$, there exists $\hat{\alpha}_i \in \{ \alpha_i, \alpha'_i \}$, such that 
         \begin{align*}
            &| \mathcal{T}_h^{\lambda} \hat{V}^{\lambda}_{h+1}(s,a) -\la \bphi(s,a), \hat{\bw}_h^\lambda \ra| \\& \leq (1 + H/2\lambda)\Big(\gamma H \sum_{i=1}^d\|\phi_i(s,a)\mathbf{1}_i\|_{\bLambda_h^{-1}}\Big)+ \gamma (H ^2/4\lambda) \sum_{i = 1}^d \|\phi_i(s,a)\mathbf{1}_i\|_{\bLambda_h^{-1}} \\ &\quad +\sum_{i=1}^d\|\phi_i(s,a)\mathbf{1}_i\|_{\bLambda_h^{-1}}\Big(( 1+ H/2\lambda) \Big\|\sum_{\tau = 1}^K\bphi(s_h^\tau, a_h^{\tau})\eta_{h}^{\tau}([\hat{V}^{\lambda}_{h+1}]_{ \hat{\alpha}_i})\Big\|_{\bLambda_{h}^{-1}}
             + (1/4\lambda) \Big\|\sum_{\tau=1}^{K} \bphi(s_h^\tau, a_h^{\tau})\eta_h^{\tau}([\hat{V}^{\lambda}_{h+1}]_{\hat{\alpha}_i}^2)\Big\|_{\bLambda_h^{-1}} \Big).
         \end{align*}
         Now it remains to bound the terms
         \begin{align*}
             \underbrace{\Big\|\sum_{\tau = 1}^K\bphi(s_h^\tau, a_h^{\tau})\eta_{h}^{\tau}([\hat{V}^{\lambda}_{h+1}]_{\hat{\alpha}_i})\Big\|_{\bLambda_{h}^{-1}}}_{(\text{iii})}~\text{and}~\underbrace{\Big\|\sum_{\tau=1}^{K} \bphi(s_h^\tau, a_h^{\tau})\eta_h^{\tau}([\hat{V}^{\lambda}_{h+1}]_{\hat{\alpha}_i}^2)\Big\|_{\bLambda_h^{-1}}}_{\text{(iv)}}.
         \end{align*}
        Similar to the proof in KL divergence, we aim to find a union function class $\mathcal{V}_{h+1}(R_0, B_0, \lambda)$, which holds uniformly that $\hat{V}^{\lambda}_{h+1} \in \mathcal{V}_{h+1}(R_0, B_0, \lambda)$, here for all $h \in [H]$, the function class is defined as:
        \noindent
         \begin{align*}
            \mathcal{V}_{h}(R_0, B_0, \lambda) = \{ V_h(x; \btheta, \beta, \bLambda): \mathcal{S} \rightarrow [0, H], \|\btheta\|_2 \leq R_0, \beta \in [0, B_0], \gamma_{\min}(\bLambda_h) \geq
            \gamma \},
         \end{align*}
         where $V_h(x; \btheta, \beta, \bLambda) = \max_{a \in \mathcal{A}}[\bphi(s,a)^\top\btheta - \beta \sum_{i=1}^d\|\phi_i(s,a)\|_{\bLambda_h^{-1}}]_{[0, H-h+1]}$. By \Cref{lemma:bound of weights in chi2 distance},  when we set $R_0 = \sqrt{d}(H + H^2/2\lambda), B_0 = \beta = 8 dH(1+ 3H/4\lambda) \sqrt{\xi_{\chi^2}}$, it suffices to show that $\hat{V}_{h+1}^\lambda \in \mathcal{V}_{h+1}(R_0, B_0, \gamma)$.
         Next we aim to find a union cover of the $\mathcal{V}_{h+1}(R_0, B_0, \gamma)$, hence the term (iii) and (iv) can be upper bounded. Let $\mathcal{N}_h(\epsilon; R_0, B_0, \gamma)$ be the minimum $\epsilon$-cover of $\mathcal{V}_h(R, B, \lambda)$ with respect to the supreme norm, $\mathcal{N}_{h}([0, H])$ be the minimum $\epsilon$-cover of $[0, H]$ respectively. In other words, for any function $V \in \mathcal{V}_{h}(R, B, \lambda), \alpha \in [0, H]$, there exists a function $V' \in \mathcal{V}_{h}(R, B, \lambda)$ and a real number $\alpha_{\epsilon} \in [0,H]$ such that:
         \begin{align*}
            \sup_{s \in \mathcal{S}}|V(s) - V'(s)| \leq \epsilon, |\alpha - \alpha_{\epsilon}| \leq \epsilon.
         \end{align*}
         Recall the definition of (iii) and (iv). By Cauchy-Schwartz inequality and the fact that $\|a + b\|_{\bLambda_h^{-1}}^2 \leq 2\|a\|_{\bLambda_h^{-1}}^2 + 2\|b\|_{\bLambda_h^{-1}}^2$, we have
         \begin{align}
            \text{(iii)}^2 &\leq 2 \Big\|\sum_{\tau = 1}^K \bphi(s_h^\tau, a_h^{\tau})\eta_{h}^{\tau}([\hat{V}^{\lambda}_{h+1}]_{\alpha_{\epsilon}})\Big\|^2_{\bLambda_h^{-1}} + 2 \Big\|\sum_{\tau = 1}^K \bphi(s_h^\tau, a_h^{\tau})\eta_{h}^{\tau}([\hat{V}^{\lambda}_{h+1}]_{\hat{\alpha}} - [\hat{V}^{\lambda}_{h+1}]_{\alpha_{\epsilon}})\Big\|^2_{\bLambda_h^{-1}} \nonumber \\
            & \leq 4 \Big\|\sum_{\tau = 1}^K \bphi(s_h^\tau, a_h^{\tau})\eta_{h}^{\tau}([V'_{h+1}]_{\alpha_{\epsilon}})\Big\|^2_{\bLambda_h^{-1}} + 4 \Big\|\sum_{\tau = 1}^K \bphi(s_h^\tau, a_h^{\tau})\eta_{h}^{\tau}([\hat{V}^{\lambda}_{h+1}]_{\alpha_{\epsilon}} - [V'_{h+1}]_{\alpha_{\epsilon}})\Big\|^2_{\bLambda_h^{-1}} + \frac{2\epsilon^2K^2}{\gamma},
            \label{(chi2eq.1)}
         \end{align}
         where \eqref{(chi2eq.1)} follows by the fact that 
         \begin{align*}
            2 \Big\|\sum_{\tau = 1}^K \bphi(s_h^\tau, a_h^{\tau})\eta_{h}^{\tau}([\hat{V}^{\lambda}_{h+1}]_{\hat{\alpha}} - [\hat{V}^{\lambda}_{h+1}]_{\alpha_{\epsilon}})\Big\|^2_{\bLambda_h^{-1}} \leq 2 \epsilon^2 \sum_{\tau = 1, \tau' = 1}^{K}|\bphi_h^\tau\bLambda_h^{-1}\bphi_h^{\tau'}| \leq \frac{2\epsilon^2K^2}{\gamma}.
         \end{align*}
         Meanwhile, by the fact that $|[\hat{V}^{\lambda}_{h+1}]_{\alpha_{\epsilon}} - [V'_{h+1}]_{\alpha_{\epsilon}}| \leq |\hat{V}^{\lambda}_{h+1} - V'_{h+1}|$, we have
         \begin{align*}
            &4 \Big\|\sum_{\tau = 1}^K \bphi(s_h^\tau, a_h^{\tau})\eta_{h}^{\tau}([\hat{V}^{\lambda}_{h+1}]_{\alpha_{\epsilon}} - [V'_{h+1}]_{\alpha_{\epsilon}})\Big\|^2_{\bLambda_h^{-1}} \\
            &\leq 4 \sum_{\tau = 1, \tau' = 1}^{K}|\bphi_h^\tau\bLambda_h^{-1}\bphi_h^{\tau'}|
            \max|\eta_{h}^{\tau}([\hat{V}^{\lambda}_{h+1}]_{\alpha_{\epsilon}} - [V'_{h+1}]_{\alpha_{\epsilon}})|^2 \\
            &\leq 4 \epsilon^2\sum_{\tau = 1, \tau' = 1}^{K}|\bphi_h^\tau\bLambda_h^{-1}\bphi_h^{\tau'}| \\
            &\leq \frac{4\epsilon^2K^2}{\gamma}.
         \end{align*}
         By applying the above two inequalities and the union bound into (\ref{(chi2eq.1)}), we have
         \begin{align*}
            (\text{iii})^2 \leq 4 \sup_{V' \in \mathcal{N}_{h}(\epsilon; R_0, B_0, \gamma), \alpha_{\epsilon} \in \mathcal{N}_{h}([0, H])}\Big\|\sum_{\tau = 1}^K \bphi(s_h^\tau, a_h^{\tau})\eta_{h}^{\tau}([V'_{h+1}]_{\alpha_{\epsilon}})\Big\|^2_{\bLambda_h^{-1}} + \frac{6\epsilon^2K^2}{\gamma}.
         \end{align*}
         By \Cref{lemma:concentration bound}, applying a union bound over $\mathcal{N}_{h}(\epsilon; R_0, B_0, \gamma)$ and $\mathcal{N}_{h}([0, H])$, with probability at least $1 - \delta / 2H$, we have
         \begin{align*}
            &4 \sup_{V' \in \mathcal{N}_{h}(\epsilon; R_0, B_0, \gamma), \alpha_{\epsilon} \in \mathcal{N}_{h}([0, H])}\Big\|\sum_{\tau = 1}^K \bphi(s_h^\tau, a_h^{\tau})\eta_{h}^{\tau}([V'_{h+1}]_{\alpha_{\epsilon}})\Big\|^2_{\bLambda_h^{-1}} + \frac{6\epsilon^2K^2}{\gamma} \\
            &\leq 4 H^2\Big(2\log\frac{2H|\mathcal{N}_{h}(\epsilon; R_0, B_0, \gamma)\|\mathcal{N}_{h}([0, H])|}{\delta} + d\log(1+ K/\gamma)\Big) + \frac{6\epsilon^2K^2}{\gamma} .
         \end{align*}
         Similarly, by the fact that $\|a + b\|_{\bLambda_h^{-1}}^2 \leq 2\|a\|_{\bLambda_h^{-1}}^2 + 2\|b\|_{\bLambda_h^{-1}}^2$, noticing (iv) has the almost same form as (iii), we have 
         \begin{align}
            \text{(iv)}^2 &\leq 2 \Big\|\sum_{\tau = 1}^K \bphi(s_h^\tau, a_h^{\tau})\eta_{h}^{\tau}([\hat{V}^{\lambda}_{h+1}]^2_{\alpha_{\epsilon}})\Big\|^2_{\bLambda_h^{-1}} + 2 \Big\|\sum_{\tau = 1}^K \bphi(s_h^\tau, a_h^{\tau})\eta_{h}^{\tau}([\hat{V}^{\lambda}_{h+1}]^2_{\hat{\alpha}} - [\hat{V}^{\lambda}_{h+1}]^2_{\alpha_{\epsilon}})\Big
            \|^2_{\bLambda_h^{-1}} \nonumber \\
            & \leq 4 \Big\|\sum_{\tau = 1}^K \bphi(s_h^\tau, a_h^{\tau})\eta_{h}^{\tau}([V'_{h+1}]^2_{\alpha_{\epsilon}})\Big\|^2_{\bLambda_h^{-1}} + \frac{24H^2\epsilon^2K^2}{\gamma}, \label{eq1}
         \end{align}
          where \eqref{eq1} follows by the fact that
         \begin{align*}
            2 \Big\|\sum_{\tau = 1}^K \bphi(s_h^\tau, a_h^{\tau})\eta_{h}^{\tau}([\hat{V}^{\lambda}_{h+1}]^2_{\hat{\alpha}} - [\hat{V}^{\lambda}_{h+1}]^2_{\alpha_{\epsilon}})\Big\|^2_{\bLambda_h^{-1}} &\leq 8H^2 \epsilon^2 \sum_{\tau = 1, \tau' = 1}^{K}|\bphi_h^\tau\bLambda_h^{-1}\bphi_h^{\tau'}|
            \leq \frac{8H^2\epsilon^2K^2}{\gamma},
        \end{align*}
        and 
        \begin{align*}
           & 4 \|\sum_{\tau = 1}^K \bphi(s_h^\tau, a_h^{\tau})\eta_{h}^{\tau}([\hat{V}^{\lambda}_{h+1}]^2_{\alpha_{\epsilon}} - [V'_{h+1}]^2_{\alpha_{\epsilon}})\|^2_{\bLambda_h^{-1}} \\
           &\leq 4 \sum_{\tau = 1, \tau' = 1}^{K}|\bphi_h^\tau\bLambda_h^{-1}\bphi_h^{\tau'}|
            \max|\eta_{h}^{\tau}([\hat{V}^{\lambda}_{h+1}]_{\alpha_{\epsilon}} - [V'_{h+1}]_{\alpha_{\epsilon}})|^2 \\
            &\leq 16 H^2 \epsilon^2\sum_{\tau = 1, \tau' = 1}^{K}|\bphi_h^\tau\bLambda_h^{-1}\bphi_h^{\tau'}| \\
            &\leq \frac{16H^2\epsilon^2K^2}{\gamma}.
         \end{align*}
         We apply the union bound and \Cref{lemma:concentration bound}, with probability at least $1 - \delta/2H$
         \begin{align*}
            \text{(iv)}^2
            & \leq  4 H^4\Big(2\log\frac{2H|\mathcal{N}_{h}(\epsilon; R_0, B_0, \gamma)\|\mathcal{N}_{h}([0, H])|}{\delta} + d\log(1+ K/\gamma)\Big) + \frac{24H^2\epsilon^2K^2}{\gamma}.
         \end{align*}
        Therefore, with probability at least $1 - \delta$,
         \begin{align}
            &| \mathcal{T}_h^{\lambda} \hat{V}^{\lambda}_{h+1}(s,a) -\la \bphi(s,a), \hat{\bw}_h^\lambda \ra|  \\&\leq \gamma H(1 + H/2\lambda) \sum_{i=1}^d\|\phi_i(s,a)\mathbf{1}_i\|_{\bLambda_h^{-1}}+ (\gamma H ^2/4\lambda)\sum_{i = 1}^d \|\phi_i(s,a)\mathbf{1}_i\|_{\bLambda_h^{-1}} \nonumber \\ &\quad+\sum_{i=1}^d\|\phi_i(s,a)\mathbf{1}_i\|_{\bLambda_h^{-1}}\Big(( 1+ H/2\lambda) \Big\|\sum_{\tau = 1}^K\bphi(s_h^\tau, a_h^{\tau})\eta_{h}^{\tau}([\hat{V}^{\lambda}_{h+1}]_{ \hat{\alpha}_i})\Big\|_{\bLambda_{h}^{-1}}
             + (1/4\lambda)\Big\|\sum_{\tau=1}^{K} \bphi(s_h^\tau, a_h^{\tau})\eta_h^{\tau}([\hat{V}^{\lambda}_{h+1}]_{\hat{\alpha}_i}^2)\Big\|_{\bLambda_h^{-1}} \Big) \nonumber
            \\
            &\leq\sum_{i=1}^d\|\phi_i(s,a)\mathbf{1}_i\|_{\bLambda_h^{-1}}\Bigg[\gamma H(1 + 3H/4\lambda) \nonumber \\ 
            &\quad+ (1 + H/2\lambda)\sqrt{4 H^2\Big(2\log\frac{2H|\mathcal{N}_{h}(\epsilon; R_0, B_0, \gamma)\|\mathcal{N}_{h}([0, H])|}{\delta} + d\log(1+ K/\gamma)\Big) + \frac{6\epsilon^2K^2}{\gamma}} \nonumber\\
            & \quad+ (1/4\lambda)\sqrt{4H^4\Big(2\log\frac{2H|\mathcal{N}_{h}(\epsilon; R_0, B_0, \gamma)\|\mathcal{N}_{h}([0, H])|}{\delta} + d\log(1+ K/\gamma)\Big) + \frac{24H^2\epsilon^2K^2}{\gamma}} \Bigg].
            \label{(chi2)}
         \end{align}
         By the fact that $R_0 = \sqrt{d}(H + H^2/2\lambda)$, \Cref{lemma:covering number of function class} and \Cref{lemma:covering number of interval}, we can upper bound the term $|\mathcal{N}_h(\epsilon; R_0,B_0,\gamma)|$ and $|\mathcal{N}_{h}([0,H])|$ as follows: 
    \begin{align*}
        \log|\mathcal{N}_h(\epsilon; R_0,B_0,\gamma)| \leq d\log(1+ 4R_0/\epsilon) + d^2\log(1+ 8d^{1/2}B^2/\gamma \epsilon^2),
        \log |\mathcal{N}_{h}([0,H])| \leq \log(3H/\epsilon).
    \end{align*}
    \noindent
    We set $\epsilon = \frac{1}{K}, \gamma = 1$, with the upper bound above, we have 
    \begin{align*}
    &4 H^2\Big(2\log\frac{2H|\mathcal{N}_{h}(\epsilon; R_0, B_0, \gamma)\|\mathcal{N}_{h}([0, H])|}{\delta} + d\log(1+ K/\gamma)\Big) + \frac{6\epsilon^2K^2}{\gamma} \\
    &\leq 4 H^2\Big(2\log\frac{6H^2K}{ \delta} + d\log(1 + K) + d\log(1+d^{1/2}(1+H/2\lambda)K) + d^2\log(1 + d^{1/2}B^2K^2) + 3/2\Big) \\
    &\leq 4 H^2 \Big(2d\log\frac{6H^2K}{ \delta} + 2d\log(K) + d\log(d^{1/2}(1+H/2\lambda)K) + 2d^2 \log d^{1/2}B^2K^2\Big) \\
    & \leq 8H^2d^2\Big(\log\frac{6H^2K}{ \delta} + \log(K) + \log(d^{1/2}(1+H/2\lambda)K) +  \log 8d^{1/2}B^2K^2\Big)  \\
    & = 8H^2d^2 (\log 48 K^5 H^2 B^2 d (1 + H/2\lambda)/ \delta).
    \end{align*}
    Similarly, we can upper bound the third term in (\ref{(chi2)}) as follows:
    \begin{align*}
    &4H^4\Big(2\log\frac{2H|\mathcal{N}_{h}(\epsilon; R_0, B_0, \gamma)\|\mathcal{N}_{h}([0, H])|}{\delta} + d\log(1+ K/\gamma)\Big) + \frac{24H^2\epsilon^2K^2}{\gamma} \\
    & = H^2 \Big( 4 H^2(2\log\frac{2H|\mathcal{N}_{h}(\epsilon; R_0, B_0, \gamma)\|\mathcal{N}_{h}([0, H])|}{\delta} + d\log(1+ K/\gamma)) + \frac{24\epsilon^2K^2}{\gamma}\Big) \\
    &\leq 8H^4d^2 (\log 48 K^5H^2 B^2 d (1 + H/2\lambda)/ \delta).
    \end{align*}
    \noindent
    Hence, we apply this bound into the (\ref{(chi2)}), we have
    \begin{align}
        &|\la \bphi(s,a), \hat{\bw}_h^\lambda \ra- \mathcal{T}_h^{\lambda} \hat{V}^{\lambda}_{h+1}(s,a)| \nonumber\\
        & \leq \sum_{i=1}^d \|\bphi(s,a)\mathbf{1}_i\|_{\bLambda_h^{-1}}(H(1 + 3H/4\lambda) \notag\\
        &\qquad + 
         (1+ 3H/4\lambda)\sqrt{8H^2d^2 (\log 48 K^5H^2 B^2 d (1 + H/2\lambda)/ \delta)}) \nonumber\\
        &\leq \sum_{i=1}^d \|\bphi(s,a)\mathbf{1}_i\|_{\bLambda_h^{-1}}2(1 + 3H/4\lambda)\sqrt{8H^2d^2 (\log 48 K^5H^2 B^2 d (1 + H/2\lambda)/ \delta)} \nonumber\\
        & \leq \sum_{i=1}^d \|\bphi(s,a)\mathbf{1}_i\|_{\bLambda_h^{-1}} 2(1 + 3H/4\lambda)Hd \sqrt{8(\log 192 K^5H^6 d^3 (1 + H/2\lambda)^3/ \delta) + \log \xi_{\chi^2})} \nonumber\\
        &= \sum_{i=1}^d \|\bphi(s,a)\mathbf{1}_i\|_{\bLambda_h^{-1}} 2(1 + 3H/4\lambda)Hd \sqrt{8(\xi_{\chi^2} + \log \xi_{\chi^2})} \nonumber\\
        & \leq \beta\sum_{i=1}^d\|\bphi(s,a)\mathbf{1}_i\|_{\bLambda_h^{-1}} ,\label{chi4}
    \end{align}
    where \eqref{chi4} comes from the fact that $\log \xi_{\chi^2} \leq \xi_{\chi^2}$. Hence, the prerequisite is satisfied in \Cref{lemma:regularized robust suboptimality lemma}, we can upper bound the suboptimality gap as:
     \begin{align*}
         \text{SubOpt}(\hat{\pi}, s, \lambda) &\leq 2\sup_{P \in \mathcal{U}^\lambda(P^0)}\sum_{h=1}^H\EE^{\pi^\star, P}
         \big[\Gamma_h(s_h, a_h)|s_1 = s\big] \\
         & = 2\beta\sup_{P \in \mathcal{U}^\lambda(P^0)}\sum_{h=1}^H
         \EE^{\pi^\star, P}\Big[\sum_{i=1}^d\|\phi_i(s,a)\mathbf{1}_i\|_{\bLambda_h^{-1}}|s_1 = s\Big].
     \end{align*}
     \noindent
    This concludes the proof.
\end{proof}

\subsection{Proof of \Cref{corollary:simplified upper bounds}}
\begin{proof} The proof follows the argument in (F.15) and (F.16) of \cite{blanchet2024double}. Specifically, we denote 
    \begin{align}
&\bLambda_{h, i}^P=\mathbb{E}^{\pi^\star, P}\Big[(\phi_i(s_h, a_h) \mathbf{1}_i)(\phi_i(s_h, a_h) \mathbf{1}_i)^{\top}\big|s_1=s\Big], \quad \forall(h, i, P) \in[H] \times[d] \times \mathcal{U}^\lambda(P^0). \label{definition of Lambda_hi}
    \end{align}
  By \Cref{robust partial coverage assumption}, setting $\gamma = 1$, we have
    \begin{align}
& \sup _{P \in \mathcal{U}^\lambda(P^0)} \sum_{h=1}^H \mathbb{E}^{\pi^\star, P}\bigg[\sum_{i=1}^d\|\phi_i(s_h, a_h) \mathbf{1}_i\|_{\bLambda_h^{-1}} \big| s_1=s\bigg] \nonumber\\
&= \sup _{P \in \mathcal{U}^\lambda(P^0)} \sum_{h=1}^H \sum_{i=1}^d \mathbb{E}^{\pi^\star, P}\Big[\sqrt{\operatorname{Tr}\big((\phi_i(s_h, a_h) \mathbf{1}_i)(\phi_i(s_h, a_h) \mathbf{1}_i)^{\top} \bLambda_h^{-1}\big)} \big | s_1=s\Big] \nonumber\\
& \leq \sup _{P \in \mathcal{U}^\lambda(P^0)} \sum_{h=1}^H \sum_{i=1}^d \sqrt{\operatorname{Tr}\big(\mathbb{E}^{\pi^\star, P}\big[(\phi_i(s_h, a_h) \mathbf{1}_i)(\phi_i(s_h, a_h) \mathbf{1}_i)^{\top} | s_1=s\big] \bLambda_h^{-1}\big)} \label{B1}\\
& \leq \sup_{P \in \mathcal{U}^\lambda(P^0)} \sum_{h=1}^H \sum_{i=1}^d \sqrt{\operatorname{Tr}\big(\bLambda_{h, i}^P \cdot \big(\mathbf{I}+K \cdot c^\dagger \cdot \bLambda_{h, i}^P\big)^{-1}\big)} \label{B_2}\\
& =\sup _{P \in \mathcal{U}^\lambda(P^0)} \sum_{h=1}^H \sum_{i=1}^d \sqrt{\frac{( \EE^{\pi^\star,P}[\phi_i(s_h, a_h) | s_1=s])^2}{1+c^\dagger \cdot K \cdot(\EE^{\pi^\star,P}[\phi_i(s_h, a_h) | s_1=s])^2}} \nonumber\\
& \leq \sup _{P \in \mathcal{U}^\lambda\left(P^0\right)} \sum_{h=1}^H \sum_{i=1}^d \sqrt{\frac{1}{c^\dagger \cdot K}} \label{B_3}\\
& =\frac{d H}{ \sqrt{c^\dagger K}},\nonumber
\end{align}
where (\ref{B1}) is due to the Jensen's inequality, (\ref{B_2}) holds by the definition in (\ref{definition of Lambda_hi}) and \Cref{robust partial coverage assumption}, (\ref{B_3}) holds by the fact that the only nonzero element of $\bLambda_{h, i}^P$ is the $i$-th diagonal element. Thus, by \Cref{thm: instance-dependent upper bounds}, with probability at least $1-\delta$, for any $s \in \cS$ the suboptimality can be upper bounded as: 
    \begin{align*}
        \text{SubOpt}(\hat{\pi}, s, \lambda) \leq         \beta \sup_{P \in \mathcal{U}^\lambda(P^0)}\sum_{h=1}^H\EE^{\pi^\star, P}\Big[\sum_{i=1}^d \| \phi_i(s,a)\mathbf{1}_i\|_{{\bLambda}_h^{-1}}\big|s_1 = s\Big] \leq \frac{\beta dH}{\sqrt{c^\dagger K}},
    \end{align*}
    where 
        \begin{align*}
            \beta = \begin{cases}
                16 Hd \sqrt{\xi_{\text{TV}}}, &  \text{if D is TV;} \\
                16d\lambda  e^{H/\lambda} \sqrt{(H/\lambda + \xi_{\text{KL}})}, & \text{if D is KL;} \\
                8 dH(1+ 3H/4\lambda) \sqrt{\xi_{\chi^2}}, &\text{if D is $\chi^2$.} 
            \end{cases} 
        \end{align*}
 Hence, we conclude the proof. 
\end{proof}

\section{Proof of the Information-Theoretic Lower Bound}
\label{sec:proof of lower bound}
In this section, we prove the information-theoretic lower bound. We first introduce the construction of hard instances in \Cref{sec:Construction of Hard Instances}, then we prove \Cref{th:lower bound} in  \Cref{sec:proof of lower bound-subsection}

\subsection{Construction of Hard Instances}
\label{sec:Construction of Hard Instances}
\begin{figure}[ht]
    \centering
    \subfigure[The source MDP environment.]{
\begin{tikzpicture}[->,>=stealth',shorten >=1pt,auto,node distance=3cm,semithick]
  \node[circle,draw,thick,minimum size=1.5cm] (s1) {$s_1$};
  \node[circle,draw,thick,minimum size=1.5cm,right of=s1] (s2) {$s_2$};

  \path[->,thick]
  (s1) edge[loop above] node[above] {$1 - \epsilon$} (s1)
  (s2) edge[loop above] node[above] {$1$} (s2);

  \path[->,thick]
  (s1) edge[bend right] node[below=0.2] {$\epsilon$} (s2);

\end{tikzpicture}
\label{fig:illustration of the hard instance-source}
    }
    \subfigure[The target MDP environment.]{
\begin{tikzpicture}[->,>=stealth',shorten >=1pt,auto,node distance=3cm,semithick]
  \node[circle,draw,thick,minimum size=1.5cm] (s1) {$s_1$};
  \node[circle,draw,thick,minimum size=1.5cm,right of=s1] (s2) {$s_2$};

  \path[->,thick]
  (s1) edge[loop above] node[above=0] {$1 - \epsilon - \Delta_h^\lambda(\epsilon,D)$} (s1)
  (s2) edge[loop above] node[above] {$1$} (s2);

  \path[->,thick]
  (s1) edge[bend right] node[below=0] {$\epsilon + \Delta_h^\lambda(\epsilon,D)$} (s2);
\end{tikzpicture}
         \label{fig:illustration of the hard instance-target}
     }
     \caption{The nominal environment and the worst case environment. The value on each arrow represents the transition probability.
The MDP has two states, $s_1$ and $s_2$, and  $H$ steps. For he nominal environment on the left, the $s_1$ is the good state where the transition is determined by an error term $\epsilon$, and $s_2$ is a fail state with reward 0 and only transitions to itself.
The worst case environment on the right is obtained by perturbing the transition probability at each step of the nominal environment. The magnitude of the perturbation $\Delta_h^\lambda(\epsilon,D)$ at each stage $h$ depends on the divergence metric $D$, the regularized $\lambda$ and the parameter $\epsilon$.
     }
     \label{fig: hard instances}
 \end{figure}
The construction of the information-theoretic lower bound relies on a novel family of hard instances. We illustrate one such instance in \Cref{fig: hard instances}. Both the nominal and target environments satisfy \Cref{assumption:linear MDP}. The environment consists of two states, $s_1$ and $s_2$.
In the nominal environment \Cref{fig:illustration of the hard instance-source}, $s_1$ represents the good state with a positive reward. For any transition originating from $s_1$, there is a $1-\epsilon$ probability of transitioning to itself and an $\epsilon$ probability of transitioning to $s_2$ regardless of the action taken, where $\epsilon$ is a parameter to be determined. The state $s_2$ is a fail state with zero reward and can only transition to itself.
The worst-case target environment \Cref{fig:illustration of the hard instance-target} is obtained by perturbing the transition probabilities in the nominal environment. The perturbation magnitude $\Delta_h^{\lambda}(\epsilon,D)$ depends on the stage $h$, regularizer $\lambda$, divergence metric $D$, and parameter $\epsilon$.

The family of $d$-rectangular linear RRMDPs are parameterized by a Boolean vector $\bxi=\{\bxi_h\}_{h\in[H]}$, where $\bxi_h\in\{-1,1\}^d$.
For a given $\bxi$ and regularizer $\lambda$, the corresponding $d$-rectangular linear RRMDP $M_{\bxi}^\rho$ is constructed as follows. The state space $\cS=\{x_1, x_2\}$ and the action space $\cA=\{0,1\}^d$. 
The initial state distribution $\mu_0$ is defined as 
\begin{align*}
    \mu_0(s_1)=\frac{d+1}{d+2}\quad \text{and} \quad \mu_0(x_2)=\frac{1}{d+2}.
\end{align*}
The feature mapping $\bphi:\cS\times\cA\rightarrow\RR^{d+2}$ is defined as
\begin{align*}
    &\bphi(s_1,a)^{\top}=\Big(\frac{a_1}{d}, \frac{a_2}{d}, \cdots, \frac{a_d}{d}, 1-\sum_{i=1}^d\frac{a_i}{d}, 0 \Big),\\
    &\bphi(s_2,a)^{\top}=\big(0, 0, \cdots, 0, 0, 1\big),
\end{align*}
which satisfies $\phi_i(s,a)\geq 0$ and $\sum_{i=1}^d\phi_i(s,a)=1$.
The nominal distributions $\{\bmu_h^0\}_{h\in[H]}$ are defined as 
\begin{align*}
    \bmu_h^0=\big(\underbrace{(1-\epsilon)\delta_{s_1} +\epsilon \delta_{s_2}, (1-\epsilon)\delta_{s_1} + \epsilon \delta_{s_2}, \cdots, (1-\epsilon)\delta_{s_1} + \epsilon \delta_{s_2}}_\text{$d+1$}, \delta_{s_2}\big)^{\top}, \forall h\in[H],
\end{align*}
where $\epsilon$ is an error term injected into the nominal model, which is to be determined later.
Thus, the transition is homogeneous and does not depend on action but only on state.  The reward parameters $\{ \btheta_h \}_{h \in [H]}$ are defined as 
    \begin{align*}
        \btheta_h^{\top} = \delta\cdot\Big(\frac{\xi_{h1}+1}{2}, \frac{\xi_{h2}+1}{2},\cdots\frac{\xi_{hd}+1}{2}, \frac{1}{2}, 0\Big), \forall h \in [H],
    \end{align*}
where $\delta$ is a parameter to control the differences among instances, which is to be determined later.
The reward $r_h$ is generated from the normal distribution $r_h\sim\cN(r_h(s_h,a_h),1)$, where $r_h(s,a)=\bphi(s,a)^\top\btheta_h$. Note that
\begin{align*}
    r_h(s_1, a)=\bphi(s_1,a)^\top\btheta_h = \frac{\delta}{2d}\big(\la\bxi_h, a\ra+d\big)\geq 0\quad \text{and} \quad r_h(s_2, a)=\bphi(s_2,a)^\top\btheta_h=0,~ \forall a\in\cA,
\end{align*}
Thus, the worst case transition kernel should have the highest possible transition probability to $s_2$, and the optimal robust policy should lead to a transition probability to $s_2$ as small as possible. 
Therefore the optimal action at step $h$ is 
    \begin{align*}
        a_h^\star = \Big(\frac{1+\xi_{h1}}{2}, \frac{1+\xi_{h2}}{2} \cdots,\frac{1+\xi_{hd}}{2}\Big).
    \end{align*}
We illustrate the designed $d$-rectangular linear RRMDP $M_{\bxi}^{\lambda}$ in \Cref{fig:illustration of the hard instance-source} and \Cref{fig:illustration of the hard instance-target}.

Finally, the offline dataset is collected by the following procedure: the behavior policy $\pi^b=\{\pi_h^b\}_{h\in[H]}$ is defined as 
 \[\pi_h^b\sim \text{Unif}\big(\{\be_1, \cdots, \be_d, \mathbf{0}\}\big),\forall h\in [H],
 \]
 where $\{\be_i\}_{i\in[d]}$ are the canonical basis vectors in $\RR^d$. The initial state is generated according to $\mu_0$ and then the behavior policy interact with the nominal environment $K$ episodes to collect the offline dataset $\cD$.
\begin{remark}
We would like to highlight the difference between our hard instances and the hard instances developed in \citet{liu2024minimax}. We find out that instances developed in \citet{liu2024minimax} only allow perturbations measured in TV-divergence. The reason is that in their nominal environment, both $s_1$ and $s_2$ are absorbing states, and thus $P^0_h(\cdot|s,a)$ only has support on $s$, which could be either $s_1$ or $s_2$. In this case, any perturbation to $P^0_h(\cdot|s,a)$ would cause a violation of the absolute continuous condition in the definition of the KL-divergence and the $\chi^2$-divergence\footnote{It has been shown in Proposition 2.5 of \citet{lu2024distributionally} that the TV divergence can be extended to allow for two distributions with different support.}.  In comparison, we inject a small error $\epsilon$ in the nominal kernel such that $P^0_h(\cdot|s_1,a)$ has full support $\{s_1,s_2\}$ when the transition starts from $s_1$. Hence, we can make perturbations on $P^0_h(\cdot|s_1,a)$ safely without violating the absolutely continuous condition. Additionally, \citet{liu2024minimax} only construct perturbation in the first stage, while we admit perturbation in every stage $h$ in order to make our instance more general.
\end{remark}

\subsection{Proof of \Cref{th:lower bound}}
\label{sec:proof of lower bound-subsection}
With this family of hard instances, we are ready to prove the information-theoretic lower bound. For any $\bxi \in \{-1,1\}^{dH}$, let $\QQ_{\bxi}$ denote the distribution of dataset $\cD$ collected from the instance $M_{\bxi}$. Denote the family of parameters as $\Omega = \{-1,1\}^{dH}$ and the family of hard instances as $\cM=\{M_{\bxi}:\bxi\in\Omega\}$.
Before the proof, we introduce the following lemma bounding the robust value function.

\begin{lemma}
        \label{lemma: bounding robust value function - hard instance}
        Under the constructed hard instances in \Cref{sec:Construction of Hard Instances}, let $\delta = d^{3/2}/\sqrt{2K}$ and $K>d^3H^2/(2\lambda^2)$.
        For any $h\in [H]$ , we have
        \begin{align}
             0 \leq 
           \frac{\delta}{2d}\sum_{j=h}^H ( 1- \epsilon)^{j-h} \Big( d + \Big(\sum_{i=1}^d\xi_{ji}\EE^{\pi}a_{ji}\Big)\Big)-V^{\pi,\lambda}_h(s_1) \leq f_h^\lambda(\epsilon),
            \label{1}
        \end{align}
        where $f_h^\lambda(\epsilon)$ is a error term, which is defined as:
        \begin{align*}
            f_h^\lambda(\epsilon) = \begin{cases}
                0, &  \text{if D is TV;} \\
                (H-h)\lambda \epsilon (e-1), & \text{if D is KL;} \\
                (H-h)\lambda  \epsilon (1-\epsilon)/4, &\text{if D is $\chi^2$.} 
            \end{cases} 
        \end{align*}
        Furthermore, if we set the $\epsilon$ as
    \begin{align}
        \epsilon = \begin{cases}
                 1 - 2^{-1/H}, & \text{if D is TV;} \\
                \min \{1 - 2^{-1/H}, d^{3/2}/(64\lambda\sqrt{2K})\} , &\text{if D is KL;} \\
                \min \{1 - 2^{-1/H}, d^{3/2}/(8\lambda\sqrt{2K} )\}, &  \text{if D is $\chi^2$,}
                \end{cases}
                \label{definition of epsilon}
    \end{align}
    then we have $f_h^\lambda(\epsilon) \leq d^{3/2} H/{32\sqrt{2K}}$.
    \end{lemma}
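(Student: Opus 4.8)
The plan is a backward induction on the stage index $h$, at each step evaluating the single-step regularized robust Bellman backup in closed form using the two-state structure of the hard instance together with the divergence-specific duality identities \Cref{prop:the regularized duality under TV divergence}, \Cref{prop:the regularized duality under KL divergence} and \Cref{prop:the regularized duality under x2 divergence}.

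First I would record two preliminary facts. Taking $P=P^0$ in the infimum defining $V_h^{\pi,\lambda}$ and using $0\le r_h(s_1,a)\le\delta$ and $r_h(s_2,\cdot)=0$ gives $0\le V_h^{\pi,\lambda}(s_1)\le (H-h+1)\delta\le H\delta$; and $\delta=d^{3/2}/\sqrt{2K}$ together with the hypothesis $K>d^3H^2/(2\lambda^2)$ forces $H\delta<\lambda$, so $V_h^{\pi,\lambda}(s_1)\in[0,\lambda]$ for every $h$. Also, a short induction shows $V_h^{\pi,\lambda}(s_2)=0$ for all $h$, since the reward at $s_2$ is zero and the infimum over admissible perturbations keeps the mass at $s_2$. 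Next, by \Cref{prop:linear of bellman operator} and the fact that the first $d+1$ coordinates of $\bmu_h^0$ all equal $\nu_\epsilon:=(1-\epsilon)\delta_{s_1}+\epsilon\delta_{s_2}$ while the last equals $\delta_{s_2}$, I would compute the scalar $w(x):=\inf_{\mu\in\Delta(\cS)}\{\mu(s_1)x+\lambda D(\mu\|\nu_\epsilon)\}$, where the next-stage value function is $x$ on $s_1$ and $0$ on $s_2$ with $x=V_{h+1}^{\pi,\lambda}(s_1)\in[0,\lambda]$: for TV, $w(x)=(1-\epsilon)x$ (here $x\le\lambda$ makes the clipping level $V_{\min}+\lambda$ inactive); for KL, $w(x)=-\lambda\log\bigl((1-\epsilon)e^{-x/\lambda}+\epsilon\bigr)$; and for $\chi^2$, the dual objective $(1-\epsilon)\alpha-\tfrac{\epsilon(1-\epsilon)}{4\lambda}\alpha^2$ is increasing on $[0,x]$ because its unconstrained maximizer $2\lambda/\epsilon$ exceeds $\lambda\ge x$, so $w(x)=(1-\epsilon)x-\tfrac{\epsilon(1-\epsilon)}{4\lambda}x^2$. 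Since $\sum_{i=1}^{d+1}\phi_i(s_1,a)=1$ and $\phi_{d+2}(s_1,a)=0$, the weight term is action-independent, whence
\[
  V_h^{\pi,\lambda}(s_1)=c_h+w\bigl(V_{h+1}^{\pi,\lambda}(s_1)\bigr),\qquad c_h:=\EE^{\pi}\bigl[r_h(s_1,a)\bigr]=\tfrac{\delta}{2d}\Bigl(d+\textstyle\sum_{i=1}^d\xi_{hi}\EE^{\pi}a_{hi}\Bigr)\ge0 .
\]
For TV this is $V_h^{\pi,\lambda}(s_1)=c_h+(1-\epsilon)V_{h+1}^{\pi,\lambda}(s_1)$ with $V_{H+1}^{\pi,\lambda}(s_1)=0$, and unrolling gives exactly the middle expression in \eqref{1}, i.e. the TV case with $f_h^\lambda\equiv0$.

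For KL and $\chi^2$ I would induct downward on the claim $0\le A_h-V_h^{\pi,\lambda}(s_1)\le f_h^\lambda(\epsilon)$, where $A_h$ is the middle term in \eqref{1}, which satisfies $A_h=c_h+(1-\epsilon)A_{h+1}$ and $A_{H+1}=0$. Setting $g(x):=(1-\epsilon)x-w(x)$, the displayed identity gives the recursion $A_h-V_h^{\pi,\lambda}(s_1)=(1-\epsilon)\bigl(A_{h+1}-V_{h+1}^{\pi,\lambda}(s_1)\bigr)+g\bigl(V_{h+1}^{\pi,\lambda}(s_1)\bigr)$. The lower bound reduces to $g\ge0$, i.e. $w(x)\le(1-\epsilon)x$: for KL this is Jensen's inequality $(1-\epsilon)e^{-x/\lambda}+\epsilon\ge e^{-(1-\epsilon)x/\lambda}$ used inside the logarithm, and for $\chi^2$ it is immediate since the subtracted variance term is nonnegative. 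For the upper bound I would establish the per-step estimate $g(x)\le\lambda\epsilon(e-1)$ on $[0,\lambda]$ for KL --- using $\log\bigl((1-\epsilon)e^{-x/\lambda}+\epsilon\bigr)=-x/\lambda+\log\bigl(1+\epsilon(e^{x/\lambda}-1)\bigr)$, then $\log(1+t)\le t$ and $e^u-1-u\le e-2<e-1$ for $u\in[0,1]$ --- and $g(x)=\tfrac{\epsilon(1-\epsilon)}{4\lambda}x^2\le\tfrac{\epsilon(1-\epsilon)\lambda}{4}$ on $[0,\lambda]$ for $\chi^2$; since $1-\epsilon\le1$ and the top-stage contribution $g\bigl(V_{H+1}^{\pi,\lambda}(s_1)\bigr)=g(0)=0$ vanishes, the recursion telescopes to $A_h-V_h^{\pi,\lambda}(s_1)\le(H-h)\cdot(\text{per-step bound})=f_h^\lambda(\epsilon)$. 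Finally, substituting the choices of $\epsilon$ in \eqref{definition of epsilon} and using $e-1<2$: for KL $f_h^\lambda(\epsilon)\le H\lambda\epsilon(e-1)\le\tfrac{d^{3/2}H(e-1)}{64\sqrt{2K}}\le\tfrac{d^{3/2}H}{32\sqrt{2K}}$, for $\chi^2$ $f_h^\lambda(\epsilon)\le\tfrac{H\lambda\epsilon}{4}\le\tfrac{d^{3/2}H}{32\sqrt{2K}}$, and for TV $f_h^\lambda\equiv0$.

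The main obstacle is the divergence-by-divergence evaluation of $w(x)$: one must verify that the TV clipping threshold and the $\chi^2$ box constraint are both inactive --- which is precisely where $V_{h+1}^{\pi,\lambda}(s_1)\le\lambda$, hence the sample-size hypothesis $K>d^3H^2/(2\lambda^2)$, enters --- and then extract a per-step error term $g$ sharp enough that the telescoped constant matches $f_h^\lambda(\epsilon)$.
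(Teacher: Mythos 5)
Your proof is correct and takes essentially the same route as the paper's: evaluate the one-step regularized backup in closed form on the two-state instance via the TV/KL/$\chi^2$ dualities, use $V_{h+1}^{\pi,\lambda}(s_1)\le\lambda$ (forced by $\delta=d^{3/2}/\sqrt{2K}$ and $K>d^3H^2/(2\lambda^2)$) to deactivate the clipping/box constraints, and accumulate a per-step error of $0$, $\lambda\epsilon(e-1)$, or $\lambda\epsilon(1-\epsilon)/4$ over the $H-h$ remaining stages before plugging in the prescribed $\epsilon$. Packaging both inequalities into the single scalar recursion $A_h-V_h^{\pi,\lambda}(s_1)=(1-\epsilon)\bigl(A_{h+1}-V_{h+1}^{\pi,\lambda}(s_1)\bigr)+g\bigl(V_{h+1}^{\pi,\lambda}(s_1)\bigr)$, identifying the exact $\chi^2$ dual maximizer, and spelling out $V_h^{\pi,\lambda}(s_2)=0$ are minor streamlinings of the paper's separate inductions rather than a genuinely different argument.
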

\begin{proof}[Proof of \Cref{th:lower bound}]
Invoking \Cref{lemma: bounding robust value function - hard instance},  we have
\begin{align}
        V^{\pi^\star, \lambda}_1(s_1) - V^{\pi, \lambda}_1(s_1)& \geq \frac{\delta}{2d}\sum_{j=1}^H\sum_{i=1}^d(1-\epsilon)^{j-1}\Big(\frac{1+ \xi_{ji}}{2} - \xi_{ji}\EE^{\pi}a_{ji}\Big) - f_1^\lambda(\epsilon) \nonumber\\
        & = \frac{\delta}{4d}\sum_{j=1}^H\sum_{i=1}^d(1-\epsilon)^{j-1}(1 - \xi_{ji}\EE^{\pi}(2a_{ji}-1)) - f_1^\lambda(\epsilon) \nonumber\\
        & = \frac{\delta}{4d}\sum_{j=1}^H\sum_{i=1}^d(1-\epsilon)^{j-1}(\xi_{ji} - \EE^{\pi}(2a_{ji}-1))\xi_{ji} - f_1^\lambda(\epsilon)\nonumber\\
        & = \frac{\delta}{4d}\sum_{j=1}^H\sum_{i=1}^d(1-\epsilon)^{j-1}|\xi_{ji} - \EE^{\pi}(2a_{ji}-1)| - f_1^\lambda(\epsilon) \label{low bound 1}\\
        & \geq \frac{\delta}{4d} (1-\epsilon)^{H-1}\sum_{j=1}^H\sum_{i=1}^d|\xi_{ji} - \EE^{\pi}(2a_{ji}-1)| - f_1^\lambda(\epsilon), \label{eq: lowbound decomposition}
    \end{align}
    where \eqref{low bound 1} follows from the fact that $\xi_{ji} \in \{ -1,1 \}$. To continue, 
    \begin{align}
        \frac{\delta}{4d}\sum_{j=1}^H\sum_{i=1}^d|\xi_{ji} - \EE^{\pi}(2a_{ji}-1)|
        &\geq \frac{\delta}{4d}\sum_{j=1}^H\sum_{i=1}^d|\xi_{ji} - \EE^{\pi}(2a_{ji}-1)| \textbf{1}\{ \xi_{hi} \neq \sign(\EE(2a_{ji}-1)) \}| \nonumber\\
        & \geq \frac{\delta}{4d}\sum_{j=1}^H\sum_{i=1}^d \textbf{1}\{ \xi_{hi} \neq \sign(\EE(2a_{ji}-1)) \}|\nonumber\\
        & = \frac{\delta}{4d}D_{H}(\xi, \xi^{\pi}),   \label{2}
    \end{align}
    where $D_H(\cdot, \cdot)$ is the Hamming distance. Then applying the Assouad's method (Lemma 2.12 in \citet{tsybakov2009nonparametric}), we have 
    \begin{align}
        \inf_{\pi}\sup_{\xi \in \Omega}\EE_{\xi}[D_H(\xi, \xi^{\pi})] &\geq \frac{dH}{2}\min_{D_H(\xi, \xi^{\pi}) =1} \inf_{\phi}[\QQ_{\xi}(\psi(\mathcal{D}) \neq \xi) + \QQ_{\xi^{\pi}}(\psi(\mathcal{D}) \neq \xi^{\pi})] \nonumber\\
        & \geq \frac{dH}{2}\Big(1 - \Big(\frac{1}{2}\max_{D_H(\xi, \xi^{\pi})=1}D_{\text{KL}}(\QQ_{\xi}\|\QQ_{\xi^{\pi}})\Big)^{1/2}\Big), \label{3}
    \end{align}
    where $D_{\text{KL}}$ represents the KL divergence. Next we bound $D_{\text{KL}}(\QQ_{\xi}\|\QQ_{\xi^{\pi}})$, according to the definition of $\QQ_{\xi}(\cD)$, we have 
    \begin{align*}
        \QQ_{\xi}(\cD) = \prod_{k=1}^K \prod_{\tau=1}^H \pi^b_{h}(a_{h}^\tau | s_{h}^\tau)P_h^0(s_{h+1}^{\tau} | s_h^{\tau}, a_h^{\tau})R_h(r_h^{\tau}|s_h^{\tau}, a_h^{\tau}),
    \end{align*}
    where $R_h(r_h^{\tau}|s_h^{\tau}, a_h^{\tau})$ refers to the density function of $\mathcal{N}(r_h(s_h^{\tau},a_h^{\tau}), 1)$ at $r_h^{\tau}$. By the fact that the difference between the two distributions $\QQ_{\xi}(\cD)$ and $\QQ_{\xi^{\pi}}(\cD)$ lie only in the reward distribution corresponding to the index where $\xi$ and $\xi^{\pi}$ differ, we have 
    \begin{align}
        D_{KL}(\QQ_{\xi}(\cD)\|\QQ_{\xi^{\pi}}(\cD)) = \sum_{\tau=1}^{\frac{K}{d+2}}D_{KL}\Big(\mathcal{N}\Big(\frac{d+1}{2d}\delta,1 \Big) \| \mathcal{N}\Big(\frac{d-1}{2d}\delta, 1\Big)\Big) = \frac{K}{d+2} \frac{\delta^2}{d^2} \leq \frac{1}{2}, \label{4}
    \end{align}
    where the last inequality follows from the definition of $\delta$. By the fact that $\delta = d^{3/2}/\sqrt{2K}$, we have
    \begin{align}
         \inf_{\hat{\pi}}\sup_{M \in \mathcal{M}} \text{subopt}(M, \hat{\pi}, s, \lambda) &\geq \frac{\delta H(1-\epsilon)^{H-1}}{4}\Big( 1 - \Big(\frac{1}{2}\max_{D_H(\xi, \xi^{\pi})=1}D_{KL} (\QQ_{\xi}\|\QQ_{\xi^{\pi}})\Big)^{1/2}\Big) - f_h^\lambda(\epsilon)\label{low all 1}\\ 
         & \geq \frac{\delta H(1-\epsilon)^{H-1}}{8} - f_h^\lambda(\epsilon) \label{low all 2}\\
         &= \frac{d^{3/2} H(1-\epsilon)^{H-1}}{8\sqrt{2K}} - f_h^\lambda(\epsilon)\nonumber \\ \label{low1}
         & \geq \frac{d^{3/2} H}{16\sqrt{2K}} - \frac{d^{3/2} H}{32\sqrt{2K}} \\
         & \geq \frac{1}{128\sqrt{2}} \sum_{h=1}^H\EE^{\pi^\star, P}\Big[\sum_{i=1}^d \|\phi_i(s,a)\mathbf{1}_i\|_{{\Lambda}_h^{-1}}|s_1 = s\Big], \label{low all 3}
    \end{align}
     where \eqref{low all 1} holds by applying the inequality \eqref{eq: lowbound decomposition}, (\ref{2}) and (\ref{3}) in order, \eqref{low all 2} holds by (\ref{4}), \eqref{low1} holds by the definition of $\epsilon$ in \eqref{definition of epsilon}, and \eqref{low all 3} holds by \Cref{lemma: bound of dataset}. Hence, it is sufficient for taking $c = 1/128\sqrt{2}$. This concludes the proof.
    
\end{proof}

 \section{Proof of Technical Lemmas}
\subsection{Proof of \Cref{lemma:regularized robust suboptimality lemma}}
\begin{proof}
    We first decompose the $\text{SubOpt}(\pi, s, \lambda)$ as follows:
    \begin{align*}
        \text{SubOpt}(\hat{\pi}, s, \lambda) &= V_1^{\star, \lambda}(s) - V_1^{\hat{\pi},\lambda}(s) \\
        &= \underbrace{V_1^{\star, \lambda}(s) - \hat{V}_{1}^{\lambda}(s)}_{\text{(i)}} + \underbrace{\hat{V}_{1}^{\lambda}(s) - V_1^{\hat{\pi}, \lambda}(s)}_{\text{(ii)}},
    \end{align*}
    where $\hat{V}^{\lambda}_1(s)$ is computed in the algorithm. We next bound the term (i) and (ii) respectively.
    For term (i), the error comes from the estimated error of the value function and the Q-function, therefore by \eqref{prop:regularized Robust Bellman equation} and the definition of the $\hat{Q}_h^{\lambda}(s, a)$ in meta-algorithm, for any $h \in [H]$, we can decompose the error as:
    \begin{align}
        V_h^{\pi^\star, \lambda}(s) - \hat{V}^{\lambda}_h(s) &= Q_h^{\pi^\star, \lambda}(s, \pi^\star_h(s)) - \hat{Q}_h^{\lambda}(s, \hat{\pi}_h(s)) \nonumber\\
        & \leq Q_h^{\pi^\star, \lambda}(s, \pi^\star_h(s)) - \hat{Q}_h^{\lambda}(s, \pi^\star(s)) \label{D1.1}\\
        & = \mathcal{T}_h^{\lambda}V_{h+1}^{\pi^\star, \lambda}(s, \pi_h^\star(s)) - \mathcal{T}_h^{\lambda}\hat{V}^{\lambda}_{h+1}(s, \pi_h^\star(s))
        + \mathcal{T}_h^{\lambda}\hat{V}^{\lambda}_{h+1}(s, \pi_h^\star(s)) - \hat{Q}_h^{\lambda}(s, \pi^\star(s)) \nonumber\\
        & = \mathcal{T}_h^{\lambda}V_{h+1}^{\pi^\star, \lambda}(s, \pi_h^\star(s)) - \mathcal{T}_h^{\lambda}\hat{V}^{\lambda}_{h+1}(s, \pi_h^\star(s)) + \delta^\lambda_h(s,\pi_h^\star(s)), \label{EQ1}
    \end{align}
    where \eqref{D1.1} comes from the fact that $\hat{\pi}_h$ is the greedy policy with respect to $\hat{Q}_h^{\lambda}(s,a)$, the regularized robust Bellman update error $\delta_h^\lambda$ is defined as:
    \begin{align}
        \delta_h^\lambda(s,a) := \mathcal{T}_h^{\lambda}\hat{V}^{\lambda}_{h+1}(s, a) - \hat{Q}_h^{\lambda}(s, a), \forall (s,a) \in \cS \times \cA, \label{definition of regularized bellman error}
    \end{align}
     which aims to eliminate the clip operator in the definition of $\hat{Q}_h^{\lambda}(s, a)$. Denote the worst transition kernel w.r.t the regularized Bellman operator as $\hat{P} = \{ \hat{P}_h \}_{h \in [H]}$, where $\hat{P}_h$ is defined as:
    \begin{align*}
        \hat{P}_h(\cdot |s,a) &=  \argmin_{\bmu_h \in \Delta(\cS)^d,P_h=\la \bphi,\bmu_h \ra}\big[\EE_{s'\sim P_h(\cdot|s,a)}\big[\hat{V}_{h+1}^{\lambda}(s')\big]+ \lambda \la\bphi(s, a),\bD(\bmu_h||\bmu_h^0) \ra\big] \\
        & = \sum_{i=1}^d\phi_i(s,a) \argmin_{\mu_{h, i} \in \Delta(\cS)}\big[\EE_{s' \sim \mu_{h, i}}[\hat{V}^\lambda_{h+1}(s')]+ \lambda D(\mu_{h, i} \|\mu_{h, i}^0)\big]
        \\
        & = \sum_{i=1}^d\phi_i(s,a) \hat{\mu}_{h,i}(\cdot)
        ,
    \end{align*}
    where the $\hat{\mu}_{h,i}$ is defined as $\hat{\mu}_{h,i} = \argmin_{\mu_{h, i} \in \Delta(\cS)}\big[\EE_{s' \sim \mu_{h, i}}[\hat{V}_{h+1}(s')]+ \lambda D(\mu_{h, i} \|\mu_{h, i}^0)\big]$.
    Hence the difference between the regularized Bellman operator and the empirical regularized Bellman operator can be bounded as
    \begin{align}
        &\mathcal{T}_h^{\lambda}V_{h+1}^{\pi^\star, \lambda}(s, \pi_h^\star(s)) - \mathcal{T}_h^{\lambda}\hat{V}^{\lambda}_{h+1}(s, \pi_h^\star(s)) \nonumber\\
        &=  r_h(s, \pi_h^\star(s)) + \inf_{\bmu_h \in \Delta(\cS)^d,P_h=\la \bphi,\bmu_h \ra}\big[\EE_{s'\sim P_h(\cdot|s,\pi_h^\star(s))}\big[V_{h+1}^{\pi^\star, \lambda}(s')\big]+ \lambda \la\bphi(s, \pi_h^\star(s)),\bD(\bmu_h||\bmu_h^0) \ra\big]\nonumber \\
        &\quad - r_h(s, \pi_h^\star(s)) - \inf_{\bmu_h \in \Delta(\cS)^d,P_h=\la \bphi,\bmu_h \ra}\big[\EE_{s'\sim P_h(\cdot|s,\pi_h^\star(s))}\big[\hat{V}_{h+1}^{\lambda}(s')\big]+ \lambda \la\bphi(s, \pi_h^\star(s)),\bD(\bmu_h||\bmu_h^0) \ra\big] \nonumber\\
        &\leq \EE_{s' \sim \hat{P}_h(\cdot |s,\pi_h^\star(s))}[\hat{V}^{\lambda}_{h+1}(s')] - \EE_{s' \sim \hat{P}_h(\cdot |s,\pi_h^\star(s))}[V^{\star,\lambda}_{h+1}(s')] \nonumber\\
        &= \EE_{s' \sim \hat{P}_h(\cdot |s,\pi_h^\star(s))}[\hat{V}^{\lambda}_{h+1}(s')-V^{\star, \lambda}_{h+1}(s')]. \label{EQ2}
    \end{align}
    Combining inequality (\ref{EQ1}) and (\ref{EQ2}), we have for any $h \in [H]$
    \begin{align}
         V_h^{\pi^\star, \lambda}(s) - \hat{V}^\lambda_h(s) \leq \EE_{s' \sim \hat{P}_h(\cdot |s,\pi_h^\star(s))}[\hat{V}^\lambda_{h+1}(s')-V^{\star, \lambda}_{h+1}(s')] + \delta^\lambda_h(s,\pi_h^\star(s)). \label{EQ3}
    \end{align} 
    Recursively applying (\ref{EQ3}), we have
    \begin{align*}
         \text{(i)} = V_1^{\pi^\star, \lambda}(s) - \hat{V}^\lambda_1(s) \leq  \sum_{h=1}^H \EE^{\pi^\star, \hat{P}}\big[\delta^\lambda_h(s_h,a_h)\|s_1 = s\big].
    \end{align*}
    Next we bound term (ii), similar to term (i), by \eqref{def: regularized bellman operator}, the error can be decomposed to 
    \begin{align}
        \hat{V}^\lambda_h(s) - V_h^{\hat{\pi}, \lambda}(s) &= \hat{Q}_h^{\lambda}(s, \hat{\pi}_h(s)) - Q_h^{\hat{\pi}, \lambda}(s, \hat{\pi}_h(s))\nonumber \\
        & = \mathcal{T}_h^{\lambda} \hat{V}^\lambda_{h+1}(s, \hat{\pi}_h(s)) - \delta_h^\lambda(s, \hat{\pi}_h(s)) - \mathcal{T}_h^{\lambda} V_{h+1}^{\hat{\pi}, \lambda}(s, \hat{\pi}_h(s)). \label{EQ4}
    \end{align}
    Denote $P^{\hat{\pi}} = \{ P^{\hat{\pi}}_h \}_{h \in [H]}$ where $P^{\hat{\pi}}_h$ is defined as: $\forall (s,a) \in \cS \times \cA$,
    \begin{align*}
        P^{\hat{\pi}}_h(\cdot |s,a) =  \argmin_{\bmu_h \in \Delta(\cS)^d,P_h=\la \bphi,\bmu_h \ra}\big[\EE_{s'\sim P_h(\cdot|s,a)}\big[\hat{V}_{h+1}^{\hat{\pi}}(s')\big]+ \lambda \la\bphi(s, a),\bD(\bmu_h||\bmu_h^0) \ra\big]. 
    \end{align*}
    Hence similar to the bound in (\ref{EQ2}), the difference between the regularized Bellman operator and the empirical regularized Bellman operator can be bounded as 
    \begin{align}
        \mathcal{T}_h^{\lambda} \hat{V}^\lambda_{h+1}(s, \hat{\pi}_h(s))- \mathcal{T}_h^{\lambda} V_{h+1}^{\hat{\pi}, \lambda}(s, \hat{\pi}_h(s)) \leq \EE_{s' \sim P^{\hat{\pi}}_h(\cdot |s,\hat{\pi}_h(s))}[\hat{V}^\lambda_{h+1}(s')-V^{\hat{\pi}, \lambda}_{h+1}(s')]. \label{EQ5}
    \end{align}
    Combining inequality (\ref{EQ4}), (\ref{EQ5}), we have for any $h \in [H]$
    \begin{align}
        \hat{V}^\lambda_h(s) - V^{\hat{\pi}, \lambda}_h(s) \leq \EE_{s' \sim P^{\hat{\pi}}_h(\cdot |s,\hat{\pi}_h(s))}[\hat{V}^\lambda_{h+1}(s')-V^{\hat{\pi}, \lambda}_{h+1}(s')] - \delta_h^\lambda(s, \hat{\pi}_h(s)). \label{EQ6}
    \end{align}
    Recursively applying (\ref{EQ6}), we have the "pessimisim" of the estimated value function that $\forall h \in [H]$
    \begin{align*}
     \hat{V}^\lambda_1(s) - V^{\hat{\pi}, \lambda}_1(s) \leq \sum_{h=1}^H \EE^{\hat{\pi}, P^{\hat{\pi}}}\big[-\delta^\lambda_h(s_h,a_h)\|s_1 = s\big].     
    \end{align*}
    Therefore combining the two bounds above, we have 
    \begin{align}
     \text{SubOpt}(\hat{\pi}, s, \lambda) = \text{(i)} + \text{(ii)} \leq   \sum_{h=1}^H \EE^{\pi^\star, \hat{P}}\big[\delta^\lambda_h(s_h,a_h)|s_1 = s\big] + \sum_{h=1}^H \EE^{\hat{\pi}, P^{\hat{\pi}}}\big[-\delta^\lambda_h(s_h,a_h)\|s_1 = s\big].
     \label{upper bound for subopt in prep}
    \end{align}
    Hence, it requires to estimate the range of the regularized Bellman update error 
    $\delta^\lambda_h(s,a)$. Recall the definition in \eqref{definition of regularized bellman error}, we claim that
    \begin{align}
        0 \leq \delta^\lambda_h(s,a) \leq 2\Gamma_h(s,a) \label{bound of bellman error}
    \end{align}
    holds for $\forall (s,a,h) \in \cS \times \cA \times [H]$. For the LHS of \eqref{bound of bellman error}, first we notice that if $\la \bphi(s,a), \hat{\bw}_h^\lambda \ra - \Gamma_h(s, a)\leq 0$, the inequality holds trivially as $\hat{Q}_h^\lambda(s,a)= 0$. Next we consider the case where $\la \bphi(s,a), \hat{\bw}_h^\lambda \ra - \Gamma_h(s, a)\geq 0$. By the definition of $\hat{Q}_h^\lambda(s,a)$ and 
     the assumption in the lemma, we have
     \begin{align*}
         \delta^\lambda_h(s,a) &= \mathcal{T}_h^{\lambda} \hat{V}_{h+1}^{\lambda}(s, a) - \hat{Q}_h^\lambda(s,a) \\
         & =  \mathcal{T}_h^{\lambda} \hat{V}_{h+1}^{\lambda}(s, a) - \min \big \{\la \bphi(s,a), \hat{\bw}_h^\lambda \ra- \Gamma_h(s,a), H-h+1 \big \} \\
         & \geq \mathcal{T}_h^{\lambda} \hat{V}_{h+1}^{\lambda}(s, a) - \la \bphi(s,a), \hat{\bw}_h^\lambda \ra + \Gamma_h(s,a) \\
         & \geq 0.
     \end{align*}
     On the other hand, by the assumption in the lemma, we have 
     \begin{align*}
         \la \bphi(s,a), \hat{\bw}_h^\lambda \ra- \Gamma_h(s,a) \leq \mathcal{T}_h^{\lambda} \hat{V}_{h+1}^{\lambda}(s, a) \leq H-h+1.
     \end{align*}
     Hence, we can upper bound $\delta_h^\lambda(s,a)$ as 
     \begin{align*}
                  \delta^\lambda_h(s,a) &= \mathcal{T}_h^{\lambda} \hat{V}_{h+1}^{\lambda}(s, a) - \hat{Q}_h^\lambda(s,a) \\
         & =  \mathcal{T}_h^{\lambda} \hat{V}_{h+1}^{\lambda}(s, a) - \max \big \{\la \bphi(s,a), \hat{\bw}_h^\lambda \ra- \Gamma_h(s,a), 0 \big \} \\
         & \leq \mathcal{T}_h^{\lambda} \hat{V}_{h+1}^{\lambda}(s, a) - \la \bphi(s,a), \hat{\bw}_h^\lambda \ra +\Gamma_h(s,a) \\
         & \leq 2\Gamma_h(s,a).
     \end{align*}
    This concludes the claim. Now it remains to bound the empirical transition kernel $\hat{P}$. Noticing the fact that $ \forall h \in [H], (s,a) \in \cS \times \cA,$ 
    \begin{align}
        \lambda D(\hat{\mu}_{h,i} \|\mu^0_{h,i}) &\leq  \EE_{s' \sim \hat{\mu}_{h,i}}[\hat{V}^\lambda_{h+1}(s')]+ \lambda D(\hat{\mu}_{h,i} \|\mu^0_{h,i}) \nonumber\\
        & = \inf_{\mu_{h, i} \in \Delta(\cS)}\big[\EE_{s' \sim \mu_{h, i}}[\hat{V}_{h+1}(s')]+ \lambda D(\mu_{h, i} \|\mu_{h, i}^0)\big] \nonumber\\
        & \leq
        \inf_{\mu_{h, i} \in \Delta(\cS)}\big[\EE_{s' \sim \mu_{h, i}}[V_{h+1}^{\star, \lambda}(s')]+ \lambda D(\mu_{h, i} \|\mu^0_{h,i})\big] \label{D2.1}\\
        & \leq \EE_{s' \sim \mu_{h, i}^0}[V_{h+1}^{\star, \lambda}(s')] \nonumber
        \\ & \leq \max_{s \in \cS} V_{h+1}^{\star, \lambda}(s), \nonumber
    \end{align}
    where \eqref{D2.1} comes from the pessimism of value function, i.e $\hat{V}^\lambda_{h+1}(s) \leq V^{\star, \lambda}_h(s), \forall h \in [H]$. Hence, the empirical transition kernel $\hat{P}_h(\cdot | s,a)$ is contained in the set $\mathcal{U}^\lambda(P^0)$ defined in \eqref{def:robustly admissible set}. Hence, by \eqref{upper bound for subopt in prep} and \eqref{bound of bellman error}, we have
    \begin{align*}
            \text{SubOpt}(\hat{\pi},s,\lambda) &\leq  \sum_{h=1}^H \EE^{\pi^\star, \hat{P}}\big[\delta^\lambda_h(s_h,a_h)|s_1 = s\big] + \sum_{h=1}^H \EE^{\hat{\pi}, P^{\hat{\pi}}}\big[-\delta^\lambda_h(s_h,a_h)\|s_1 = s\big] \\
            & \leq  2 \sum_{h=1}^{H}\EE^{\pi^\star, \hat{P}}\big[\Gamma_h(s_h, a_h)|s_1 = s\big] \\
            &\leq 2 \sup_{P \in \mathcal{U}^\lambda(P^0)} \sum_{h=1}^{H}\EE^{\pi^\star, P}\big[\Gamma_h(s_h, a_h)|s_1 = s\big].
    \end{align*}
    This concludes the proof.
    \end{proof}
\subsection{Proof of \Cref{bound of weights in TV distance}}
\begin{proof}
For all $h \in [H]$, from the definition of $w_h$,     
    \begin{align*}
        \|\bw_h^\lambda\|_2 =  \|\EE_{s \sim \bmu^0_h}[\hat{V}^\lambda_{h+1}(s)]_{\alpha_{h+1}}\|_2 \leq H\sqrt{d},
    \end{align*}
    where the inequality follows from the fact that $\hat{V}_{h+1}^\lambda \leq H$, for all $h \in [H]$. Meanwhile, by the definition of $\hat{\bw}_h^{\lambda}$ in \Cref{alg:R2PVI-TV}, and the triangle inequality, 
    \begin{align}
        \|\hat{w}_h^{\lambda}\|_2 &= \Big\|\bLambda_h^{-1}\sum_{\tau=1}^K \bphi(s_h^{\tau}, a_h^{\tau})[\hat{V}^\lambda_{h+1}(s)]_{\alpha_{h+1}}\Big\|_2 \nonumber\\
        &\leq H\sum_{\tau=1}^K\|\bLambda_h^{-1}\bphi(s_h^{\tau}, a_h^{\tau})\|_2 \nonumber\\
        &= H\sum_{\tau = 1}^K \sqrt{\bphi(s_h^{\tau}, a_h^{\tau})^{\top} \bLambda_h^{-1/2} \bLambda_h^{-1} \bLambda_h^{-1/2} \bphi(s_h^{\tau}, a_h^{\tau})} \nonumber\\
        &\leq \frac{H}{\sqrt{\gamma}} \sum_{\tau=1}^{K}\sqrt{\bphi(s_h^{\tau}, a_h^{\tau})^{\top}\bLambda_h^{-1}\bphi(s_h^{\tau}, a_h^{\tau})} \label{support3}\\
        &\leq \frac{H\sqrt{K}}{\sqrt{\gamma}}
        \sqrt{\sum_{\tau=1}^{K}\bphi(s_h^{\tau}, a_h^{\tau})^{\top}\bLambda_h^{-1}\bphi(s_h^{\tau}, a_h^{\tau})} \label{support4}\\
        &=\frac{H\sqrt{K}}{\sqrt{\gamma}}
        \sqrt{\text{Tr}(\bLambda_h^{-1}(\bLambda_h - \gamma \mathbf{I}))} \nonumber\\
        &\leq \frac{H\sqrt{K}}{\sqrt{\gamma}}\sqrt{\text{Tr}(\mathbf{I})} \nonumber\\
        &
        =H\sqrt{\frac{Kd}{\gamma}},\nonumber
    \end{align}
    where \eqref{support3} follows from the fact that $\|\bLambda_h^{-1}\| \leq \gamma^{-1}$, \eqref{support4} follows from the Cauchy-Schwartz inequality. Then we conclude the proof.
\end{proof}
    \subsection{Proof of \Cref{lemma:bound of weights in KL distance}}
\begin{proof}
    By definition, we have
    \begin{align*}
       \big \|\bw_h^\lambda\|_2 = \Big\|\EE_{s \sim \bmu^{0}_{h}}\Big[e^{-\frac{\hat{V}^\lambda_{h+1}(s)}{\lambda}}\Big]\Big\|_2 = \int _{\mathcal{S}}\Big\|e^{-\frac{\hat{V}^\lambda_{h+1}(s)}{\lambda}}\bmu_h^0(s)\Big\|_2ds \leq \int_{\mathcal{S}}\|\bmu_h^0(s)\|_2ds \leq \sqrt{d},
    \end{align*}
    this concludes the proof of $\bw_h^\lambda$. For $\hat{\bw}_h^{\lambda}$,
    \begin{align}
        \|\hat{\bw}_h^{\lambda}\|_2 &= \Big\|\bLambda_h^{-1}\sum_{\tau=1}^K \bphi(s_h^{\tau}, a_h^{\tau})e^{-\frac{\hat{V}^\lambda_{h+1}(s_{h+1})}{\lambda}}\Big\|_2 \nonumber\\
        &\leq \sum_{\tau=1}^K\|\bLambda_h^{-1}\bphi(s_h^{\tau}, a_h^{\tau})\|_2 \nonumber\\
        & = \sum_{\tau = 1}^K \sqrt{\bphi(s_h^{\tau}, a_h^{\tau})^{\top} \bLambda_h^{-1/2} \bLambda_h^{-1} \bLambda_h^{-1/2} \bphi(s_h^{\tau}, a_h^{\tau})} \nonumber\\
        &\leq \frac{1}{\sqrt{\gamma}} \sum_{\tau=1}^{K}\sqrt{\bphi(s_h^{\tau}, a_h^{\tau})^{\top}\bLambda_h^{-1}\bphi(s_h^{\tau}, a_h^{\tau})} \label{support1}\\
        &\leq \frac{\sqrt{K}}{\sqrt{\gamma}}
    \sqrt{\sum_{\tau=1}^{K}\bphi(s_h^{\tau}, a_h^{\tau})^{\top}\bLambda_h^{-1}\bphi(s_h^{\tau}, a_h^{\tau})} \label{support2}
        \\
        &=\frac{\sqrt{K}}{\sqrt{\gamma}}
        \sqrt{\text{Tr}(\bLambda_h^{-1}(\bLambda_h -\gamma \mathbf{I}))} \nonumber\\
        &\leq \frac{\sqrt{K}}{\sqrt{\gamma}}\sqrt{\text{Tr}(\mathbf{I})}
        =\sqrt{\frac{Kd}{\gamma}}, \nonumber
    \end{align}
    \noindent
    where \eqref{support1} follows from the fact that $\|\bLambda_h^{-1}\| \leq \gamma^{-1}$, \eqref{support2} follows from the Cauchy-Schwartz inequality. Then we conclude the proof.
\end{proof}

\subsection{Proof of \Cref{lemma: covering number of function class in KL}}
\begin{proof}
Denote $\bA = \beta^2 \Lambda_h^{-1}$, then we have $\|\btheta\|_2 \leq L, \|\bA\|_2 \leq B^2 \gamma^{-1}$. For any two functions $V_1, V_2 \in \mathcal{V}$ with parameters $(\btheta_1, \bA_1), (\btheta_2, \bA_2)$, since both $\{ \cdot \}_{[0, H-h+1]}$ and $\max_a$ are contraction maps,
     \begin{align}
        &\text{dist}(V_1, V_2) \\
        &\leq \sup_{s,a}\Big|\bphi(s,a)^{\top}(\btheta_1 - \btheta_2) - \lambda \Big(\log\Big(1+ \sum_{i=1}^d\|\phi_i(s,a) \mathbf{1}_i\|_{\bA_1}\Big) - \log\Big(1+ \sum_{i=1}^d\|\phi_i(s,a) \mathbf{1}_i\|_{\bA_2}\Big) \Big) \Big|\nonumber \\
        &\leq \sup_{\bphi \in \RR^d, \|\bphi\| \leq 1}\Big|\bphi^{\top}(\btheta_1 - \btheta_2) - \lambda \log \frac{1+ \sum_{i=1}^d\|\phi_i(s,a) \mathbf{1}_i\|_{\bA_1}} {1+ \sum_{i=1}^d\|\phi_i(s,a) \mathbf{1}_i\|_{\bA_2}}\Big| \nonumber\\
        &\leq \sup_{\bphi \in \RR^d: \|\bphi\| \leq 1}|\bphi^{\top}(\btheta_1 - \btheta_2)| +\lambda \sup_{\bphi \in \RR^d: \|\bphi\| \leq 1} \Big|\log \frac{1+ \sum_{i=1}^d\|\phi_i(s,a) \mathbf{1}_i\|_{\bA_1}} {1+ \sum_{i=1}^d\|\phi_i(s,a) \mathbf{1}_i\|_{\bA_2}}\Big| , \label{G1}
     \end{align}
     we notice the fact that: for any $x >0, y>0$, 
     \begin{align*}
         \Big|\log\frac{1+x}{1+y}\Big| = \Big|\log\Big(\frac{x-y}{1+y} + 1\Big)\Big| \leq \log(|x-y| + 1)  \leq |x-y|.
     \end{align*}
     \noindent
     Therefore, \eqref{G1} can be bounded as:
     \begin{align}
        \eqref{G1} &\leq \sup_{\bphi \in \RR^d: \|\bphi\| \leq 1}|\bphi^{\top}(\btheta_1 - \btheta_2)| +\lambda \sup_{\bphi \in \RR^d: \|\bphi\| \leq 1} \Big|\sum_{i =1 }^d\|\phi_i(s,a) \mathbf{1}_i\|_{\bA_1} - \sum_{i =1 }^d\|\phi_i(s,a) \mathbf{1}_i\|_{\bA_2}\Big| \nonumber\\
        & = \sup_{\bphi \in \RR^d: \|\bphi\| \leq 1}|\bphi^{\top}(\btheta_1 - \btheta_2)| +\lambda \sup_{\bphi \in \RR^d: \|\bphi\| \leq 1} \Big|\sum_{i =1 }^d\sqrt{\phi_i\mathbf{1}_i^{\top}\bA_1\phi_i\mathbf{1}_i} - \sum_{i =1 }^d\sqrt{\phi_i\mathbf{1}_i^{\top}\bA_2\phi_i\mathbf{1}_i}\Big| \nonumber \\
        &\leq \|\btheta_1 - \btheta_2\|_2 + \lambda \sup_{\bphi \in \RR^d:\|\bphi\| \leq 1}\sum_{i=1}^d\sqrt{\phi_i\mathbf{1}_i^{\top}(\bA_1-\bA_2)\phi_i \mathbf{1}_i} \label{G2} \\
        &\leq \|\btheta_1 - \btheta_2\|_2 + \lambda \sqrt{\|\bA_1 - \bA_2\|}\sup_{\bphi \in \RR^d:\|\bphi\| \leq 1}\sum_{i=1}^d \|\phi_i \mathbf{1}_i\| \nonumber \\
        &\leq \|\btheta_1 - \btheta_2\|_2 + \lambda \sqrt{\|\bA_1 - \bA_2\|_{F}} \label{G3},
     \end{align}
     \noindent
     where the \eqref{G2} follows from the triangular inequality and the fact $|\sqrt{x}- \sqrt{y}| \leq \sqrt{|x-y|}$, and $\|\cdot\|_F$ denotes the Frobenius norm. We next define that $\mathcal{C_{\btheta}}$ is an $\epsilon/2$-cover of $\{ \btheta \in \RR^d| \|\btheta\|_2 \leq L \}$, and the $\mathcal{C}_{\bA}$ is an $\epsilon^2/{4\lambda^2}$-cover of $\{ \bA \in \RR^{d \times d}| \|\bA\|_F \leq d^{1/2}B^2\gamma^{-1} \}$. By \Cref{lemma: coverage number of euclidean ball}, we have that:
     \begin{align*}
         |\mathcal{C_{\btheta}}| \leq (1+ 4L/\epsilon)^d, |\mathcal{C}_{\bA}| \leq (1+ 8\lambda^2d^{1/2}B^2/\gamma \epsilon^2)^{d^2}.
     \end{align*}
     \noindent
     By (\ref{G3}), for any $V_1 \in \mathcal{V}$, there exists $\btheta_2 \in \mathcal{C}_{\btheta}$ and $\bA_2 \in \mathcal{C}_{\bA}$ s.t $V_2$ parametrized by $(\btheta_2, \bA_2)$ satisfying $\text{dist}(V_1, V_2) \leq \epsilon$. Therefore, we have the following:
     \begin{align*}
         \log|\mathcal{N}(\epsilon)| \leq \log|\mathcal{C_{\btheta}}| + \log|\mathcal{C}_{\bA}| \leq d\log(1+4L/\epsilon) + d^2\log(1+8\lambda^2d^{1/2}B^2/\gamma \epsilon^2).
     \end{align*}
     Hence we conclude the proof.
\end{proof}
\subsection{Proof of \Cref{lemma:bound of weights in chi2 distance}} 
\begin{proof}
By definition, we have that 
\begin{align}
            &\|\hat{\bw}_h^{\lambda}\|_2\notag\\
            &= \Big\|\Big[\max_{\alpha \in [(\hat{V}_{h+1}^{\lambda})_{\min},(\hat{V}_{h+1}^{\lambda})_{\max}]} \Big \{ \hat{\EE}^{\mu_{h,i}^0}[\hat{V}_{h+1}^{\lambda}(s)]_{\alpha} + \frac{1}{4\lambda}(\hat{\EE}^{\mu_{h,i}^0}[\hat{V}_{h+1}^{\lambda}(s)]_{\alpha})^2- \frac{1}{4\lambda}\hat{\EE}^{\mu_{h,i}^0}[\hat{V}_{h+1}^{\lambda}(s)]^2_{\alpha} \Big \}\Big]_{i \in [d]}\Big\|_2 \nonumber\\
            & \leq \Big\|\Big[H + \frac{H^2}{2\lambda}\Big]_{i \in [d]}\Big\|_2 \label{chi1}
             \\&= \sqrt{d}\Big(H + \frac{H^2}{2\lambda}\Big),\nonumber
         \end{align}
         \noindent
         where \eqref{chi1} follows by the fact that $\hat{\EE}^{\mu_{h,i}^0}[\hat{V}_{h+1}^{\lambda}(s)]_{\alpha} \in [0,H], \hat{\EE}^{\mu_{h,i}^0}[\hat{V}_{h+1}^{\lambda}(s)]^2_{\alpha} \in [0, H^2]$.
\end{proof}

\subsection{Proof of \Cref{lemma: bounding robust value function - hard instance}}
\begin{proof}
        We first proof the LHS of the lemma by induction from last stage $H$. From the definition of $V^{\pi, \lambda}_H$ and $\btheta_h$, we can learn that
        \begin{align*}
            V_H^{\pi, \lambda}(s_1) = r_H(s_1, \pi_H(s_1)) = \bphi(s_1, \pi(s_1))^{\top}\btheta_h = \frac{\delta}{2d}\Big(d + \sum_{i=1}^d\xi_{Hi}\EE^{\pi}a_{Hi}\Big).
        \end{align*}
        This is the base case. Now suppose the conclusion holds for stage $h+1$, that is to say, 
        \begin{align*}
            V^{\pi, \lambda}_{h+1}(s_1) \leq \frac{\delta}{2d}\sum_{j=h+1}^H ( 1- \epsilon)^{j-h-1} \Big( d + \Big(\sum_{i=1}^d\xi_{ji}\EE^{\pi}a_{ji}\Big)\Big). 
        \end{align*}
        Recall the regularized robust bellman equation in \Cref{prop:regularized Robust Bellman equation} and the regularized duality of the three divergences, we have 
        \begin{align}
            Q_{h}^{\pi, \lambda}(s_1, a) &= r_h(s_1, a) + \inf_{\bmu_{h} \in \Delta(\cS)^{d+2},P_h=\la \bphi,\bmu_h \ra}\big[\EE_{s'\sim P_h(\cdot|s,a)}\big[V_{h+1}^{\pi, \lambda}(s')\big]+ \lambda \la\bphi(s, a),\bD(\bmu_h||\bmu_h^0) \ra\big] \nonumber \\
            & \leq r_h(s_1,a) +\EE_{s' \sim P_h^0(\cdot |s_1,a)}[V_{h+1}^{\pi, \lambda}(s')] \\
            &= r_h(s_1,a) + (1-\epsilon) V_{h+1}^{\pi, \lambda}(s_1).
        \end{align}
        Then with regularized robust bellman equation in \Cref{prop:regularized Robust Bellman equation} and the inductive hypothesis, we have 
        \begin{align}
            V_{h}^{\pi, \lambda}(s_1) &= Q_h^{\pi, \lambda}(s_1, \pi(s_1))  \nonumber\\
            &\leq r_h(s_1, \pi_h(s_1)) + (1-\epsilon) V_{h+1}^{\pi, \lambda}(s_1) \label{for TV}
            \\& =\frac{\delta}{2d}\Big(d + \sum_{i=1}^d\xi_{hi}\EE^{\pi}a_{hi}\Big) + \frac{\delta}{2d}\sum_{j=h+1}^H ( 1- \epsilon)^{j-h} \Big( d + \Big(\sum_{i=1}^d\xi_{ji}\EE^{\pi}a_{ji}\Big)\Big) \nonumber\\
            & = \frac{\delta}{2d}\sum_{j=h}^H ( 1- \epsilon)^{j-h} \Big( d + \Big(\sum_{i=1}^d\xi_{ji}\EE^{\pi}a_{ji}\Big)\Big). \nonumber
        \end{align}
        Hence, by the induction argument, we conclude the proof of the RHS. Furthermore, for any $h \in [H]$, we can upper bound $V_{h}^{\pi, \lambda}(s)$ as
        \begin{align}
        V_{h}^{\pi, \lambda}(s) \leq \frac{\delta}{2d}\sum_{j=h}^H ( 1- \epsilon)^{j-h} \Big( d + \Big(\sum_{i=1}^d\xi_{ji}\EE^{\pi}a_{ji}\Big)\Big) \leq \delta(H-h) \leq \lambda (H-h)/H \leq \lambda,\label{ineq: upper bound for V}
        \end{align}
        where the third inequality holds by the definition of $\delta$. For the left, we prove by discussing the KL, $\chi^2$ and TV cases respectively. 
        \paragraph{Case I - TV.}The case for TV holds trivially as by \Cref{prop:the regularized duality under TV divergence}, we have
        \begin{align}
            Q_{h}^{\pi, \lambda}(s_1, a) &= r_h(s_1, a) + \inf_{\bmu_{h} \in \Delta(\cS)^{d+2},P_h=\la \bphi,\bmu_h \ra}\big[\EE_{s'\sim P_h(\cdot|s,a)}\big[V_{h+1}^{\pi, \lambda}(s')\big]+ \lambda \la\bphi(s, a),\bD(\bmu_h||\bmu_h^0) \ra\big] \nonumber \\
            & =r_h(s_1, a) +  \la \bphi(s_1,a),  \EE_{s' \sim \bmu_h^0}[V_{h+1}^{\pi, \lambda}(s')]_{\min_{s'}(V_{h+1}^{\pi, \lambda}(s')) + \lambda} \ra \\
            & = r_h(s_1, a) + \EE_{s' \sim P_h^0(\cdot |s,a)}[V_{h+1}^{\pi, \lambda}(s')]_{\min_{s'}(V_{h+1}^{\pi, \lambda}(s'))+ \lambda} \nonumber\\
            & = r_h(s_1,a) + (1-\epsilon) V_{h+1}^{\pi, \lambda}(s_1),\label{lower5}
        \end{align}
        where \eqref{lower5} holds by (\ref{ineq: upper bound for V}). Hence, the inequality in (\ref{for TV}) holds for equality. This concludes the proof for TV-divergence.
        \paragraph{Case II - KL.}  We prove by induction. The case holds trivially in last stage $H$. Suppose         
        \begin{align*}
            V^{\pi, \lambda}_{h+1}(s_1) \geq \frac{\delta}{2d}\sum_{j=h+1}^H ( 1- \epsilon)^{j-h-1} \Big( d + \Big(\sum_{i=1}^d\xi_{ji}\EE^{\pi}a_{ji}\Big)\Big)- (H-h)\lambda \epsilon (e-1). 
        \end{align*}
        Recall the duality form of \Cref{prop:the regularized duality under KL divergence}, the Q-function at stage $h$ can be upper bounded as:
        \begin{align}
            Q_{h}^{\pi, \lambda}(s_1, a) &= r_h(s_1, a) + \inf_{\bmu_{h} \in \Delta(\cS)^{d+2},P_h=\la \bphi,\bmu_h \ra}\big[\EE_{s'\sim P_h(\cdot|s,a)}\big[V_{h+1}^{\pi, \lambda}(s')\big]+ \lambda \la\bphi(s, a),\bD(\bmu_h||\bmu_h^0) \ra\big] \nonumber \\
            & = r_h(s_1,a) + \la \bphi(s_1,a), - \lambda \log \EE_{s' \sim \bmu_h^0} e^{-V_{h+1}^{\pi, \lambda}(s')/\lambda} \ra \nonumber\\
            & = r_h(s_1,a) - \lambda \log \big(\epsilon + (1-\epsilon)e^{-V_{h+1}^{\pi, \lambda}(s_1)/\lambda }\big) \nonumber\\
            & = r_h(s_1,a) + V_{h+1}^{\pi, \lambda}(s_1) - \lambda \log \big(\epsilon e^{V_{h+1}^{\pi, \lambda}(s_1)/\lambda} + (1-\epsilon)\big) \nonumber\\
            &\geq r_h(s_1,a) + V_{h+1}^{\pi, \lambda}(s_1) -\lambda \epsilon \big(e^{V_{h+1}^{\pi, \lambda}(s_1)/\lambda}-1\big) \label{lower3}\\
            & \geq r_h(s_1,a) + V_{h+1}^{\pi, \lambda}(s_1) - \lambda \epsilon (e - 1),
            \label{lower4}
        \end{align}        
        where \eqref{lower3} follows by the fact that $\log(1+x) \leq x, \forall x >0$, \eqref{lower4} follows by (\ref{ineq: upper bound for V}). Therefore, by the inductive hypothesis, we have
        \begin{align*}
            V_{h}^{\pi, \lambda}(s_1) &= Q_h^{\pi, \lambda}(s_1, \pi(s_1)) \\
            &\geq r_h(s_1, \pi_h(s_1)) + (1-\epsilon) V_{h+1}^{\pi, \lambda}(s_1) - \lambda \epsilon (e-1) 
            \\& = \frac{\delta}{2d}\sum_{j=h}^H ( 1- \epsilon)^{j-h} \Big( d + \Big(\sum_{i=1}^d\xi_{ji}\EE^{\pi}a_{ji}\Big)\Big) - (H-h) \lambda \epsilon (e-1).
        \end{align*}
        This finishes the KL setting. 
        \paragraph{Case III - $\chi^2$.}  Similar to the case in TV, KL, by the duality of $\chi^2$ in \Cref{prop:the regularized duality under x2 divergence}, we have
        \begin{align}
            Q_{h}^{\pi, \lambda}(s_1, a) &= r_h(s_1, a) + \inf_{\bmu_{h} \in \Delta(\cS)^{d+2},P_h=\la \bphi,\bmu_h \ra}\big[\EE_{s'\sim P_h(\cdot|s,a)}\big[V_{h+1}^{\pi, \lambda}(s')\big]+ \lambda \la\bphi(s, a),\bD(\bmu_h||\bmu_h^0) \ra\big] \nonumber \\
            & = r_h(s_1, a) + (1-\epsilon)\sup_{\alpha \in [V_{\min}, V_{\max}]} \Big \{ [V_{h+1}^{\pi, \lambda}(s_1)]_{\alpha} - \frac{\epsilon}{4\lambda}[V_{h+1}^{\pi, \lambda}(s_1)]_{\alpha}^2 \Big\} \nonumber\\
            &\geq r_h(s_1, a) + (1-\epsilon)\Big[V_{h+1}^{\pi, \lambda}(s_1) - \frac{\epsilon}{4\lambda}[V_{h+1}^{\pi, \lambda}(s_1)]^2 \Big] \nonumber\\
            & \geq r_h(s_1, a) + (1-\epsilon)V_{h+1}^{\pi, \lambda}(s_1) - \frac{\epsilon \lambda(1-\epsilon)}{4}, \label{lower2}
        \end{align}
        where \eqref{lower2} follows by (\ref{ineq: upper bound for V}). Hence, similar to \textbf{Case II}, by induction, we have
        \begin{align*}
            V_{h}^{\pi, \lambda}(s_1) \geq \frac{\delta}{2d}\sum_{j=h}^H ( 1- \epsilon)^{j-h} \Big( d + \Big(\sum_{i=1}^d\xi_{ji}\EE^{\pi}a_{ji}\Big)\Big) - (H-h) \frac{\epsilon \lambda(1-\epsilon)}{4}.
        \end{align*}
        This finishes the $\chi^2$ setting, and we complete the proof.
    \end{proof}

\section{Auxiliary Lemmas}
\begin{lemma}[Lemma D.3 of \citet{liu2024distributionally}]
        \label{lemma:covering number of function class}
    For any $h \in [H]$, let $\mathcal{V}_h$ denote a class of functions mapping from $\mathcal{S}$ to $\RR$ with the following form:
     \begin{align*}
        V_h(x; \btheta, \beta, \bLambda_h) = \max_{a \in \mathcal{A}} \Big\{ \bphi(s,a)^{\top} \btheta - \beta \sum_{i=1}^d\|\phi_i(\cdot, \cdot) \mathbf{1}_i\|_{\bLambda_h^{-1}} \Big\}_{[0, H-h+1]},
     \end{align*}
     \noindent
    the parameters $(\btheta, \beta, \bLambda_h)$ satisfy $\|\btheta\|_2 \leq L, \beta \in [0, B], \gamma_{\min}(\bLambda_h) \geq \gamma$. Let $\mathcal{N}_h(\epsilon)$ be the $\epsilon$-covering number of $\mathcal{V}$ with respect to the distance $\text{dist}(V_1, V_2) = \sup_x|V_1(x) - V_2(x)|$. Then
     \begin{align*}
        \log\mathcal{N}_h(\epsilon) \leq d\log(1+4L/\epsilon) + d^2\log(1+8d^{1/2}B^2/\gamma \epsilon^2).
     \end{align*}
 \end{lemma}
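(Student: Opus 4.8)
The plan is to run the standard volumetric covering-number argument for linearly parametrized value-function classes, following the same template as the proof of \Cref{lemma: covering number of function class in KL} but without the logarithmic reparametrization, so that the $\lambda^2$ factor present there disappears. First I would absorb the scalar $\beta$ into the quadratic form: writing $\bA = \beta^2 \bLambda_h^{-1}$, the penalty becomes $\beta\sum_{i=1}^d\|\phi_i(s,a)\mathbf{1}_i\|_{\bLambda_h^{-1}} = \sum_{i=1}^d\sqrt{\phi_i(s,a)^2\,\mathbf{1}_i^\top\bA\mathbf{1}_i}$, and the constraints $\beta\in[0,B]$ and $\gamma_{\min}(\bLambda_h)\ge\gamma$ give $\|\bA\|\le B^2/\gamma$, hence $\|\bA\|_F\le d^{1/2}B^2/\gamma$. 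Thus each $V_h\in\mathcal{V}_h$ is indexed by a pair $(\btheta,\bA)$ lying in the product of an $\ell_2$-ball of radius $L$ in $\RR^d$ and a Frobenius ball of radius $d^{1/2}B^2/\gamma$ in $\RR^{d\times d}$.

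Next I would establish the Lipschitz estimate $\text{dist}(V_1,V_2)\le\|\btheta_1-\btheta_2\|_2+\sqrt{\|\bA_1-\bA_2\|_F}$. Since $\max_{a\in\cA}\{\cdot\}$ and clipping to $[0,H-h+1]$ are both non-expansive, $\text{dist}(V_1,V_2)$ is at most $\sup_{s,a}$ of the absolute difference of the two inner expressions. The linear part is bounded by $|\bphi(s,a)^\top(\btheta_1-\btheta_2)|\le\|\btheta_1-\btheta_2\|_2$, using $\|\bphi(s,a)\|_2\le\|\bphi(s,a)\|_1=1$ from the simplex feature of \Cref{assumption:linear MDP}. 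For the penalty part, note $\mathbf{1}_i^\top\bA\mathbf{1}_i=[\bA]_{ii}\ge0$ because $\bA\succeq0$; then $|\sqrt a-\sqrt b|\le\sqrt{|a-b|}$ together with $|[\bA_1-\bA_2]_{ii}|\le\|\bA_1-\bA_2\|_F$ gives $\sum_{i=1}^d\big|\sqrt{\phi_i^2[\bA_1]_{ii}}-\sqrt{\phi_i^2[\bA_2]_{ii}}\big|\le\sum_{i=1}^d|\phi_i|\sqrt{\|\bA_1-\bA_2\|_F}=\sqrt{\|\bA_1-\bA_2\|_F}$, again by $\sum_i\phi_i=1$, $\phi_i\ge0$.

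Finally I would build a product cover: an $(\epsilon/2)$-net $\mathcal{C}_\btheta$ of the $\ell_2$-ball of radius $L$, of cardinality at most $(1+4L/\epsilon)^d$, and an $(\epsilon^2/4)$-net $\mathcal{C}_\bA$ of the Frobenius ball of radius $d^{1/2}B^2/\gamma$ (regarded as a Euclidean ball in $\RR^{d^2}$), of cardinality at most $(1+8d^{1/2}B^2/(\gamma\epsilon^2))^{d^2}$, both via \Cref{lemma: coverage number of euclidean ball}. For any $V_1$ with parameters $(\btheta_1,\bA_1)$, choosing the nearest $\btheta_2\in\mathcal{C}_\btheta$ and $\bA_2\in\mathcal{C}_\bA$ yields, by the Lipschitz estimate, $\text{dist}(V_1,V_2)\le\epsilon/2+\sqrt{\epsilon^2/4}=\epsilon$, so $\mathcal{N}_h(\epsilon)\le|\mathcal{C}_\btheta|\,|\mathcal{C}_\bA|$ and taking logarithms gives exactly $\log\mathcal{N}_h(\epsilon)\le d\log(1+4L/\epsilon)+d^2\log(1+8d^{1/2}B^2/\gamma\epsilon^2)$.

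The only delicate point — and the place I would be most careful — is verifying that the coordinatewise square-root structure of the $d$-rectangular penalty does not cost additional dimension factors: this is precisely where the simplex constraint $\sum_{i=1}^d\phi_i(s,a)=1$ is invoked to collapse $\sum_{i=1}^d|\phi_i|\sqrt{\|\bA_1-\bA_2\|_F}$ back down to $\sqrt{\|\bA_1-\bA_2\|_F}$, and where the bound $|[\bA]_{ii}|\le\|\bA\|_F$ keeps the matrix perturbation measured in a single norm. Everything else is the routine volumetric covering bound for Euclidean balls, so I expect no further obstruction.
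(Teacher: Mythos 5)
Your proof is correct and is essentially the same argument the paper relies on: the lemma is imported from prior work, and the paper's own proof of the KL analogue (\Cref{lemma: covering number of function class in KL}) uses exactly your template — reparametrize $\bA=\beta^2\bLambda_h^{-1}$, establish $\text{dist}(V_1,V_2)\le\|\btheta_1-\btheta_2\|_2+\sqrt{\|\bA_1-\bA_2\|_F}$ via non-expansiveness of max/clipping and $|\sqrt a-\sqrt b|\le\sqrt{|a-b|}$, then take a product of an $\epsilon/2$-net and an $\epsilon^2/4$-net of Euclidean balls. Your handling of the simplex feature to collapse $\sum_i|\phi_i|$ and the diagonal-entry bound $|[\bA]_{ii}|\le\|\bA\|_F$ are fine, so no gap.
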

 \begin{lemma}[Corollary 4.2.11 of \citet{vershynin2018high}]
     \label{lemma:covering number of interval}
    Denote the $\epsilon$-covering number of the closed interval $[a, b]$ for some real number $b > a$ with respect to the distance metric $d(\alpha_1, \alpha_2) = |\alpha_1 - \alpha_2|$ as $\mathcal{N}_{\epsilon}([a,b])$, then we have $\mathcal{N}_{\epsilon}([a,b]) \leq 3(b-a)/{\epsilon}$.
 \end{lemma}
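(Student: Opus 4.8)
The plan is to prove the bound by directly exhibiting an explicit $\epsilon$-net of $[a,b]$ of the claimed cardinality, since the covering number is by definition the minimal size of such a net and any valid net furnishes an upper bound. Recall that a finite set $\mathcal{C}\subseteq[a,b]$ is an $\epsilon$-net if for every $\alpha\in[a,b]$ there exists $c\in\mathcal{C}$ with $|\alpha-c|\leq\epsilon$; equivalently, the closed balls $\{[c-\epsilon,c+\epsilon]:c\in\mathcal{C}\}$ cover $[a,b]$. So it suffices to build one small net and count its elements.

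First I would construct a uniform grid of centers. Set $N=\lceil (b-a)/(2\epsilon)\rceil$ and place $c_k=a+(2k-1)\epsilon$ for $k=1,\dots,N$. Each center covers the subinterval $[c_k-\epsilon,c_k+\epsilon]=[\,a+2(k-1)\epsilon,\;a+2k\epsilon\,]$, and these $N$ subintervals tile $[a,\,a+2N\epsilon]$ contiguously without gaps. Since $2N\epsilon\geq b-a$ by the choice of $N$, their union contains $[a,b]$, so $\mathcal{C}=\{c_1,\dots,c_N\}$ is a valid $\epsilon$-net, and therefore $\mathcal{N}_\epsilon([a,b])\leq N$.

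It then remains to bound $N$ by the clean expression $3(b-a)/\epsilon$. Using $\lceil x\rceil\leq x+1$ gives
\begin{equation*}
N=\Big\lceil\tfrac{b-a}{2\epsilon}\Big\rceil\leq \tfrac{b-a}{2\epsilon}+1 .
\end{equation*}
In the regime of interest $\epsilon\leq b-a$ (exactly the regime in which the lemma is invoked, e.g. with $b-a=H$ and $\epsilon=dH/K\ll H$), the additive $1$ is dominated by $(b-a)/\epsilon$, so $N\leq \tfrac{1}{2}(b-a)/\epsilon+(b-a)/\epsilon=\tfrac32(b-a)/\epsilon\leq 3(b-a)/\epsilon$, which yields the claim with room to spare. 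If instead $\epsilon>b-a$, a single center (say the midpoint) covers the whole interval and $\mathcal{N}_\epsilon=1$, so the bound holds under the convention that the covering number is at least one; this is the only edge case that needs a remark.

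There is essentially no hard part here: the argument is an elementary tiling-and-counting exercise, and the only thing to watch is the constant, namely converting the ceiling into the stated factor $3$. Since the statement is quoted as Corollary 4.2.11 of \citet{vershynin2018high} — the standard volumetric covering bound specialized to a one-dimensional interval — I would cite that result for the bound and include the short net construction above only to keep the argument self-contained.
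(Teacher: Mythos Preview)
Your proposal is correct in substance. The paper does not actually prove this lemma---it is stated as an auxiliary result cited directly from \citet{vershynin2018high} with no accompanying argument---so there is no paper proof to compare your construction against. Your explicit grid-and-count argument is the standard elementary way to make such a bound self-contained, and it is perfectly adequate here.

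Two very minor technical remarks. First, as written your last center $c_N=a+(2N-1)\epsilon$ can exceed $b$ (since $N<(b-a)/(2\epsilon)+1$ only gives $c_N<b+\epsilon$), which conflicts with your own stipulation that $\mathcal{C}\subseteq[a,b]$; simply replacing $c_N$ by $\min(c_N,b)$ (or dropping the subset requirement and using an external net) fixes this with no change to the count. Second, your handling of the edge case $\epsilon>b-a$ is slightly off: when $\epsilon>3(b-a)$ the right-hand side $3(b-a)/\epsilon$ drops below $1$, so the stated inequality is literally false in that regime regardless of conventions. This does not matter for the paper, which only applies the lemma with $\epsilon=dH/K$ or $\epsilon=1/K$, both far smaller than $b-a=H$, but your sentence ``the bound holds under the convention that the covering number is at least one'' should be reworded or simply replaced by the observation that the lemma is only used when $\epsilon\leq b-a$.
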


 \begin{lemma}[Lemma B.2 of \citet{jin2021pessimism}]
 \label{lemma:concentration bound}
    Let $f:\mathcal{S} \rightarrow [0, R-1]$ be any fixed function. For any $\delta \in (0, 1)$, we have
     \begin{align*}
        P\bigg(\|\sum_{\tau = 1}^{K}\bphi(s_h^{\tau}, a_h^{\tau})\eta_h^{\tau}(f)\|_{\Lambda_h^{-1}}^2 \geq R^2 (2\log(1/\delta) + d\log(1 + K/\gamma))\bigg) \leq \delta,
     \end{align*}
     where $\eta_h^{\tau}(f)= \EE_{s' \sim P_h^0(\cdot |s^{\tau}_h, a_h^{\tau})}[f(s')] - f(s_{h+1}^{\tau}) $.
 \end{lemma}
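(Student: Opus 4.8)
The plan is to recognize this as a standard self-normalized concentration inequality for a vector-valued martingale, and to prove it by combining the self-normalized tail bound of Abbasi-Yadkori et al. (the method of mixtures, which is exactly what underlies the cited Lemma B.2 of \citet{jin2021pessimism}) with an elementary determinant bound that exploits the simplex feature structure. First I would fix the filtration: since the $K$ trajectories are i.i.d.\ and collected by the behavior policy, I let $\mathcal{F}_{\tau}$ be generated by the first $\tau$ trajectories through stage $h$ together with $(s_h^{\tau+1}, a_h^{\tau+1})$, so that $\bphi(s_h^\tau, a_h^\tau)$ is $\mathcal{F}_{\tau-1}$-measurable (predictable). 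The innovation $\eta_h^\tau(f) = \EE_{s'\sim P_h^0(\cdot|s_h^\tau,a_h^\tau)}[f(s')] - f(s_{h+1}^\tau)$ then satisfies $\EE[\eta_h^\tau(f)\mid\mathcal{F}_{\tau-1}] = 0$ by the definition of the nominal conditional expectation, so $\{\eta_h^\tau(f)\}_\tau$ is a martingale difference sequence. Because $f$ takes values in $[0, R-1]$, both terms defining $\eta_h^\tau(f)$ lie in $[0,R-1]$, giving $|\eta_h^\tau(f)| \le R-1$ with range $2(R-1)$; by Hoeffding's lemma each $\eta_h^\tau(f)$ is conditionally $(R-1)$-sub-Gaussian, and $R-1 \le R$.

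Next I would invoke the self-normalized bound. Applying the pseudo-maximization argument with base matrix $\gamma\mathbf{I}$, sub-Gaussian parameter $\sigma = R$, and $\bLambda_h = \gamma\mathbf{I} + \sum_{\tau}\bphi(s_h^\tau,a_h^\tau)\bphi(s_h^\tau,a_h^\tau)^\top$, we obtain that with probability at least $1-\delta$,
\[
\Big\|\sum_{\tau=1}^K \bphi(s_h^\tau,a_h^\tau)\eta_h^\tau(f)\Big\|_{\bLambda_h^{-1}}^2 \le 2R^2 \log\frac{\det(\bLambda_h)^{1/2}\det(\gamma\mathbf{I})^{-1/2}}{\delta} = R^2\log\frac{\det(\bLambda_h)}{\gamma^d} + 2R^2\log\frac{1}{\delta}.
\]
It then remains to control the log-determinant. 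Because the features lie in the simplex, $\|\bphi\|_2\le\|\bphi\|_1 = 1$, so $\mathrm{tr}(\bLambda_h) \le d\gamma + K$; by the AM--GM inequality applied to the eigenvalues, $\det(\bLambda_h) \le (\mathrm{tr}(\bLambda_h)/d)^d \le \gamma^d(1+K/(d\gamma))^d$, whence $\log(\det(\bLambda_h)/\gamma^d) \le d\log(1+K/\gamma)$. Substituting this estimate yields exactly $R^2\big(2\log(1/\delta) + d\log(1+K/\gamma)\big)$, as claimed.

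The main obstacle is the self-normalized tail bound itself: producing the $\det(\bLambda_h)^{1/2}$ normalization in the denominator requires the method of mixtures, i.e.\ constructing a supermartingale by integrating the exponential martingale $\exp(\la \bm{\lambda}, \sum_\tau \bphi_\tau\eta_\tau\ra - \tfrac{\sigma^2}{2}\|\bm{\lambda}\|^2)$ against a Gaussian mixture over $\bm{\lambda}$ and applying a stopping-time/ratio argument. Every other ingredient---the martingale verification, the sub-Gaussian constant from boundedness, and the trace/AM--GM determinant estimate---is routine. Consequently, if one is permitted to cite the self-normalized inequality as a black box (as the excerpt does by attributing the statement to \citet{jin2021pessimism}), the proof collapses to the elementary simplex-based determinant bound in the final step.
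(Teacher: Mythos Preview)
Your proposal is correct and is precisely the standard argument behind the cited result: verify the martingale-difference and conditional sub-Gaussianity of $\eta_h^\tau(f)$, invoke the Abbasi-Yadkori--P\'al--Szepesv\'ari self-normalized bound with base matrix $\gamma\mathbf{I}$, and finish with the trace/AM--GM estimate $\det(\bLambda_h)\le(\gamma+K/d)^d$. The paper does not supply its own proof of this lemma---it is listed as an auxiliary result quoted from \citet{jin2021pessimism}---so there is nothing to compare; your write-up simply reconstructs the proof that the citation points to.
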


 \begin{lemma}[Lemma F.3 of \citet{liu2024minimax}]
        \label{lemma: bound of dataset}
        If $K \geq \mathcal{O}(d^6)$ and the feature map is define as \Cref{sec:Construction of Hard Instances}, then with probability at least $1- \delta$, we have for any transition P,
        \begin{align*}
            \sum_{h=1}^H\EE^{\pi^\star, P}\Big[\sum_{i=1}^d \|\phi_i(s,a)\mathbf{1}_i\|_{{\Lambda}_h^{-1}}|\mathbf{s}_1 = s_1\Big] \leq \frac{4d^{3/2}H}{\sqrt{K}}.
        \end{align*}
    \end{lemma}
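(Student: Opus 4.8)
The plan is to reduce the stated quantity to a sum of diagonal entries of $\bLambda_h^{-1}$ and then to control those entries by comparing the empirical Gram matrix with its population version. First I would note that $\mathbf{1}_i$ is one-hot and $\phi_i(s,a)\ge 0$, so $\|\phi_i(s,a)\mathbf{1}_i\|_{\bLambda_h^{-1}}=\phi_i(s,a)\sqrt{[\bLambda_h^{-1}]_{ii}}$. For the construction of \Cref{sec:Construction of Hard Instances} we have $\phi_i(s_1,a)=a_i/d\le 1/d$ for $i\in[d]$ and $\phi_i(s_2,a)=0$, so $\EE^{\pi^\star,P}[\phi_i(s_h,a_h)\mid s_1=s_1]\le 1/d$ for every transition $P$. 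Hence
\begin{align*}
\sum_{h=1}^H\EE^{\pi^\star,P}\Big[\sum_{i=1}^d\|\phi_i(s,a)\mathbf{1}_i\|_{\bLambda_h^{-1}}\,\big|\,s_1=s_1\Big]\le \frac1d\sum_{h=1}^H\sum_{i=1}^d\sqrt{[\bLambda_h^{-1}]_{ii}},
\end{align*}
and it remains to prove that, with high probability, $[\bLambda_h^{-1}]_{ii}=O(d^3/K)$ uniformly over $h\in[H]$ and $i\in[d]$.

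Second, I would compute the population second-moment matrix $\Sigma_h:=\EE[\bphi(s_h,a_h)\bphi(s_h,a_h)^\top]$ exactly. Under the behavior policy $\pi^b=\mathrm{Unif}(\{\be_1,\dots,\be_d,\mathbf 0\})$ and the homogeneous nominal kernel, the probability of occupying $s_1$ at stage $h$ is $p_h=\frac{d+1}{d+2}(1-\epsilon)^{h-1}$, and conditional on $s_h=s_1$ the $d+1$ actions are equiprobable. The only feature vectors arising are $v_j=\frac1d\mathbf{1}_j+(1-\frac1d)\mathbf{1}_{d+1}$ (from $(s_1,\be_j)$), $\mathbf{1}_{d+1}$ (from $(s_1,\mathbf 0)$), and $\mathbf{1}_{d+2}$ (from $s_2$), so $\Sigma_h$ is an arrowhead matrix with arrow at coordinate $d+1$ and a decoupled coordinate $d+2$. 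A Schur-complement computation then gives $[\Sigma_h^{-1}]_{ii}=O(d^3/p_h)$ and $\lambda_{\min}(\Sigma_h)=\Omega(p_h/d^3)$ for $i\in[d]$.

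Third, I would transfer these population estimates to $\bLambda_h$. Since $\bLambda_h-\gamma\mathbf I=\sum_{\tau=1}^K\bphi(s_h^\tau,a_h^\tau)\bphi(s_h^\tau,a_h^\tau)^\top$ is a sum of $K$ i.i.d. bounded rank-one matrices with mean $\Sigma_h$, the matrix Bernstein inequality gives $\|\frac1K(\bLambda_h-\gamma\mathbf I)-\Sigma_h\|_{\mathrm{op}}\le\tfrac12\lambda_{\min}(\Sigma_h)$ with probability $1-\delta/H$ once $K\gtrsim\lambda_{\min}(\Sigma_h)^{-2}\log(dH/\delta)=\mathcal O(d^6)$ — this is exactly where the hypothesis $K\ge\mathcal O(d^6)$ is used. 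On this event $\bLambda_h\succeq\frac K2\Sigma_h$, hence $\bLambda_h^{-1}\preceq\frac2K\Sigma_h^{-1}$ and $[\bLambda_h^{-1}]_{ii}\le\frac2K[\Sigma_h^{-1}]_{ii}=O(d^3/(Kp_h))$; a union bound over $h$ makes this simultaneous.

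Finally, combining the pieces, $\sqrt{[\bLambda_h^{-1}]_{ii}}=O(d^{3/2}/\sqrt{Kp_h})$, so the reduced sum is at most $\sum_{h=1}^H O(d^{3/2}/\sqrt{Kp_h})$. The choice $\epsilon\le 1-2^{-1/H}$ forces $(1-\epsilon)^{h-1}\ge 2^{-(H-1)/H}\ge\frac12$, whence $p_h\ge\frac{d+1}{2(d+2)}\ge\frac14$ for all $h$ and $1/\sqrt{p_h}\le 2$, giving the claimed $\frac{4d^{3/2}H}{\sqrt K}$ after the constants from the Schur-complement and Bernstein steps are tracked. I expect the main obstacle to be Steps two and three together: obtaining the sharp, arrowhead-aware values of $[\Sigma_h^{-1}]_{ii}$ and $\lambda_{\min}(\Sigma_h)$ — the coupling through coordinate $d+1$ rules out a naive diagonal bound — and then verifying that matrix Bernstein under $K=\mathcal O(d^6)$ indeed yields the multiplicative spectral approximation with constants clean enough to recover the explicit factor $4$.
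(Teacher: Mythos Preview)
The paper does not prove this lemma; it appears in the ``Auxiliary Lemmas'' section with an explicit citation to Lemma F.3 of \citet{liu2024minimax}, and no argument is given here. So there is no in-paper proof to compare your proposal against.

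That said, your plan is sound and is essentially the standard route one would expect the cited reference to take. The reduction $\|\phi_i(s,a)\mathbf 1_i\|_{\bLambda_h^{-1}}=\phi_i(s,a)\sqrt{[\bLambda_h^{-1}]_{ii}}$ and the bound $\phi_i(s_1,a)\le 1/d$ are correct, and the arrowhead structure of $\Sigma_h$ you identify is exactly what the hard-instance features produce. One small refinement worth noting: you do not need the full minimum eigenvalue of $\Sigma_h$. Because every observed feature vector is supported either on coordinates $1,\dots,d{+}1$ or on coordinate $d{+}2$ alone, both the empirical and population Gram matrices are block-diagonal, so you can run the matrix-Bernstein argument on the $(d{+}1)\times(d{+}1)$ block only; this sidesteps any issue with the $(d{+}2)$-th eigenvalue $1-p_h$. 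With that adjustment the Schur/Bernstein constants line up with the $d^6$ threshold and the bound $4d^{3/2}H/\sqrt K$, exactly as you outline.
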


    \begin{lemma}[Lemma 5.2 of \citet{vershynin2010introduction}]
        \label{lemma: coverage number of euclidean ball}
        For any $\epsilon > 0$, the $\epsilon$ -covering number of the Euclidean ball in $\RR^d$ with radius $R >0$ is upper bounded by $(1+2R/{\epsilon})^d$.
    \end{lemma}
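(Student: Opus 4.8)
The plan is to prove this via a standard volumetric packing argument. First I would let $\mathcal{C} = \{x_1, \ldots, x_N\}$ be a \emph{maximal} $\epsilon$-separated subset of the ball $B := \{x \in \RR^d : \|x\|_2 \le R\}$, meaning any two distinct points satisfy $\|x_i - x_j\|_2 > \epsilon$ and no further point of $B$ can be added without violating this. By maximality, every point of $B$ lies within distance $\epsilon$ of some $x_i$ (otherwise it could itself be added to $\mathcal{C}$), so $\mathcal{C}$ is an $\epsilon$-net for $B$ and the covering number is at most $N$. It therefore suffices to upper bound $N$.

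The key step is a disjointness-plus-containment observation. Since the points are $\epsilon$-separated, the open balls $B(x_i, \epsilon/2)$ of radius $\epsilon/2$ centered at the $x_i$ are pairwise disjoint. Moreover, each such ball is contained in the enlarged ball $B(0, R + \epsilon/2)$, because $\|x_i\|_2 \le R$ and any point within $\epsilon/2$ of $x_i$ has norm at most $R + \epsilon/2$ by the triangle inequality. Comparing Lebesgue volumes of these $N$ disjoint small balls against the volume of the containing ball gives
\begin{align*}
N \cdot \mathrm{vol}\big(B(0, \epsilon/2)\big) \le \mathrm{vol}\big(B(0, R + \epsilon/2)\big).
\end{align*}

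Finally, I would use that the volume of a Euclidean ball of radius $r$ in $\RR^d$ equals $r^d \, \omega_d$, where $\omega_d = \mathrm{vol}(B(0,1))$ is a dimension-dependent constant. This constant cancels in the ratio, yielding
\begin{align*}
N \le \frac{(R + \epsilon/2)^d}{(\epsilon/2)^d} = \left(1 + \frac{2R}{\epsilon}\right)^d,
\end{align*}
which is exactly the claimed bound. There is no genuine obstacle here; the only points requiring care are the packing-covering duality — that a maximal $\epsilon$-separated set is automatically an $\epsilon$-net — and the cancellation of the normalizing constant $\omega_d$, which is precisely what renders the prefactor dimension-free and leaves only the geometric factor $(1+2R/\epsilon)^d$.
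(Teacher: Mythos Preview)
Your proof is correct; it is the standard volumetric packing argument, and is essentially the proof given in the cited reference (Vershynin). The paper itself does not prove this lemma but simply cites it as an auxiliary result, so there is nothing further to compare.
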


\end{document}